%%%%%   %%%%%   %
%               %
%  Journal-style preamble/template
%               %
%%%%%   %%%%%   %

\documentclass[letterpaper,11pt,twoside,leqno]{article}

%
% Page geometry and typography
%

\usepackage{cmap}
\usepackage[T1]{fontenc}
\usepackage[utf8]{inputenc}
\usepackage[letterpaper,textwidth=5.25in,textheight=8.85in,centering,headheight=13pt,headsep=18pt,footskip=30pt]{geometry}

\usepackage{lmodern}
\usepackage{libertine}

\usepackage{mathrsfs}
\usepackage{amssymb}
\usepackage{amsmath}
\usepackage{amsthm}
\usepackage{microtype}

\setlength{\parindent}{18pt}
\setlength{\parskip}{0pt}
\linespread{1.0}
\frenchspacing
\emergencystretch=1.5em
\clubpenalty=10000
\widowpenalty=10000
\displaywidowpenalty=10000

%
% Colors, graphics, mathematics
%

\usepackage[usenames,dvipsnames]{xcolor}
\definecolor{myblue}{rgb}{0.21, 0.34, 0.74}
\definecolor{mygrey}{rgb}{0.55, 0.57, 0.67}
\definecolor{myred}{rgb}{0.79, 0.0, 0.09}

\definecolor{gblue}{HTML}{418BC0}
\definecolor{gpink}{HTML}{F17EA7}
\definecolor{ggreen}{HTML}{BDE0D2}
\definecolor{gpeach}{HTML}{F3CCB7}

\usepackage{graphicx}
\usepackage{enumitem}
\usepackage{bbm}
\usepackage{mathrsfs}
\usepackage{amsmath}
\usepackage{upgreek}
\usepackage{mathtools}
\usepackage{tabularx}
\usepackage{booktabs}
\usepackage{bm}
\usepackage[scr=boondoxo]{mathalfa}
\usepackage{stmaryrd}

\usepackage{comment}
\usepackage{float}
\usepackage{subcaption}
\usepackage[font=small,labelfont=bf,labelsep=period]{caption}

\numberwithin{figure}{section}
\numberwithin{table}{section}

% --- tikz figure for the $60^\circ$ torus and hex Voronoi cells
\usepackage{tikz}
\usetikzlibrary{calc}

\usepackage[nottoc,notlot,notlof]{tocbibind}
\usepackage{cite}
\usepackage[hidelinks,bookmarksopen=true]{hyperref}
\usepackage[noabbrev,capitalize,nameinlink]{cleveref}

%
% Running heads and first-page footer
%

\usepackage{fancyhdr}

\newcommand{\CPAMshorttitle}{PERCEPTRONS AND LOCALIZATION OF ATTENTION'S LANDSCAPE}
\newcommand{\CPAMshortauthors}{A. \'{A}LVAREZ-L\'OPEZ, B. GESHKOVSKI AND D. RUIZ-BALET}

\pagestyle{fancy}
\fancyhf{}
\fancyhead[LE]{\footnotesize\thepage}
\fancyhead[CE]{\footnotesize\CPAMshortauthors}
\fancyhead[CO]{\footnotesize\CPAMshorttitle}
\fancyhead[RO]{\footnotesize\thepage}

\fancypagestyle{cpamfirst}{%
  \fancyhf{}%
  \fancyfoot[L]{\parbox[t]{\textwidth}{\footnotesize}}
}

%
% Section titles, theorem styles, captions
%

\usepackage{titlesec}
\titleformat{\section}
  {\centering\normalfont\large\bfseries}{\thesection}{0.75em}{}
\titlespacing*{\section}{0pt}{2.5ex plus 0.8ex minus 0.2ex}{1.3ex plus 0.2ex}

\titleformat{\subsection}
  {\normalfont\bfseries}{\thesubsection}{0.75em}{}
\titlespacing*{\subsection}{0pt}{2.0ex plus 0.6ex minus 0.2ex}{0.8ex plus 0.1ex}

\titleformat{\subsubsection}
  {\normalfont\bfseries}{\thesubsubsection}{0.75em}{}
\titleformat{name=\subsubsection,numberless}
  {\normalfont\bfseries}{}{0pt}{}
\titlespacing*{\subsubsection}{0pt}{1.7ex plus 0.5ex minus 0.2ex}{0.6ex plus 0.1ex}

\titleformat{\paragraph}[runin]
  {\normalfont\bfseries}{}{0pt}{}[.]
\titlespacing*{\paragraph}{0pt}{1.2ex plus 0.4ex minus 0.2ex}{0.5em}

\newtheoremstyle{cpamplain}%
  {8pt plus 2pt minus 2pt}% space above
  {8pt plus 2pt minus 2pt}% space below
  {\itshape}% body font
  {}% indent
  {\normalfont\bfseries}% head font
  {.}% punctuation
  {0.5em}% after head
  {\MakeUppercase{\thmname{#1}}\ \thmnumber{#2}\thmnote{ {\normalfont(#3)}}}%
\newtheoremstyle{cpamdefinition}%
  {8pt plus 2pt minus 2pt}% space above
  {8pt plus 2pt minus 2pt}% space below
  {\normalfont}% body font
  {}% indent
  {\normalfont\bfseries}% head font
  {.}% punctuation
  {0.5em}% after head
  {\MakeUppercase{\thmname{#1}}\ \thmnumber{#2}\thmnote{ {\normalfont(#3)}}}%

\theoremstyle{cpamplain}
\newtheorem{theorem}{Theorem}[section]
\newtheorem{proposition}[theorem]{Proposition}
\newtheorem{lemma}[theorem]{Lemma}
\newtheorem{corollary}[theorem]{Corollary}

\theoremstyle{cpamdefinition}

\newtheorem{remark}[theorem]{Remark}

\crefname{theorem}{Theorem}{Theorems}
\crefname{proposition}{Proposition}{Propositions}
\crefname{lemma}{Lemma}{Lemmas}
\crefname{corollary}{Corollary}{Corollaries}
\crefname{definition}{Definition}{Definitions}
\crefname{remark}{Remark}{Remarks}
\crefname{example}{Example}{Examples}
\crefname{problem}{Problem}{Problems}

%
% Title block and abstract
%

\makeatletter
\renewcommand{\maketitle}{%
  \thispagestyle{cpamfirst}%
  \begingroup
  \centering
  \vspace*{0.56in}%
  {\Large\bfseries\@title\par}%
  \vspace{0.35in}%
  {\normalsize\@author\par}%
  \vspace{0.28in}%
  \endgroup
}
\makeatother

\newcommand{\CPAMauthor}[1]{{\normalsize #1}}
\newcommand{\CPAMaffiliation}[1]{{\itshape\small #1}}
\newcommand{\CPAMand}{{\normalsize AND}}

\renewenvironment{abstract}{%
  \begin{center}\bfseries Abstract\end{center}%
  \small
  \begin{list}{}{\leftmargin=0.55in\rightmargin=0.55in}%
  \item\relax
}{%
  \end{list}\normalsize
}

\interfootnotelinepenalty=10000

%
% Command definitions
%

\newcommand{\R}{\mathbb{R}}
\renewcommand{\S}{\mathbb{S}}

\newcommand{\proj}{\boldsymbol{\mathsf{P}}^\perp}

\newcommand{\PP}{\mathcal{P}}

\newcommand{\Grad}{\nabla\!\!\!\!\nabla}

\DeclareMathOperator*{\argmax}{arg\,max}

\newcommand{\supp}{\operatorname{supp}}
\newcommand{\diff}{\,\mathrm{d}}

% Use upright Greek throughout.
\let\alpha\upalpha
\let\beta\upbeta
\let\gamma\upgamma
\let\delta\updelta
\let\epsilon\upepsilon
\let\varepsilon\upvarepsilon
\let\zeta\upzeta
\let\eta\upeta
\let\theta\uptheta
\let\vartheta\upvartheta
\let\iota\upiota
\let\kappa\upkappa
\let\lambda\uplambda
\let\mu\upmu
\let\nu\upnu
\let\xi\upxi
\let\pi\uppi
\let\varpi\upvarpi
\let\rho\uprho
\let\varrho\upvarrho
\let\sigma\upsigma
\let\varsigma\upvarsigma
\let\tau\uptau
\let\upsilon\upupsilon
\let\phi\upphi
\let\varphi\upvarphi
\let\chi\upchi
\let\psi\uppsi
\let\omega\upomega
\let\Gamma\Upgamma
\let\Delta\Updelta
\let\Theta\Uptheta
\let\Lambda\Uplambda
\let\Xi\Upxi
\let\Pi\Uppi
\let\Sigma\Upsigma
\let\Upsilon\Upupsilon
\let\Phi\Upphi
\let\Psi\Uppsi
\let\Omega\Upomega

\renewcommand{\leq}{\leqslant}
\renewcommand{\geq}{\geqslant}
\renewcommand{\le}{\leqslant}
\renewcommand{\ge}{\geqslant}

\numberwithin{equation}{section}

%
% Meta-data
%

\title{Perceptrons and localization of attention's mean-field landscape}

\author{%
\CPAMauthor{ANTONIO \'{A}LVAREZ-L\'OPEZ}\\[-1pt]
\CPAMaffiliation{Universidad Aut\'onoma de Madrid}\\[0.7em]
\CPAMand\\[0.7em]
\CPAMauthor{BORJAN GESHKOVSKI}\\[-1pt]
\CPAMaffiliation{Laboratoire Jacques-Louis Lions\\Inria \& Sorbonne Universit\'e}\\[0.7em]
\CPAMand\\[0.7em]
\CPAMauthor{DOM\`ENEC RUIZ-BALET}\\[-1pt]
\CPAMaffiliation{Universitat de Barcelona}%
}

\date{}

%---- Content ----%

\begin{document}
\setlist[itemize,enumerate]{leftmargin=2em,itemsep=0pt,parsep=0pt}

%
% Title
%

\maketitle

%
% Abstract
%

\begin{abstract}
The forward pass of a Transformer can be seen as an interacting particle system on the unit sphere: time plays the role of layers, particles that of token embeddings, and the unit sphere idealizes layer normalization. In some weight settings the system can even be seen as a gradient flow for an explicit energy, and one can  make sense of the infinite context length (\emph{mean-field}) limit thanks to Wasserstein gradient flows. In this paper we study the effect of the perceptron block in this setting, and show that critical points are generically atomic and localized on subsets of the sphere.
\end{abstract}

\thispagestyle{cpamfirst}

\setcounter{tocdepth}{2}
%\tableofcontents

\section{Introduction}

The distinctive operation of Transformers \cite{VSP17} is self-attention, which updates each token embedding by aggregating information from the others through a particular trainable weighted average. This mechanism is composed across depth with residual connections, layer normalization, and position-wise feed-forward blocks (typically two-layer perceptrons), producing highly structured yet poorly understood representation dynamics.

For studying signal propagation, it is useful to cast the dynamics of token representations as an interacting particle system \cite{geshkovski2025mathematical, sander2022sinkformers}. Motivated by the perspective of treating layer depth as a continuous time variable \cite{CRBD18} and by the fact that common normalization schemes (e.g., RMSNorm) keep embeddings on a compact manifold, one idealizes token embeddings as particles $x_i(t)$ evolving on the unit sphere $\S^{d-1}$ \cite{geshkovski2025mathematical}. 
Self-attention then becomes a state-dependent coupling: each particle moves toward a weighted average of the others, with weights given by a softmax kernel of pairwise inner products.  This interacting-particle viewpoint---which we adopt throughout---has gained traction in the past few years. It links Transformers to nonlinear consensus and collective-dynamics models~\cite{LLH20,DGCC21,geshkovski2023emergence}, and it also interfaces naturally with statistical mechanics treatments~\cite{cowsik2025geometric,giorlandino2025two,tiberi2024attentionpaths, cui2025phase}. While most of these works study idealized models, we emphasize that scaling laws derived by \cite{cowsik2025geometric}---subsequently studied in \cite{chen2025critical, bruno2026scalinglimitslongcontexttransformers, giorlandino2025two}---have been used in training large language models (OLMo2 7B \& 13B, \cite{olmo20242}). 

One object of interest in modern applications is very large context lengths. This is precisely where the \emph{mean-field limit} of the interacting particle system becomes relevant: as the number of tokens $n$ grows, the empirical measure of particles converges to a measure solving a nonlinear  partial differential equation on the sphere whose velocity field is the attention interaction. One can justify this mean-field limit by classical arguments  \cite{geshkovski2025mathematical}. 
A convenient advantage of the particle perspective is the transparent variational structure: the finite-$n$ particle dynamics can be written as a preconditioned (or weighted-metric) gradient flow of an interaction energy, yielding gradient \emph{descent} when the key-query-value-induced quadratic form is positive semidefinite and gradient \emph{ascent} otherwise \cite{geshkovski2025mathematical}. 
This gradient flow structure persists in the mean-field limit, where the limiting PDE is a (weighted) Wasserstein gradient flow \cite{jordan1998,otto2001}. This framework naturally connects to the study of equilibrium measures for nonlocal interaction energies, including the structural properties of their minimizers \cite{CanizoCarrilloPatacchini2015,CarrilloFigalliPatacchini2017}.

Exploiting this structure has already enabled a rigorous analysis of the \emph{ascent} (attractive) regime, where attention concentrates and synchronized clusters emerge, as well as of the \emph{descent} (repulsive) regime, where particles tend to equi-distribute uniformly~\cite{geshkovski2023emergence,geshkovski2025mathematical,chen2025quantitative,criscitiello2024synchronization,polyanskiy2025synchronization, castin2025unifiedperspectivedynamicsdeep, yu2024white, alcalde2025clustering}. These results have been extended to settings with very general trained weights~\cite{abella2025, burger2025, koubbi2024impact}, to decoder-only architectures with causal masking \cite{2024NikitaClustering, duerinckx2026kinetictheorytransformerslostinthemiddle, barbero2025llms}, to finer landscape phenomena such as metastability~\cite{geshkovski2024dynamic,bruno2025emergence,bruno2025multiscale, alcalde2025attention, altafini2025multistability}, to diffusive variants \cite{gerber2025formation, balasubramanian2025stationary, shalova2024solutions, peletier2025nonlinear}, and to stochastic scaling limits arising from random weights \cite{fedorov2026clustering, koubbi2026homogenized, agazzi2026stochastic}. 

A key architectural ingredient is still largely absent from the above variational mean-field picture: the feed-forward perceptron block. We emphasize that attention-only mean-field limits admit non-atomic stationary densities in repulsive regimes---indeed, the uniform distribution on the sphere is the global minimizer of the resulting energy! Practical evidence also suggests that perceptrons qualitatively change the geometry by counteracting attention-driven collapse, inducing an order--chaos transition \cite{cowsik2025geometric}. From the PDE viewpoint, the perceptron appears as an external drift, and we argue that it indeed reshapes the landscape, but not necessarily countering the above-cited studies: even when pure attention permits continuous equilibria, the stationary states of the coupled dynamics are generically discrete or singular.

%even when pure attention permits continuous equilibria, the coupled dynamics are driven toward discrete (singular) stationary states.

\subsection*{Our contributions}

We study mean-field attention as a Wasserstein gradient flow for an energy that couples (unnormalized) attention interactions with a perceptron-induced potential. Our main contributions are:

\begin{itemize}

\item For ReLU perceptrons in $d=2$, every stationary measure has finite support ({\bf\Cref{thm: circle}}); for analytic activations (e.g., GeLU), the same holds for ``stable stationary measures'' ({\bf\Cref{thm: circle.gelu}}). In $d\ge 2$, stationary measures are necessarily singular, and for a dense open set of parameters they are purely atomic ({\bf\Cref{thm: any.d}}). %Extensions to non-isotropic or normalized attention are discussed.

\item In the descent (thus repulsive) regime, although the perceptron induces discreteness, we show an anti-concentration phenomenon: the mass of any cluster at the interaction scale $1/\sqrt{\beta}$ is bounded by a numerical constant ({\bf \Cref{thm: bound}}). Consequently, the limit measure (provided it exists) is atomic but cannot collapse to a single atom, in contrast to the attractive regime. We also show that the number of heavy atoms scales with $\sqrt{\beta}$ ({\bf\Cref{cor: bound}}).

\begin{figure}[!h]
    \centering

    % --- Top row: with MLP (blue) ---
    \begin{subfigure}[b]{0.32\linewidth}
        \centering
        \includegraphics[width=\linewidth]{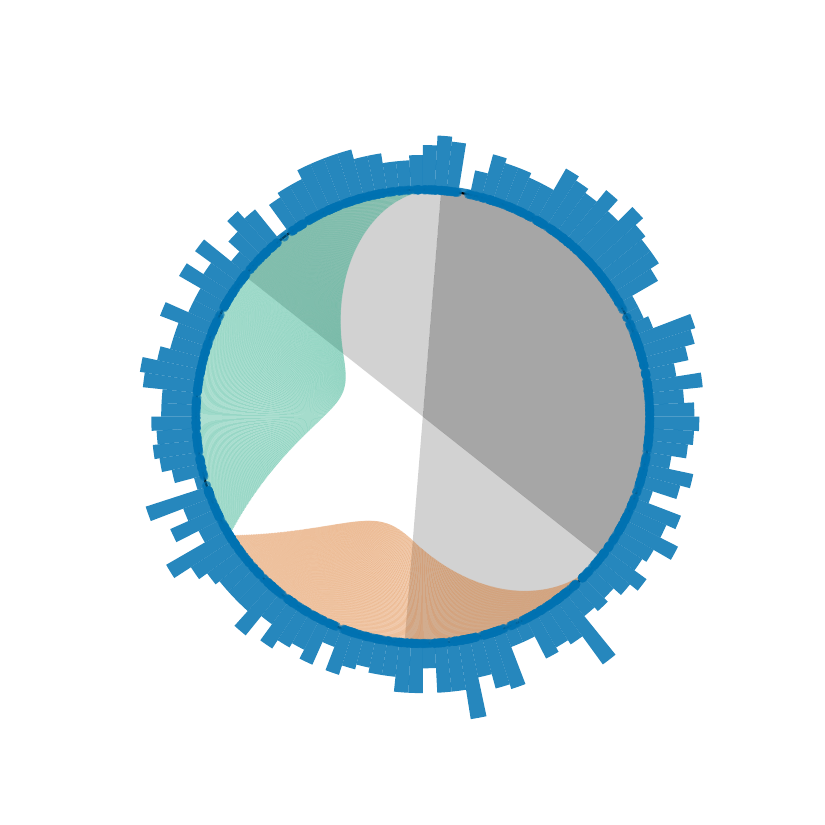}
    \end{subfigure}
    %\hspace{0.1cm}
    \begin{subfigure}[b]{0.32\linewidth}
        \centering
        \includegraphics[width=\linewidth]{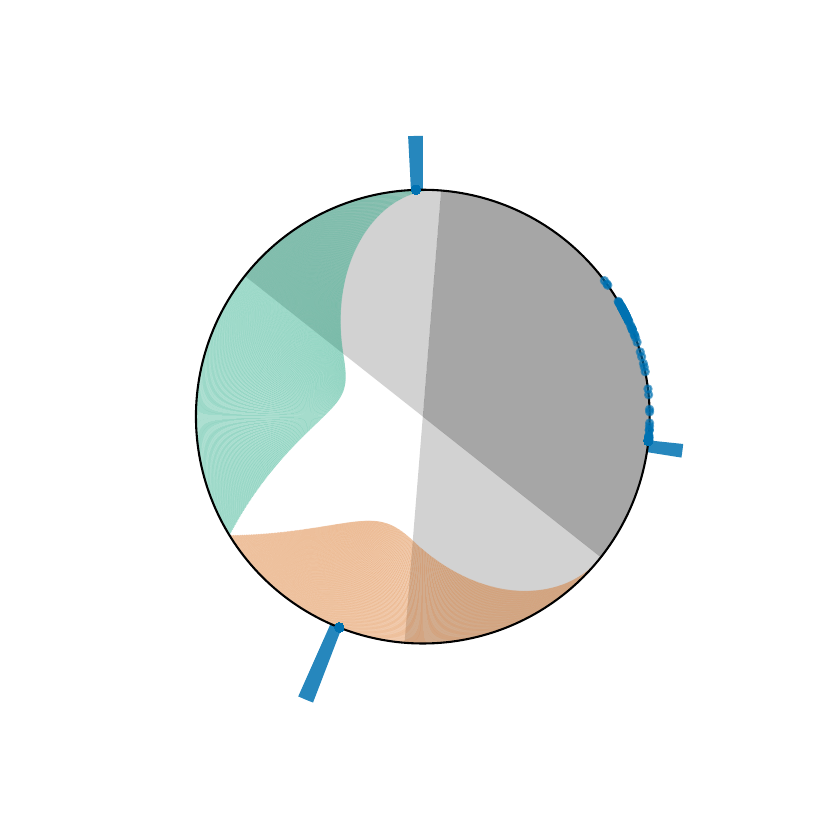}
    \end{subfigure}
    %\hfill
    \begin{subfigure}[b]{0.32\linewidth}
        \centering
        \includegraphics[width=\linewidth]{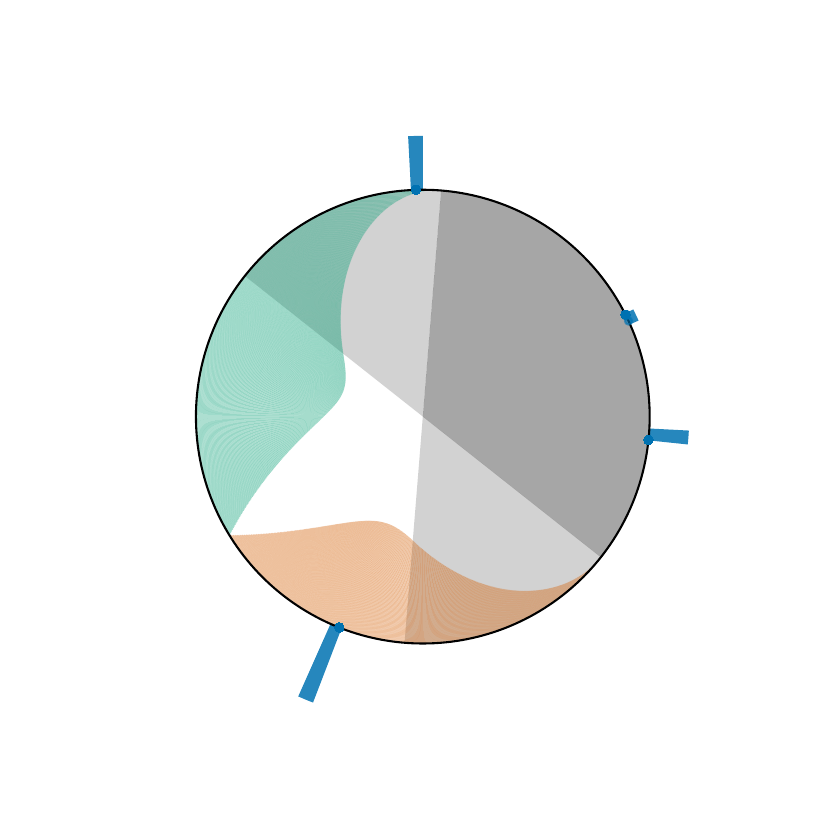}
    \end{subfigure}

    \vspace{0.5em}
    \begin{subfigure}[b]{0.32\linewidth}
        \centering
        \includegraphics[width=\linewidth]{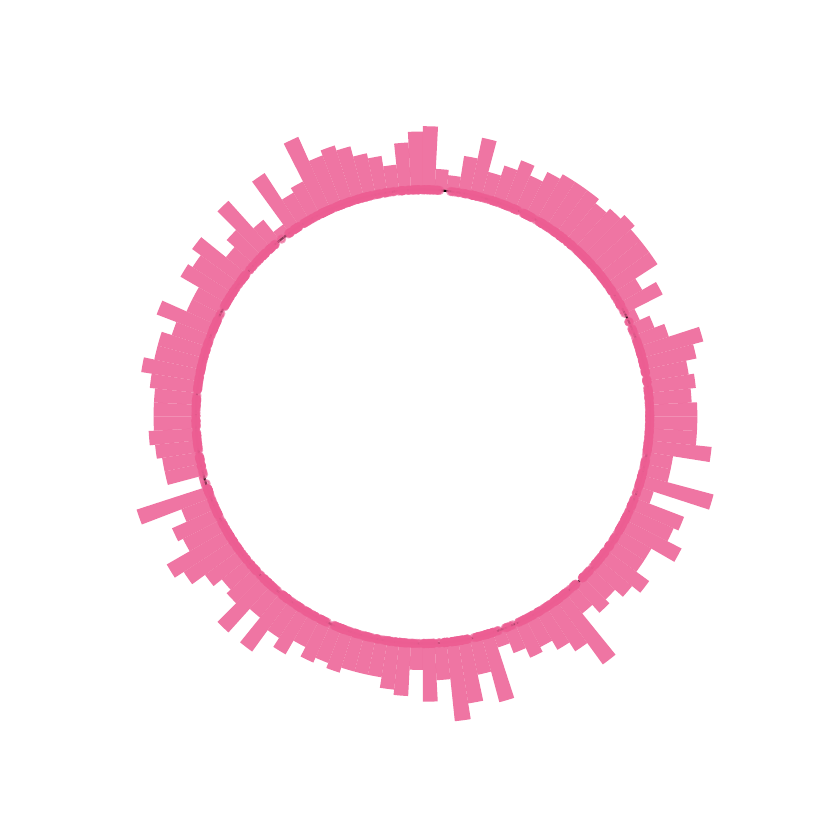}
    \end{subfigure}
    \hfill
    \begin{subfigure}[b]{0.65\linewidth}
        \centering
        \includegraphics[width=\linewidth]{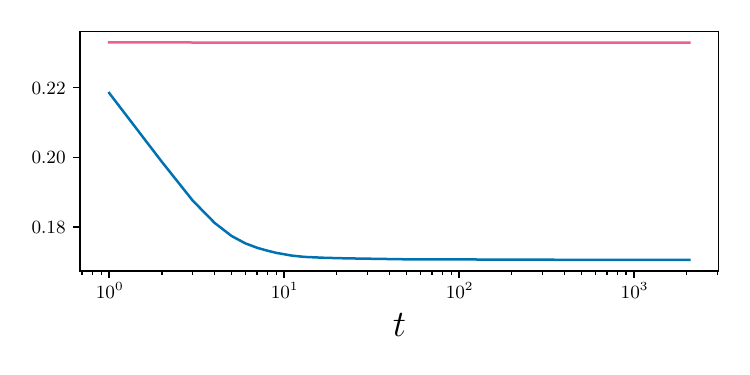}
    \end{subfigure}

    \caption{Gradient descent with ReLU perceptron in $d=2$, starting from the uniform measure, with $\beta=1$. 
    \textbf{Top:} particle histograms for the dynamics at initial, intermediate and final times.
    \textbf{Bottom left:} final configuration for pure self-attention without a perceptron.
    \textbf{Bottom right:} the energy $\mathsf E_{\beta,\upvartheta}$ with ({\color{gblue}blue}) and without ({\color{gpink}pink}) the perceptron.
    Background shading shows the perceptron landscape ({\color{ggreen}green}: $>0$; {\color{gpeach}orange}: $<0$). Areas where $a_j \cdot x + b_j > 0$ for some $j$ are shown in lighter gray, while the darkest gray regions correspond to the ``dead zones'' where the potential vanishes.
    }
    \label{fig:descent_relu_hist_energy}
\end{figure}

\item We characterize the extremal points of the energy in {\bf\Cref{prop: min.max}}. For ReLU perceptrons, maximizers reduce to a finite family of quadratic programs, yielding explicit solutions in some cases.
%\item Experiments in \Cref{sec:numerics} illustrate clustering in $d=2$ and empirically validate the dependence of the number of atoms on $\beta$ and model parameters, consistent with our bounds.
\end{itemize}

\section{Setup}\label{s:setup}

Let $\mathcal{P}(\mathbb{S}^{d-1})$ denote the space of Borel probability measures 
on $\mathbb{S}^{d-1}$ and $\sigma_d$ the uniform measure 
on $\mathbb{S}^{d-1}$. 
For a smooth $f: \mathbb{S}^{d-1} \to \mathbb{R}$ and $x\in\S^{d-1}$, the spherical gradient is 
\begin{equation*}
    \nabla f(x) \coloneqq \proj_x \nabla_{\mathbb{R}^d} \tilde{f}(x)
\end{equation*}
where $\proj_x \coloneqq I_d - xx^\top$ is the orthogonal projection onto $\mathsf{T}_x\S^{d-1}$ and $\tilde{f}:\R^d\to\R$ is a(ny) smooth extension of $f$; $-\mathrm{div}$ denotes the adjoint of $\nabla$. All integrals are taken over $\S^{d-1}$, which we drop to ease reading.

\subsection{Wasserstein gradient flows}

We study the evolution of measures driven by a functional $\mathsf{E}: \mathcal{P}(\mathbb{S}^{d-1}) \to \mathbb{R}$. 
A curve $(\mu(t))_{t \ge 0} \subset \mathcal{P}(\mathbb{S}^{d-1})$ is a \emph{Wasserstein gradient flow}---in the steepest descent convention---for $\mathsf{E}$ if it is locally absolutely continuous in $(\mathcal P(\mathbb{S}^{d-1}), W_2)$ and there exists a velocity field $v(t)\in L^2(\mu(t);\mathsf{T}\mathbb{S}^{d-1})$ such that
\begin{equation}\label{eq:continuity_eq}
  \partial_t \mu(t) + \mathrm{div}\left(\mu(t) v(t)\right)=0,
\end{equation}
in the sense of distributions $\mathcal{D}'(\R_{\geq0}\times\mathbb{S}^{d-1})$, with velocity given by the gradient of the first variation,
\[
v(t) = -\nabla \frac{\delta \mathsf{E}}{\delta \mu}[\mu(t)].
\]
We recall that the first variation $\frac{\delta \mathsf{E}}{\delta \mu}[\mu]$ is defined (up to an additive constant) by the relation
\[
\frac{\mathrm{d}}{\mathrm{d}\varepsilon} \mathsf{E}(\mu + \varepsilon \chi)\Big|_{\varepsilon=0}
= \int\frac{\delta \mathsf{E}}{\delta \mu}[\mu]\diff \chi,
\]
for signed measures $\chi$ with $\chi(\mathbb{S}^{d-1})=0$ and such that $\mu+\varepsilon\chi\in\mathcal P(\mathbb{S}^{d-1})$ for $|\varepsilon|$ small. In the Wasserstein geometry, the relevant perturbations are transport ones $\chi=-\mathrm{div}(\mu\xi)$ with $\xi\in L^2(\mu;\mathsf{T}\mathbb{S}^{d-1})$, which identifies the Wasserstein gradient as above; see \cite{ambrosioGradientFlowsMetric2008}.

\subsection{Critical points}

We are primarily interested in the asymptotic behavior of these flows, which relates to the critical points of $\mathsf{E}$. A measure $\mu \in \mathcal{P}(\mathbb{S}^{d-1})$ is a \emph{critical point} of $\mathsf{E}$, or stationary solution to \eqref{eq:continuity_eq}, if
\begin{equation} \label{eq:critical_point_def}
    \nabla\frac{\delta\mathsf{E}}{\delta\mu}[\mu](x)=0 \qquad \text{for } \mu\text{-a.e. } x.
\end{equation}
We are also interested in second-order Wasserstein critical points. Recall that
\begin{equation}\label{eq:otto.tangent}
 \mathsf T_\mu \mathcal P(\mathbb S^{d-1}) \coloneqq \overline{\{\nabla \phi\colon \phi\in C^\infty(\mathbb S^{d-1})\}}^{\,L^2(\mu)}    
\end{equation}
is the tangent space at $\mu$, a Hilbert subspace of $L^2(\mu;\mathsf{T}\mathbb S^{d-1})$. Suppose $\mu$ lies in the regular (Riemannian) part of $(\mathcal P_2(\mathbb S^{d-1}),W_2)$ and $\mathsf{E}$ is twice differentiable in the $W_2$-sense at $\mu$ \cite[Ch.~15]{villani2009optimal}. Then the \emph{Wasserstein Hessian} at $\mu$ in the direction $\xi \in \mathsf T_\mu \mathcal P(\mathbb S^{d-1})$ is defined by
\begin{equation*}
\mathrm{Hess}_\mu \mathsf{E}(\xi,\xi)
\coloneqq \left.\frac{\diff ^2}{\diff  t^2}\right|_{t=0} \mathsf{E}(\mu(t)),
\end{equation*}
where $(\mu(t))$ is the $W_2$-geodesic emanating from $\mu$ with initial velocity field represented by $\xi=\nabla\phi$, i.e. (for $|t|$ small) $\mu(t)=(T(t))_\#\mu$ with $T(t)(x)=\exp_x(t\nabla\phi(x))$.\footnote{Here $\exp_x$ denotes the exponential map on $\mathbb S^{d-1}$. In general (e.g. for singular $\mu$), geodesics need not be uniquely determined by an $L^2(\mu)$ velocity field; in that case one may instead work with directional second variations along Monge perturbations, but we will not need this level of generality here.}

A critical point $\mu$ is \emph{second-order positive-definite} (SOPD) if the Wasserstein Hessian at $\mu$ is well-defined and
\begin{equation}\label{eq:2order}
 \mathrm{Hess}_\mu \mathsf{E}(\xi,\xi)\ge 0\qquad\text{ for all }\xi\in\mathsf T_\mu \mathcal P(\mathbb S^{d-1}),
\end{equation}
and it is \emph{strictly} SOPD if, moreover, there is $\kappa>0$ such that
\begin{equation}\label{eq:strict2order}
\mathrm{Hess}_\mu \mathsf E(\xi,\xi)\ge 
\kappa\|\xi\|_{L^2(\mu)}^2.
\end{equation}
A SOPD critical point is \emph{degenerate} if it is not strictly SOPD, i.e. if there exists $\xi\neq 0$ with $\mathrm{Hess}_\mu \mathsf{E}(\xi,\xi)=0$. If \eqref{eq:2order} fails (there exists $\xi$ with $\mathrm{Hess}_\mu \mathsf{E}(\xi,\xi)<0$), we call $\mu$ unstable (a saddle/maximum direction).

We focus on SOPD Wasserstein critical points because they ought to capture the appropriate notion of \emph{stable} equilibria for Wasserstein gradient (descent) flows and because quantitative convergence estimates are typically driven by second-order information (e.g. PL-type inequalities) around such points \cite{villani2009optimal,otto2001,chen2025quantitative}. 
In our coupled attention--perceptron setting, this perspective also highlights a surprising phenomenon: even in the repulsive regime, stable stationary states are forced to be discrete, so the mean-field flow can evolve from a smooth initialization toward a genuinely clustered configuration rather than a diffuse equilibrium.

\subsection{Mean-field formulation of Transformers}

As in \cite{geshkovski2025mathematical}, we model the layer-wise evolution of token embeddings  as the flow of an interacting particle system on $\mathbb{S}^{d-1}$, where ``time'' indexes the layers of the architecture.  At the mean-field level, the distribution of a particle under pure self-attention dynamics follows the Wasserstein gradient flow of
\begin{equation*}
\mathsf{E}_\beta[\mu] \coloneqq \frac{1}{2\beta}\iint e^{\beta x\cdot y}\diff \mu(x)\diff \mu(y).
\end{equation*}
Its first variation is
\begin{equation*}
    \frac{\delta \mathsf{E}_\beta}{\delta \mu}[\mu](x)=\frac{1}{\beta}\int e^{\beta x\cdot y}\diff \mu(y),
\end{equation*}
and the Wasserstein gradient in \eqref{eq:continuity_eq} reads
\begin{equation*}
    \nabla\frac{\delta \mathsf{E}_\beta}{\delta \mu}[\mu](x)=\proj_x \int e^{\beta x\cdot y} y\diff\mu(y).
\end{equation*}
This setting corresponds to the toy model analyzed in \cite{geshkovski2025mathematical} in which the key-query weights $K, Q$ and value weights $V$ are constant in time and satisfy $B\coloneqq Q^\top K=V=I_d$.
The framework and results can however be generalized to very general weights \cite{abella2025, geshkovski2025mathematical, karbevski2025key}---see also \Cref{ss:implem} below.
    
As discussed in \cite{geshkovski2025mathematical}, the perceptron component can be incorporated at the mean-field level as an additional drift field, yielding 
\begin{equation}\label{eq:WGF-main}
\partial_t \mu(t) \pm \mathrm{div}\left(\mu(t)\left(\nabla \frac{\delta \mathsf{E}_\beta}{\delta \mu}[\mu(t)] + \mathsf{u}_\upvartheta\right)\right)=0,
\end{equation}
where
\begin{equation*}
\mathsf{u}_\upvartheta(x) \coloneqq \proj_x \sum_{j\in\llbracket1,d\rrbracket} \boldsymbol{\omega}_j\sigma\left(a_j\cdot x + b_j\right),
\end{equation*}
with weights $\upvartheta=(a_j, \boldsymbol{\omega}_j, b_j)_{j\in\llbracket1,d\rrbracket}$, where $a_j,\boldsymbol{\omega}_j\in\R^d$ and $b_j \in \mathbb{R}$. We refer to the $+$ case in \eqref{eq:WGF-main} as \emph{ascent}, and $-$ as \emph{descent}.
One typically uses 
\[
\sigma(s)=s_+\text{ (ReLU) or }\sigma(s)=\frac{s}{2}\left(1+\operatorname{erf}\left(\tfrac{s}{\sqrt{2}}\right)\right)\text{ (GeLU)}.
\]
Our analysis distinguishes between these two cases. For simplicity, we henceforth omit the biases $b_j$, discussing their inclusion later in \Cref{rem: ext}.

When $\mathsf{u}_\upvartheta$ derives from a scalar potential, the coupled dynamics remain a Wasserstein gradient flow. This holds if and only if the output weights $\boldsymbol{\omega}_j$ are collinear with the input weights $a_j$, meaning $\boldsymbol{\omega}_j=\omega_j a_j$ for some scalars $\omega_j\in\R$. Henceforth, we assume this is the case and denote the parameters by $\upvartheta=(a_j,\omega_j)_{j\in\llbracket 1,d \rrbracket}\in(\R^{d+1})^d$.

Under this condition, we can choose a primitive  $\upvarphi$ with $\upvarphi'(s)=2\sigma(s)$ and set
\begin{equation} \label{eq: primitive.field}
\mathsf{v}_\upvartheta(x) \coloneqq \sum_{j\in\llbracket1,d\rrbracket} \omega_j\upvarphi\left(a_j\cdot x\right),
\end{equation}
so that $\frac{1}{2}\nabla\mathsf{v}_\upvartheta(x)=\mathsf{u}_\upvartheta(x)$. The full energy is then
\begin{equation*}
\mathsf{E}_{\beta,\upvartheta}[\mu]
\coloneqq \frac{1}{2\beta}\iint e^{\beta x\cdot y}\diff \mu(x)\diff \mu(y)
+\frac{1}{2}\int \mathsf{v}_\upvartheta(x)\diff \mu(x),
\end{equation*}
and \eqref{eq:WGF-main} is precisely its Wasserstein gradient flow, since
\begin{equation*}
\nabla\frac{\delta\mathsf{E}_{\beta,\upvartheta}}{\delta\mu}[\mu](x) = \int e^{\beta x\cdot y}\proj_x y\diff\mu(y) + \mathsf{u}_\upvartheta(x).
\end{equation*}
Whenever $\sigma$ is continuous, \eqref{eq:critical_point_def} extends to the entire support:
\begin{equation} \label{eq: steady.state}
\nabla\frac{\delta\mathsf{E}_{\beta,\upvartheta}}{\delta\mu}[\mu](x)=0\qquad \text{for all } x\in\supp\mu.
\end{equation}

\subsection{Practical considerations} \label{ss:implem}

While our toy model captures the core interaction dynamics, we distinguish three differences with respect to Transformers used in real-world applications.

\paragraph{Perceptrons}

The drift given by the perceptron is a gradient field only under the specific weight symmetries assumed in \eqref{eq: primitive.field}. More general weights $\boldsymbol{\omega}_j$ could be interpreted as preconditioners, much like what is discussed in Section 2.2 in \cite{alcalde2025attention}.

\paragraph{Self-attention}

\textbf{\emph{(i)}} Practical implementations replace  $\beta x \cdot y$ with a general bilinear form $x^\top Q^\top K y$ involving query ($Q$) and key ($K$) matrices. 
For technical clarity, we work in the isotropic case $Q^\top K=\beta I_d$ with $ V=\pm I_d$, but our main results can be extended to the setting where $Q^\top K$ is symmetric and invertible; see \Cref{rem: ext} and the discussion after \Cref{lem: quadpol}. The sign of $V$ determines whether the dynamics follow gradient ascent or descent, but this does not change the stationary points.

\textbf{\emph{(ii)}} In practice, attention scores are also normalized via a softmax as
\begin{equation*}
\nabla\log\int e^{\beta x\cdot y}\diff\mu(y)
= \frac{1}{\frac{\delta\mathsf E_\beta}{\delta\mu}[\mu](x)} \nabla\frac{\delta\mathsf E_\beta}{\delta\mu}[\mu](x).
\end{equation*}
This field corresponds to a weighted Wasserstein gradient $\Grad$ obtained by rescaling the standard variation by the strictly positive weight
$\mathsf{w}[\mu](x) \coloneqq \frac{\delta \mathsf{E}_\beta}{\delta \mu}[\mu](x).$ 
The resulting dynamics are
\begin{equation}\label{eq:full-WGF-main}
\partial_t\mu(t) + \mathrm{div}\left(\mu(t) \left( \Grad\mathsf{E}_{\beta}[\mu(t)] + \mathsf{u}_\upvartheta \right)\right) = 0,  
\end{equation}
where $\Grad\mathsf{E}\coloneqq\frac{1}{\mathsf{w}[\mu]} \nabla \frac{\delta \mathsf{E}}{\delta \mu}[\mu]$. They
retain the structure of a gradient flow (under a conformally equivalent metric). Since $\mathsf{w}>0$, the stationarity condition \eqref{eq: steady.state} becomes:
\begin{equation}\label{eq: fulltrans.stat}
   \nabla \frac{\delta \mathsf{E}_{\beta}}{\delta \mu}[\mu] + \mathsf{w}[\mu]\mathsf{u}_\upvartheta = 0 \qquad \text{on } \supp\mu.
\end{equation} 
To facilitate the analysis, we focus on unnormalized attention in the main text; the analogous results for \eqref{eq:full-WGF-main}, which are qualitatively similar, are detailed in {\bf\Cref{ss:normalized}}.

\section{Results}

We now state our main results, which characterize the stationary configurations of the mean-field dynamics \eqref{eq:WGF-main}.

\subsection{Atomicity of critical points}

We begin with the simplest case of the ReLU perceptron in $d=2$. In this setting, the non-analyticity of the potential forces any stationary measure to have finite support.

\begin{theorem} \label{thm: circle}
    Let $d=2$ and $\beta>0$, and fix $\sigma(s)=s_+$. Assume that the weights $\upvartheta$ are such that $\mathsf{v}_\upvartheta$ is not real-analytic\footnote{This discards the pathological cases where $\mathsf{v}_\upvartheta\equiv0$, or specific weight symmetries for which $\mathsf{v}_\upvartheta$ effectively becomes a quadratic trigonometric polynomial.} on $\mathbb{S}^1$. 
    Then any $\mu \in \mathcal{P}(\mathbb{S}^{1})$ satisfying the stationarity condition \eqref{eq: steady.state} is purely atomic and has finite support.
\end{theorem}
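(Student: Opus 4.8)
The plan is to reduce the stationarity equation to a one-dimensional identity on the circle and then play the real-analyticity of the attention term against the non-analyticity of the ReLU potential. Parametrize $\S^1$ by $\theta\mapsto x(\theta)=(\cos\theta,\sin\theta)$ and let $\tilde\mu$ be the pushforward of $\mu$ to $\R/2\pi\Z$. Since $\mathsf T_x\S^1$ is one-dimensional, \eqref{eq: steady.state} is equivalent to $G_\mu(\theta):=\frac{\mathrm d}{\mathrm d\theta}\big(U_\mu(\theta)+\tfrac12\,\mathsf v_\upvartheta(\theta)\big)=0$ for every $\theta\in\supp\tilde\mu$, where $U_\mu(\theta):=\tfrac1\beta\int e^{\beta\cos(\theta-\psi)}\diff\tilde\mu(\psi)$. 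Two structural facts drive the argument. First, $\theta\mapsto e^{\beta\cos(\theta-\psi)}$ extends to an entire $2\pi$-periodic function whose derivatives are bounded uniformly in $\psi$ on each horizontal strip, so by dominated convergence $U_\mu$ is real-analytic on $\S^1$. Second, for $\sigma(s)=s_+$ we have $\upvarphi(s)=s_+^2$, hence, writing $\alpha_j$ for the argument of $a_j$, $\mathsf v_\upvartheta(\theta)=\sum_j\omega_j|a_j|^2\cos^2(\theta-\alpha_j)\,\mathbbm{1}\{\cos(\theta-\alpha_j)\ge 0\}$; this is $C^1$ on $\S^1$ and, on each of the at most $2d=4$ open arcs cut out by $\bigcup_j\{a_j\cdot x=0\}$, it agrees with a trigonometric polynomial $V_S(\theta)=\sum_{j\in S}\omega_j|a_j|^2\cos^2(\theta-\alpha_j)$ containing only the frequencies $0,\pm2$, where $S$ is the active sign pattern on that arc.

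Next I would argue by contradiction. Suppose $\supp\mu$ is infinite; by compactness it has an accumulation point $\theta^\ast$, and since there are finitely many arcs, some arc $I$ contains infinitely many distinct points of $\supp\mu$ accumulating at $\theta^\ast$; discarding the (at most two) endpoints, these points lie in the open arc $I$, where $G_\mu=H_I|_I$ with $H_I:=U_\mu'+\tfrac12 V_S'$ real-analytic \emph{on all of} $\S^1$. Since $H_I$ vanishes on a set with an accumulation point, the identity theorem forces $H_I\equiv0$, i.e.\ $U_\mu'\equiv-\tfrac12 V_S'$ on $\S^1$.

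Now I would turn this rigidity into full support for $\mu$. By Jacobi--Anger, $U_\mu(\theta)=\tfrac1\beta\sum_{n\in\Z}I_{|n|}(\beta)\,\widehat{\tilde\mu}(n)\,e^{in\theta}$, so $U_\mu'(\theta)=\tfrac1\beta\sum_{n}in\,I_{|n|}(\beta)\,\widehat{\tilde\mu}(n)\,e^{in\theta}$. Since $V_S'$ involves only frequencies $\pm2$ and $I_k(\beta)>0$ for $\beta>0$, the identity $U_\mu'\equiv-\tfrac12 V_S'$ forces $\widehat{\tilde\mu}(n)=0$ for all $n\notin\{0,\pm2\}$; hence $\tilde\mu$ is absolutely continuous with density $\rho(\theta)=1+2\,\mathrm{Re}\!\big(\widehat\rho_2\,e^{2i\theta}\big)$, and nonnegativity forces $2|\widehat\rho_2|\le 1$, so $\rho>0$ a.e.\ and $\supp\mu=\S^1$. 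But then \eqref{eq: steady.state} holds at every point of $\S^1$, i.e.\ $\mathsf v_\upvartheta'\equiv-2U_\mu'$; the right-hand side is real-analytic, so integrating, $\mathsf v_\upvartheta$ is real-analytic on $\S^1$, contradicting the hypothesis. Therefore $\supp\mu$ is finite, and a measure with finite support is purely atomic.

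The main obstacle is not the analyticity step in isolation but recognizing that controlling $G_\mu$ on a single arc is insufficient: one must exploit that $G_\mu$ restricted to an arc is the restriction of a \emph{globally} real-analytic function (because $U_\mu$ is analytic on all of $\S^1$ and each $V_S$ is a global trigonometric polynomial), so that accumulation of zeros on one arc pins down every Fourier coefficient of $\mu$, and that the degree-$\le 2$ structure of $\mathsf v_\upvartheta$ on each arc---specific to the quadratic primitive of ReLU---is exactly what makes this rigidity collapse $\mu$ to a full-support measure, whereupon stationarity everywhere yields the contradiction. (For analytic activations such as GeLU there are no arcs at all, which is precisely why \Cref{thm: circle.gelu} must instead invoke second-order information.)
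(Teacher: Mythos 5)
Your argument is correct, and its skeleton coincides with the paper's: reduce to the circle, exploit that the attention term is globally real-analytic while the ReLU potential is only piecewise a degree-$2$ trigonometric polynomial, use the identity theorem on one arc containing infinitely many support points, invert the convolution to conclude that $\mu$ has a trigonometric-polynomial density and hence full support, and then derive a contradiction with the non-analyticity of $\mathsf v_\upvartheta$. Two steps differ in execution. First, where you invert via Jacobi--Anger and the positivity of $I_{|n|}(\beta)$ directly, the paper invokes \Cref{lem: quadpol}\textbf{(i)}; these are the same computation (Funk--Hecke specialized to $d=2$), so nothing is gained or lost. Second, and more substantively, your concluding step is genuinely shorter: once $\supp\mu=\S^1$, you observe that \eqref{eq: steady.state} holds at \emph{every} point, so $(\mathsf v_\upvartheta\circ x)'=-2U_\mu'$ globally; since $\mathsf v_\upvartheta\circ x$ is $C^1$ (because $s\mapsto s_+^2$ is $C^1$) and its derivative coincides everywhere with a real-analytic function, integration makes $\mathsf v_\upvartheta$ real-analytic, contradiction. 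The paper instead re-runs the accumulation argument on each adjacent component, obtains a family of global identities with constants $C_I$, and kills the constants by evaluating at the shared boundary points before propagating by connectivity. Your route avoids that bookkeeping entirely and is cleaner in $d=2$; the paper's pairwise-subtraction argument is the one that transfers verbatim to the higher-dimensional and normalized-attention settings (\Cref{thm: any.d}, \Cref{prop: unified.log}), where one compares quadratic forms across cells rather than integrating a scalar ODE. Minor presentational points only: the pigeonhole should be stated as ``finitely many arcs, infinite support, hence some arc contains infinitely many support points'' rather than first fixing an accumulation point; and ``real-analytic by dominated convergence'' deserves a word (e.g., absolute convergence of the Fourier series of $\mathsf K_\beta$, as in the paper). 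Neither affects correctness.
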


When the activation is real-analytic (e.g. GeLU), simply looking at solutions of \eqref{eq: steady.state} is not enough due to the analyticity of the potential. However, the same conclusions hold for strict SOPD Wasserstein critical points.

\begin{theorem} \label{thm: circle.gelu}
  Let $d=2$, $\beta>0$. Fix weights $\upvartheta$ and a real-analytic function $\sigma:\mathbb{R}\to\mathbb{R}$. If $\mu \in \mathcal{P}(\mathbb{S}^{1})$ is a strict SOPD Wasserstein critical point of $\mathsf{E}_{\beta,\upvartheta}$ in the sense of \eqref{eq:strict2order}, then $\mu$ is purely atomic and has finite support. 
    
    In other words, if a SOPD Wasserstein critical point has infinite support, then it must be degenerate.
\end{theorem}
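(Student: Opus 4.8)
The plan is to prove the contrapositive: a critical point with infinite support cannot be strictly SOPD. Write $\mathsf{E}_{\beta,\upvartheta}[\mu]=\tfrac12\iint W\,\diff\mu\,\diff\mu+\int V\,\diff\mu$ with $W(x,y)=\tfrac1\beta e^{\beta x\cdot y}$ and $V=\tfrac12\mathsf{v}_\upvartheta$, and set $F[\mu]\coloneqq\frac{\delta\mathsf{E}_{\beta,\upvartheta}}{\delta\mu}[\mu]$. Parametrizing $\mathbb{S}^1$ by arclength $\theta$, the interaction part $\theta\mapsto\tfrac1\beta\int e^{\beta\cos(\theta-\varphi)}\diff\mu(\varphi)$ extends to an entire function of $\theta$ (term-by-term on the exponential series, dominated uniformly in $\varphi$), and $\mathsf{v}_\upvartheta$ is real-analytic on $\mathbb{S}^1$ because a primitive $\upvarphi$ of the real-analytic $2\sigma$ is real-analytic and $\theta\mapsto a_j\cdot x(\theta)$ is trigonometric. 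Hence $F[\mu]$ is real-analytic on $\mathbb{S}^1$. If $\supp\mu$ is infinite, it has an accumulation point, and since $(F[\mu])'$ vanishes on $\supp\mu$ by \eqref{eq: steady.state}, the identity theorem forces $(F[\mu])'\equiv 0$; thus $F[\mu]$ is constant on all of $\mathbb{S}^1$.

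Next I would compute the Wasserstein Hessian at such a $\mu$ along a smooth direction $\xi=\nabla\phi$. Differentiating $t\mapsto\mathsf{E}_{\beta,\upvartheta}\big((T_t)_\#\mu\big)$ twice, with $T_t(x)=\exp_x(t\nabla\phi(x))$, and using that the curves $t\mapsto\exp_x(t\nabla\phi(x))$ are geodesics (vanishing covariant acceleration), one obtains the standard identity
\begin{equation*}
\mathrm{Hess}_\mu\mathsf{E}_{\beta,\upvartheta}(\xi,\xi)=\int \mathrm{Hess}_{\mathbb{S}^1}F[\mu](x)(\nabla\phi,\nabla\phi)\,\diff\mu(x)+\iint\big\langle\nabla_x\nabla_y W(x,y)\,\nabla\phi(x),\nabla\phi(y)\big\rangle\,\diff\mu(x)\,\diff\mu(y).
\end{equation*}
When $F[\mu]$ is constant the first term vanishes. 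In $d=2$ the tangent field is a scalar $g=\phi'$ times the unit tangent, and $\partial_\theta\partial_\varphi\big[\tfrac1\beta e^{\beta\cos(\theta-\varphi)}\big]=\mathcal{K}(\theta-\varphi)$ with $\mathcal{K}(u)=(\cos u-\beta\sin^2 u)e^{\beta\cos u}$, an even, real-analytic (hence bounded) kernel. So $\mathrm{Hess}_\mu\mathsf{E}_{\beta,\upvartheta}(\xi,\xi)=\langle Tg,g\rangle_{L^2(\mu)}$, where $(Tg)(\theta)=\int\mathcal{K}(\theta-\varphi)g(\varphi)\,\diff\mu(\varphi)$. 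Since $\mathcal{K}$ is bounded, $\mathcal{K}(\cdot-\cdot)\in L^2(\mu\otimes\mu)$, so $T$ is Hilbert--Schmidt, hence compact and self-adjoint; by $L^2(\mu)$-continuity of both $g\mapsto\langle Tg,g\rangle$ and $g\mapsto\|g\|_{L^2(\mu)}^2$, the identity persists for all $\xi\in\mathsf{T}_\mu\mathcal{P}(\mathbb{S}^1)$.

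It remains to contradict \eqref{eq:strict2order}. A compact self-adjoint operator cannot satisfy $\langle Tg,g\rangle\ge\kappa\|g\|_{L^2(\mu)}^2$ for all $g$ in an infinite-dimensional closed subspace $\mathsf{T}_\mu\mathcal{P}(\mathbb{S}^1)$: the compression $P_{\mathsf{T}_\mu}TP_{\mathsf{T}_\mu}$ restricted to $\mathsf{T}_\mu\mathcal{P}(\mathbb{S}^1)$ would be a compact operator bounded below by $\kappa I$, hence invertible with bounded inverse --- impossible in infinite dimension. So I must verify that $\supp\mu$ infinite forces $\dim\mathsf{T}_\mu\mathcal{P}(\mathbb{S}^1)=\infty$: given any $N$ distinct points $p_1,\dots,p_N\in\supp\mu$, localized bumps yield $\phi_1,\dots,\phi_N\in C^\infty(\mathbb{S}^1)$ with $(\phi_i'(p_j))_{i,j}$ invertible (the periodicity constraint $\int_{\mathbb{S}^1}\phi'=0$ does not constrain values at finitely many points), so $\phi_1',\dots,\phi_N'$ are linearly independent in $L^2(\mu)$. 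Chaining the implications, a strict SOPD Wasserstein critical point must have finite support, hence be purely atomic; equivalently, a SOPD critical point of infinite support is degenerate.

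\textbf{Main obstacle.} The delicate point is making the Hessian identity rigorous when $\mu$ need not sit in the regular part of $(\mathcal{P}_2(\mathbb{S}^1),W_2)$; I would circumvent this by only ever using the smooth-potential directions $\xi=\nabla\phi$ that define $\mathsf{T}_\mu\mathcal{P}(\mathbb{S}^1)$, for which $(T_t)_\#\mu$ is a genuine short $W_2$-geodesic, and then passing to the closure. The remaining care is routine: the identity theorem on $\mathbb{S}^1$, boundedness of $\mathcal{K}$ (so Hilbert--Schmidt implies compact), and the bookkeeping that the compression of a compact operator to $\mathsf{T}_\mu\mathcal{P}(\mathbb{S}^1)$ inherits both compactness and the putative lower bound.
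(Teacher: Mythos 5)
Your proposal is correct, and its skeleton coincides with the paper's: you argue the contrapositive, observe that for real-analytic $\sigma$ the first variation $H=\frac{\delta\mathsf E_{\beta,\upvartheta}}{\delta\mu}[\mu]$ is real-analytic on $\mathbb S^1$, use the identity theorem to conclude from an infinite support that $H$ is globally constant, and then note that the "diagonal" part of the Wasserstein Hessian vanishes, leaving exactly the off-diagonal bilinear form $\langle Tg,g\rangle_{L^2(\mu)}$ with kernel $\mathcal K(\theta-\varphi)=-\tfrac1\beta\mathsf K_\beta''(\theta-\varphi)$ (your computation of $\partial_\theta\partial_\varphi W$ matches the paper's \eqref{eq:Hess-bilinear-analytic}). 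Where you diverge is the endgame. The paper contradicts strictness \emph{constructively}: it builds unit tangent vectors $\psi_\delta$ supported in arcs of width $\delta$ around an accumulation point with $\int\psi_\delta\diff\mu=0$, subtracts the constant $\mathsf K_\beta''(0)$ from the kernel, and bounds the Hessian by the modulus of continuity $\omega(\delta)/\beta\to0$. You instead invoke soft spectral theory: $T$ is Hilbert--Schmidt on $L^2(\mu)$, hence compact and self-adjoint, and a compact operator cannot be bounded below by $\kappa>0$ on an infinite-dimensional closed subspace; you then check (correctly, using continuity of the $\phi_i'$ on $\supp\mu$ to pass from $\mu$-a.e.\ vanishing to vanishing at the points $p_j$) that infinite support forces $\dim\mathsf T_\mu\mathcal P(\mathbb S^1)=\infty$. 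Both routes are sound. The paper's localization is more quantitative and only uses continuity of $\mathsf K_\beta''$ at the origin, which is why essentially the same computation is recycled in the anti-concentration bound of \Cref{thm: bound}; your argument is shorter and more conceptual, but requires the extra lemma on the dimension of the Otto tangent space and the (routine) extension of the Hessian identity from smooth gradients to their $L^2(\mu)$-closure, which you correctly flag and handle.
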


In higher dimensions the landscape is more complicated, but the following theorem establishes that stationary measures are always singular and atomicity remains generic.

\begin{theorem}\label{thm: any.d}
    Let $d\ge 2$ and fix $\mu\in\mathcal P(\S^{d-1})$.

    \smallskip
    \noindent \textbf{(i)} Fix $\beta>0$. If $\mu$ is stationary for $\sigma(s)=s_+$ and $\upvartheta$ is such that $\mathsf{v}_\upvartheta$ is not real-analytic, or $\mu$ is a strict SOPD critical point for a real-analytic $\sigma$, then $\sigma_d(\supp\mu)=0$. In particular, $\mu$ is singular with respect to $\sigma_d$.

    \smallskip
    \noindent \textbf{(ii)} If $\sigma$ is real-analytic and $\sigma(s)\neq0$ for $s\neq0$, then there exists an open and dense set $U_\mu \subset \R_{>0}\times (\R^{d+1})^d$ such that, if $(\beta, \upvartheta)\in U_\mu$ and $\mu$ is stationary with respect to $(\beta, \upvartheta)$, then $\mu$ is purely atomic with finite support.

    \smallskip
    \noindent \textbf{(iii)} If $\sigma(s)=s_+$, then there exists a dense set $U_\mu \subset \R_{>0}\times (\R^{d+1})^d$ such that, if $(\beta,\upvartheta)\in U_\mu$ and $\mu$ is stationary with respect to $(\beta,\upvartheta)$, then the restriction of $\mu$ to the active regions $\bigcup_{j} \{ x:a_j \cdot x > 0\}$ is purely atomic with at most countably many atoms.
\end{theorem}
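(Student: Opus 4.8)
The plan is to reduce everything to the structure of the function $F(x):=\dfrac{\delta\mathsf E_{\beta,\upvartheta}}{\delta\mu}[\mu](x)=A[\mu](x)+\tfrac12\mathsf v_\upvartheta(x)$, where $A[\mu](x):=\tfrac1\beta\int e^{\beta x\cdot y}\diff\mu(y)$ is the attention potential; stationarity \eqref{eq: steady.state} is exactly $\supp\mu\subseteq\{\nabla F=0\}$. Two structural facts drive the argument. First, $A[\mu]$ extends to an entire function of $x\in\mathbb C^d$ (dominated convergence for $x\mapsto e^{\beta x\cdot y}$), hence is \emph{globally} real-analytic on $\S^{d-1}$ for every $\mu$; moreover its zonal expansion $A[\mu]=\tfrac1\beta\sum_{\ell\ge0}a_\ell(\beta)\,\pi_\ell$ has $a_\ell(\beta)>0$ for all $\ell$ and $a_\ell(\beta)$ decays super-exponentially, where $\pi_\ell$ is the projection of $\mu$ onto the degree-$\ell$ spherical harmonics. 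Second, $\mathsf v_\upvartheta$ is globally real-analytic when $\sigma$ is; for $\sigma(s)=s_+$ one has $\mathsf v_\upvartheta(x)=\sum_j\omega_j(a_j\cdot x)_+^2$, which is real-analytic on each of the finitely many open cells $R$ cut out of $\S^{d-1}$ by the hyperplanes $\{a_j\cdot x=0\}$ and coincides there with a quadratic form $P_R(x)=\sum_{j\in J_R}\omega_j(a_j\cdot x)^2$.

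For \textbf{(i)} I would suppose $\sigma_d(\supp\mu)>0$ and derive a contradiction. When $\sigma$ is real-analytic, $|\nabla F|^2$ is a nonnegative real-analytic function on the connected manifold $\S^{d-1}$ vanishing on a positive-measure set, hence $\nabla F\equiv0$ and $F$ is constant; then $\mu$ is a critical point with constant first variation, and a direct $W_2$-Hessian computation (using $\mathrm{Hess}_{\S}F\equiv0$) reduces $\mathrm{Hess}_\mu\mathsf E_{\beta,\upvartheta}(\nabla\phi,\nabla\phi)$ to a sum of two manifestly nonnegative "feature-map" quadratic forms built from $e^{\beta x\cdot y}=\langle\Phi(x),\Phi(y)\rangle$; because the weights $a_\ell(\beta)$ decay super-exponentially, testing with high-degree harmonics sends $\mathrm{Hess}_\mu\mathsf E_{\beta,\upvartheta}(\nabla\phi,\nabla\phi)/\|\nabla\phi\|_{L^2(\mu)}^2$ to $0$, so $\mu$ is SOPD but not \emph{strict} SOPD — contradiction. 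When $\sigma(s)=s_+$, some cell $R$ has $\sigma_d(\supp\mu\cap R)>0$, so the same analyticity argument on $R$ gives $\nabla F|_R\equiv0$, whence $F|_R\equiv c$ and, since $\mathsf v_\upvartheta=P_R$ on $R$, $A[\mu]=c-\tfrac12 P_R$ first on $R$ and then globally by analytic continuation on the connected sphere. Positivity of $a_\ell(\beta)$ then forces $\pi_\ell=0$ for $\ell\ge3$, i.e. $\mu$ has a polynomial density $g$ of degree $\le2$, so $\supp\mu=\overline{\{g>0\}}$ has nonempty interior on which $\nabla F\equiv0$; matching the quadratic forms $P_{R'}$ across the cells visited by this interior — using that a quadratic form constant on a spherical patch equals a constant multiple of $|x|^2$ on all of $\S^{d-1}$ — pins $\mathsf v_\upvartheta$ to a single quadratic form up to additive constants, hence $\mathsf v_\upvartheta$ is real-analytic, contradicting the hypothesis. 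Either way $\sigma_d(\supp\mu)=0$.

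Parts \textbf{(ii)}--\textbf{(iii)} I would handle as a genericity statement about the zero set of $\nabla F$ in the free parameters. Fixing $\mu$, the attention potential $A[\mu]$ is fixed and stationarity reads $\mathsf u_\upvartheta=-\nabla A[\mu]$ on $\supp\mu$ (on each active cell $\{a_j\cdot x>0\}$ for (iii), where $\mathsf v_\upvartheta$ is analytic). As $\upvartheta$ varies, $\mathsf u_\upvartheta=\proj_x\sum_j\omega_j a_j\,\sigma(a_j\cdot x)$ sweeps a finite-dimensional family of (real-analytic, on the active cells) vector fields, and the assumption $\sigma(s)\neq0$ for $s\neq0$ makes these perturbations effective. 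The plan is to show that for $(\beta,\upvartheta)$ outside a finite union of proper real-analytic subsets of $\R_{>0}\times(\R^{2d})^d$ — an open dense complement — the section $\nabla F=\nabla A[\mu]+\mathsf u_\upvartheta$ has a zero set of dimension $0$ on $\S^{d-1}$ (resp. on the active region); being a compact $0$-dimensional real-analytic variety it is finite, so $\supp\mu$ is finite and $\mu$ purely atomic, which is (ii). For $\sigma(s)=s_+$ the same reasoning inside each active cell yields a discrete — hence at most countable — zero set there, with no information on the dead zone and only density (not openness) of $U_\mu$, which is (iii); throughout one uses that the complement of a finite union of proper real-analytic subsets of a Euclidean open set is open and dense.

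\textbf{Main obstacle.} The delicate step is the ReLU branch of \textbf{(i)}: excluding stationary measures whose attention potential has degenerated to a degree-$\le2$ spherical polynomial. Here the parameters are fixed, so no transversality argument is available, and the contradiction must be extracted from the interplay between the piecewise-quadratic structure of $\mathsf v_\upvartheta$ and the constraint $\nabla F\equiv0$ on a set with nonempty interior — in particular, from the combinatorics of which hyperplane cells the positivity set of the polynomial density can and cannot visit, and from ruling out configurations in which $\mathsf v_\upvartheta$ would be "compartmentalized" by the hyperplanes into a non-analytic but stationary-compatible shape. A secondary technical point, for (ii)--(iii), is to make the effectiveness of the perturbations $\mathsf u_\upvartheta$ quantitative enough that the $0$-dimensionality of $\{\nabla F=0\}$ holds generically and uniformly over the a priori unknown stationary $\mu$ attached to each parameter value.
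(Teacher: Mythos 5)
Your overall architecture matches the paper's (analyticity plus the identity theorem plus the harmonic inversion of \Cref{lem: quadpol} for part (i); genericity in $(\beta,\upvartheta)$ for parts (ii)--(iii)), and the ReLU branch of (i) follows essentially the paper's route up to a point. But there are concrete gaps. First, in the analytic branch of (i): once $\nabla F\equiv0$ forces $F$ constant, you claim the Hessian ratio $\mathrm{Hess}_\mu\mathsf E_{\beta,\upvartheta}(\xi,\xi)/\|\xi\|_{L^2(\mu)}^2$ tends to $0$ along high-degree harmonics because the zonal eigenvalues decay super-exponentially. That decay is with respect to $\sigma_d$, not $\mu$: for a general $\mu$ the quantities $\int Y_{m\ell}Y_n\diff\mu$ for fixed small $m$ need not be small relative to $\|Y_n\|_{L^2(\mu)}^2$ (already a single atom at a point where $Y_n$ is of order one keeps the ratio bounded below). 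The paper instead builds \emph{localized} mean-zero test functions supported on an arc of diameter $\delta$ around an accumulation point of $\supp\mu$ and bounds the residual bilinear form by the modulus of continuity of $\mathsf K_\beta''$ at $0$; that mechanism works for arbitrary $\mu$ and is what you should substitute. Second, in the ReLU branch of (i): you stop at ``$\supp\mu$ has nonempty interior'' and flag the resulting cell combinatorics as your main obstacle. The paper closes this cleanly: the density is a nonnegative, real-analytic (degree $\le2$ polynomial) function that is not identically zero, so its zero set is $\sigma_d$-null and hence $\supp\mu=\S^{d-1}$; then \emph{every} pair of adjacent cells is visited, the constants $C_I$ match at the separating hyperplane, and subtracting the two global identities forces $\mathsf v_\upvartheta$ to be a single global quadratic, contradicting non-analyticity. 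Your obstacle dissolves once you upgrade ``nonempty interior'' to ``full support''.

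For (ii)--(iii) the proposal does not contain a proof. You assert that the exceptional parameter set is a finite union of proper real-analytic subvarieties, but give no mechanism producing this, and the ``effectiveness'' of the perturbations is exactly the point that needs verification. The paper's argument is the parametric transversality theorem applied to the evaluation map $(x,\beta,\upvartheta)\mapsto g_{\beta,\upvartheta}(x)$: the decisive computation is that for $h\in\mathsf T_x\S^{d-1}$ one has $D_{a_j}g_{\beta,\upvartheta}(x)[h]=\omega_j\sigma(a_j\cdot x)\,h$ (using $\proj_x h=h$ and $x\cdot h=0$), which is a nonzero multiple of the identity on $\mathsf T_x\S^{d-1}$ whenever $\omega_j\neq0$ and $a_j\cdot x\neq0$; restricting to linearly independent $a_j$ with $\omega_j\neq0$ and using $\sigma(s)\neq0$ for $s\neq0$ guarantees such a $j$ exists at every $x$. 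Transversality then yields a dense set of parameters for which all zeros are nondegenerate, openness follows from compactness of $\S^{d-1}$, and for ReLU the same computation is run on each open cell and on each boundary stratum $F$ of the active region via the projected field $\proj_{\mathsf T_xF}g_{\beta,\upvartheta}$, giving countability. Finally, your closing worry about uniformity ``over the a priori unknown stationary $\mu$'' is moot: the theorem fixes $\mu$ first and $U_\mu$ is allowed to depend on it, so $g_{\beta,\upvartheta}$ is a fixed real-analytic (or piecewise-analytic) field once $\mu$ is fixed.
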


\begin{remark}\label{rem: ext}
The results above easily extend  to:

\smallskip\noindent\emph{(i)} biases $a_j \cdot x + b_j$ inside the perceptron, provided at least one of the hyperplanes $\{x: a_j \cdot x + b_j = 0 \}$ intersects $\S^{d-1}$ in more than one point (equivalently, $|b_j| < \|a_j\|$ for some $j$).

\smallskip\noindent\emph{(ii)} symmetric invertible $B$ instead of $\beta I_d$ in the interaction energy; see \Cref{rem: general-attention,rem:unifiedlog-B}.
\end{remark}

\subsection{Anti-concentration bound}

%While the previous theorems establish the finiteness of the support, they do not quantify it. The following result provides a partial quantitative answer by exploiting the local concavity of the interaction kernel to limit the total mass that can cluster within its characteristic scale $\sqrt{\beta}$.

While stationary measures have a discrete nature in the repulsive case, the strict concavity of the kernel\footnote{This has also been used in \cite{geshkovski2025mathematical, geshkovski2024number} in related but different contexts.} $\theta\mapsto e^{\beta \cos\theta}$ on $(-\beta^{-\frac12}, \beta^{-\frac12})$ for $\beta\to\infty$ ensures that they cannot be too concentrated.

\begin{figure}[h!]
    \centering
    \includegraphics[scale=0.65]{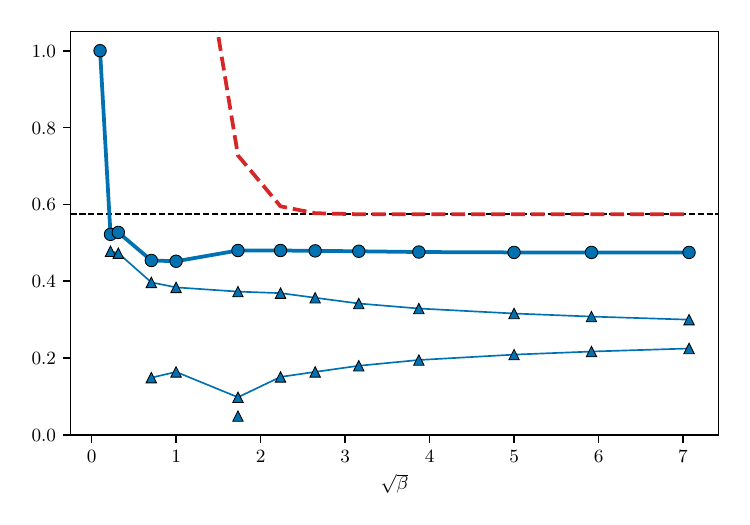}
    \caption{Cluster masses (in {\color{gblue}blue}, the largest being the thickest) at final time across  $\sqrt{\beta}$ for gradient descent with GeLU perceptron, initialized with $N=1000$ points of mass $10^{-3}$ (see \Cref{sec:numerics} for setup). The horizontal and {\color{red}red} dashed lines represent the numerical term and the full upper bound in \eqref{eq: conclusion}, respectively.}
\end{figure}

\begin{theorem}\label{thm: bound}
Let $d=2$, $\beta>0$ and $\sigma$ globally Lipschitz with constant $1$ and $\sigma(0)=0$ for simplicity. Consider any SOPD Wasserstein critical point of the form
\begin{equation}\label{eq: atomic.thm.bound}
    \mu = \sum_{i\in\llbracket 1, N\rrbracket} m_i \delta_{\theta_i}\in\mathcal{P}(\mathbb{S}^{1})
\end{equation}
where $N\ge 2$, the weights $m_i>0$ sum to $1$, and the points $\theta_i\in[0,2\pi)$ are pairwise distinct. 

Suppose there exists a subset $\mathcal S \subseteq \llbracket 1, N\rrbracket$ with $n=|\mathcal S|\geq2$ satisfying the cluster condition
\begin{equation} \label{eq: pairwise.distance}
\max_{i,j\in \mathcal S}\min_{k\in\mathbb Z}|\theta_i-\theta_j+2\pi k| \le \frac{1}{2\sqrt{\beta}}.
\end{equation}
Then, for $\beta$ sufficiently large, the total mass of this cluster satisfies
\begin{equation}\label{eq: conclusion}
\sum_{i\in \mathcal S} m_i \le 0.5742+ O\left(e^{-\beta}\right).
\end{equation}
Moreover,  if $\upvartheta=(\omega_j,a_j)_j$ satisfies
\begin{equation}\label{eq: theta.bound}
|\omega_1|\cdot\|a_1\|^2 + |\omega_2|\cdot\|a_2\|^2< 0.16547,
\end{equation}
then $\mathcal S=\llbracket 1, N\rrbracket$ satisfying \eqref{eq: pairwise.distance} cannot hold, for any $\beta>0$.
\end{theorem}

It ensues that

\begin{corollary}\label{cor: bound}
Let $\mu$ be as in \Cref{thm: bound}. Suppose that
\[
\operatorname{supp}\mu \subset \bigcup_{j\in\llbracket1, M\rrbracket} I_j, \qquad \text{where } \max_{j\in\llbracket1, M\rrbracket}|I_j| \eqqcolon L < 2\pi,
\]
for some integer $M \ge 1$. Let $N_\varepsilon$ be the number of atoms with mass $\ge \varepsilon$ for $\varepsilon>0$. Then, for $\beta$ large enough,
\[
N_\varepsilon \le \frac{M}{\varepsilon}\left(1+2L\sqrt{\beta}\right)\left( 0.5742 + O\left(e^{-\beta}\right) \right).
\]
\end{corollary}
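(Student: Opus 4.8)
The plan is to deduce \Cref{cor: bound} from \Cref{thm: bound} by a simple pigeonhole/covering argument. First I would observe that any atom of $\mu$ carrying mass at least $\varepsilon$ must lie in one of the $M$ arcs $I_j$. Fix one such arc $I_j$ of length $|I_j|\le L$. The strategy is to tile $I_j$ by consecutive sub-arcs (``bins'') of angular length $\tfrac{1}{2\sqrt\beta}$; since $L<2\pi$ and $\beta$ is large, this requires at most $\lceil 2L\sqrt\beta\rceil \le 1+2L\sqrt\beta$ bins. Each bin is small enough that any collection of atoms inside it automatically satisfies the pairwise-distance hypothesis \eqref{eq: pairwise.distance} of \Cref{thm: bound} (the diameter of a bin is $\tfrac{1}{2\sqrt\beta}$, exactly the admissible threshold), so each bin is either a single-element set or a ``cluster'' in the sense of the theorem, and in both cases the total $\mu$-mass it carries is at most $0.5742 + O(e^{-\beta})$. (For a bin containing a single atom, either it has mass $\le 0.5742$ trivially, or one notes that a lone atom of mass exceeding this constant would violate the bound when combined with any neighbor; more cleanly, one may simply say a singleton bin carries mass $\le 1$ and absorb the distinction, but the cluster bound gives the sharp constant.)

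Next I would count. Within each bin the number of atoms of mass $\ge \varepsilon$ is at most (mass of the bin)$/\varepsilon \le (0.5742 + O(e^{-\beta}))/\varepsilon$, since the masses are nonnegative and sum to at most the bin mass. Summing over the at most $1+2L\sqrt\beta$ bins inside a single arc $I_j$ gives at most $\tfrac{1}{\varepsilon}(1+2L\sqrt\beta)(0.5742 + O(e^{-\beta}))$ heavy atoms in $I_j$. Finally, summing over the $M$ arcs $I_1,\dots,I_M$ whose union contains $\operatorname{supp}\mu$ yields
\[
N_\varepsilon \le \frac{M}{\varepsilon}(1+2L\sqrt\beta)\bigl(0.5742 + O(e^{-\beta})\bigr),
\]
which is exactly the claimed bound.

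The only genuinely delicate point — and the one I would spend the most care on — is the treatment of a bin that contains just one atom, since \Cref{thm: bound} as stated requires a cluster with $n\ge 2$ atoms. The cleanest fix is to note that if a bin has a single atom of mass $m$, we can pair it with the nearest atom in an adjacent bin to form a two-point set of angular diameter $\le \tfrac{1}{\sqrt\beta}$; this does not satisfy \eqref{eq: pairwise.distance} with threshold $\tfrac{1}{2\sqrt\beta}$, so instead I would simply use slightly finer bins of length $\tfrac{1}{4\sqrt\beta}$ (doubling the bin count to at most $1+4L\sqrt\beta$, still absorbed into the constant up to adjusting the statement, or — matching the paper's stated constant — accept that a singleton bin trivially has mass $\le 1 \le 2\cdot 0.5742$ and track the factor). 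In the write-up I would just invoke \Cref{thm: bound} for multi-atom bins and bound singleton bins by their mass directly, which is enough since the final inequality only needs an upper bound of the stated form; the constant $0.5742$ is already the worst case among bins with two or more atoms, and a singleton carrying more mass than that can be excluded by the SOPD constraint exactly as in the last clause of \Cref{thm: bound}. No new machinery is needed beyond \Cref{thm: bound} itself.
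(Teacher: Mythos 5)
Your covering argument is the same as the paper's: tile each arc $I_j$ into at most $1+2L\sqrt{\beta}$ bins of length $\tfrac{1}{2\sqrt{\beta}}$, apply \Cref{thm: bound} to each bin, and sum over bins and arcs. However, the one step you yourself flag as delicate---singleton bins---is not resolved correctly in any of the variants you sketch. A lone atom of mass $m>0.5742$ is \emph{not} excluded by \Cref{thm: bound}: both the mass bound and the final clause of that theorem require a cluster with $n\ge 2$ atoms, so ``excluded by the SOPD constraint exactly as in the last clause'' does not apply; and the alternative fixes (pairing with a neighbor in an adjacent bin, or refining to bins of length $\tfrac{1}{4\sqrt{\beta}}$) either break the hypothesis \eqref{eq: pairwise.distance} or change the constant in the statement. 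The correct resolution, which the paper uses, is count-based rather than mass-based: a singleton bin contributes at most $1$ heavy atom, and whenever $\varepsilon\le \Lambda_\beta\coloneqq 0.5742+O(e^{-\beta})$ one has $1\le \Lambda_\beta/\varepsilon$, so the per-bin bound $n_\varepsilon(J)\le \Lambda_\beta/\varepsilon$ holds for singleton and multi-atom bins alike. You also omit the complementary regime $\varepsilon>\Lambda_\beta$, in which the claimed right-hand side is \emph{smaller} than the number of bins, so the covering count alone does not suffice; there one falls back on the trivial bound $N_\varepsilon\le 1/\varepsilon$ and checks that for $\beta$ large the stated bound exceeds $1/\varepsilon$. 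Both fixes are one-liners, but as written your argument does not deliver the stated constant.
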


We finish with the natural question of \emph{extremal} points.

\begin{proposition}\label{prop: min.max}
Let $d\geq 2$, $\beta>0$, $\upvartheta=(a_j, \boldsymbol{\omega}_j)_{j\in\llbracket1,d\rrbracket}$ and $\upvarphi'=2\sigma$. 

\noindent \textbf{(i)} The global maximizers of $\mathsf{E}_{\beta,\upvartheta}$ are exactly those $\mu=\delta_x$ with
\begin{equation*}
x\in\argmax_{y\in\S^{d-1}}\sum_{j\in\llbracket1,d\rrbracket} \omega_j\upvarphi(a_j\cdot y).
\end{equation*}
    
\noindent \textbf{(ii)} $\mathsf{E}_{\beta,\upvartheta}$ has a unique global minimizer $\mu_\ast$ which is %a SOPD Wasserstein critical point, and is 
    invariant under rotations that fix every $a_j$ such that $\omega_j\ne0$. 
\end{proposition}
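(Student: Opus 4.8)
The plan is to exploit the structure $\mathsf{E}_{\beta,\upvartheta}[\mu] = \frac{1}{2\beta}\iint e^{\beta x\cdot y}\diff\mu(x)\diff\mu(y) + \frac12\int \mathsf{v}_\upvartheta \diff\mu$, splitting the argument into the maximization and minimization parts.

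For the maximizer claim (i): I would first establish the general bound $\mathsf{E}_{\beta,\upvartheta}[\mu] \le \frac{1}{2\beta}e^{\beta} + \frac12 \max_y \mathsf{v}_\upvartheta(y)$, using that $e^{\beta x\cdot y}\le e^{\beta}$ pointwise with equality iff $x=y$ on the diagonal, and $\mathsf{v}_\upvartheta(x)\le \max_y \mathsf{v}_\upvartheta(y)$. Then I would check that for $\mu=\delta_x$ with $x$ a maximizer of $\mathsf{v}_\upvartheta$ (equivalently of $\sum_j \omega_j\upvarphi(a_j\cdot y)$ since $\mathsf{v}_\upvartheta = \sum_j\omega_j\upvarphi(a_j\cdot\,\cdot)$), the bound is attained, giving $\mathsf{E}_{\beta,\upvartheta}[\delta_x] = \frac{1}{2\beta}e^{\beta} + \frac12\mathsf{v}_\upvartheta(x)$. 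Conversely, if $\mu$ achieves the maximum, then the interaction term must equal $\frac{1}{2\beta}e^{\beta}$, which forces $x\cdot y = 1$ $\mu\otimes\mu$-a.e., hence $\mu$ is a single Dirac $\delta_x$; and then the potential term forces $x$ to maximize $\mathsf{v}_\upvartheta$. This direction is clean — the only mild subtlety is justifying that $x\cdot y=1$ a.e. implies $\mu$ is atomic, which follows since $\supp\mu$ must then be a single point.

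For the minimizer claim (ii): existence follows from lower semicontinuity of $\mathsf{E}_{\beta,\upvartheta}$ in the weak-$*$ (equivalently $W_2$) topology on the compact set $\mathcal P(\mathbb S^{d-1})$ — the interaction kernel $e^{\beta x\cdot y}$ is continuous and bounded, and $\mathsf{v}_\upvartheta$ is continuous, so $\mathsf{E}_{\beta,\upvartheta}$ is in fact weak-$*$ continuous and attains its infimum. Uniqueness is the crux: I would argue that $\mu\mapsto \iint e^{\beta x\cdot y}\diff\mu(x)\diff\mu(y)$ is \emph{strictly convex} on $\mathcal P(\mathbb S^{d-1})$, because $e^{\beta x\cdot y}$ is a strictly positive-definite kernel on the sphere (its expansion in Gegenbauer/spherical harmonics has all strictly positive coefficients, since $e^{\beta t} = \sum_k c_k(\beta) C_k^{(d-2)/2}(t)$ with $c_k(\beta)>0$ for all $k$; this is the classical Schoenberg-type positivity). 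Hence for any signed measure $\chi$ with $\chi(\mathbb S^{d-1})=0$, $\iint e^{\beta x\cdot y}\diff\chi(x)\diff\chi(y)>0$ unless $\chi=0$. Combined with the linearity of the potential term, $\mathsf{E}_{\beta,\upvartheta}$ is strictly convex along the affine structure of $\mathcal P(\mathbb S^{d-1})$, so the minimizer is unique. The symmetry statement then follows from uniqueness: if $R\in O(d)$ fixes every $a_j$ with $\omega_j\ne 0$, then $\mathsf{v}_\upvartheta\circ R = \mathsf{v}_\upvartheta$ (since $\upvarphi(a_j\cdot Ry) = \upvarphi(R^\top a_j\cdot y) = \upvarphi(a_j\cdot y)$) and the interaction energy is $O(d)$-invariant, so $R_\#\mu_\ast$ is also a minimizer, forcing $R_\#\mu_\ast = \mu_\ast$.

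The main obstacle is pinning down the strict positive-definiteness of the kernel $e^{\beta x\cdot y}$ rigorously enough to conclude strict convexity — I would either invoke the known fact that $t\mapsto e^{\beta t}$ has a Gegenbauer expansion with strictly positive coefficients (so it is strictly positive-definite on $\mathbb S^{d-1}$ in the sense of Schoenberg), or, more self-containedly, write $e^{\beta x\cdot y} = \E_{Z}[e^{\sqrt{\beta}\,x\cdot Z}e^{\sqrt{\beta}\,y\cdot Z}]$ for $Z$ a suitable Gaussian (up to a normalizing factor $e^{\beta/2\cdot(\cdot)}$ — care is needed since $x,y$ lie on the sphere so $|x|^2=|y|^2=1$ makes this exact: $e^{\beta x\cdot y} = e^{\beta}e^{-\frac{\beta}{2}|x-y|^2}$ and $e^{-\frac\beta2|x-y|^2}$ is a genuine Gaussian kernel, hence strictly positive-definite on $\R^d$ and a fortiori on $\mathbb S^{d-1}$). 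That last identity reduces everything to the strict positive-definiteness of the Gaussian kernel, which is standard, and makes the uniqueness argument fully elementary.
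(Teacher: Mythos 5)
Your proposal is correct and follows essentially the same route as the paper: the pointwise bound $e^{\beta x\cdot y}\le e^{\beta}$ with equality forcing a Dirac mass for part (i), and existence by compactness plus uniqueness via strict (conditional) positive-definiteness of the kernel and the linearity of the potential term for part (ii), with the symmetry deduced from uniqueness. The only cosmetic difference is that the paper certifies strict positive-definiteness through the spherical-harmonic (Funk--Hecke) expansion with coefficients $\lambda_k(\beta)>0$, whereas you also offer the equivalent Gaussian-kernel identity $e^{\beta x\cdot y}=e^{\beta}e^{-\frac{\beta}{2}|x-y|^{2}}$; both are valid.
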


\subsection{Maximizers for ReLU perceptrons}

We compute the maximizers in \Cref{prop: min.max} for $\sigma(s)=s_+$. Define
\begin{equation}\label{eq: defZd}
    \mathscr Z\coloneqq\{x\in\S^{d-1}:  a_j\cdot x=0 \text{ for some } j\in\llbracket1,d\rrbracket\}
\end{equation}
and let $I$ be any connected component of $\S^{d-1}\setminus\mathscr Z$ with active index set
\begin{equation}\label{eq: defJI}
    J_I\coloneqq\{j\in\llbracket1,d\rrbracket: a_j\cdot x>0 \text{ for all } x\in I\}.
\end{equation}
The signs of $x\mapsto a_j\cdot x$ are fixed on $I$, hence
\[
\mathsf{v}_\upvartheta(x)=x^\top B_{I}\,x,\qquad 
B_{I}\coloneqq \sum_{j\in J_{I}} \omega_j\, a_j a_j^\top\quad. 
\] Thus \Cref{prop: min.max} reduces to solving and comparing a finite collection of constrained quadratic programs:
\begin{equation}\label{eq:v-decomp}
\max_{x\in\mathbb S^{d-1}}\mathsf{v}_\upvartheta(x)
=\max_\alpha\ \max_{x\in \overline{I}} x^\top B_{I}\,x.
\end{equation}
This yields finitely many candidates, one for each connected component of $\S^{d-1}\setminus \mathscr{Z}$. On each $\overline I$, either a principal eigenvector of $B_I$ that lies in $I$ maximizes $x^\top B_I x$, or else all maximizers lie on $\partial I$. 

Across regions, the maximizers of $\mathsf v_\upvartheta$ may form a singleton, a finite set, or a continuum. In special cases they can be described explicitly:

\smallskip
\noindent \textbf{(i) Collinear directions.} Suppose \(a_j=\alpha_j\,a\) with \(a\in\R^d\) and \(\alpha_j\in\R\). Then
\[
\mathsf v_\upvartheta(x)
= (a \cdot  x)_+^2  \sum_{\alpha_j>0}\omega_j\alpha_j^2
+
(a \cdot  x)_-^2  \sum_{\alpha_j<0} \omega_j\alpha_j^2.
\]
Hence
\[
\max_{x\in\mathbb S^{d-1}}\mathsf{v}_\upvartheta(x)
=\|a\|^2\max\left\{\sum_{\alpha_j>0}\omega_j\alpha_j^2,\  \sum_{\alpha_j<0}\omega_j\alpha_j^2, 0\right\}.
\]
If $\max\mathsf{v}_\upvartheta>0$, the maximizer is \(\delta_{a/\|a\|}\) or \(\delta_{-a/\|a\|}\), according to which of the two sums is larger; both are maximizers in the tie case. Otherwise, any $\delta_x$ with $a\cdot x=0$ is a maximizer.

\smallskip
\noindent \textbf{(ii) Diagonal directions.} If \(a_j=\alpha_j e_j\) with $\alpha_j\in\R$, and \(\omega_j\equiv 1\), then
\[
\max_{x\in\mathbb S^{d-1}}\mathsf{v}_\upvartheta(x)=\max_{x\in\S^{d-1}}\sum_{j\in\llbracket1,d\rrbracket} (\alpha_j x_j)_+^2=\max_{j\in\llbracket1,d\rrbracket}|\alpha_j|^2,
\]
and the maximizers are those $\delta_x$ with \(x_k=0\) if \( |\alpha_k|<\max_j|\alpha_j|\) and \( \alpha_k x_k\ge0\) if \( |\alpha_k|=\max_j|\alpha_j|\). If the $\max_j$ is attained at a unique index $k$, then $x=\mathrm{sign}(\alpha_k)e_k$.

\smallskip
\noindent \textbf{(iii) Nonnegative.} Assume \(a_j\ge 0\) entrywise and \(\omega_j\equiv 1\). For any \(x\in\mathbb S^{d-1}\),
\begin{equation*}
\sum_{j\in\llbracket1,d\rrbracket} (a_j\cdot x)_+^2 \le \sum_{j\in\llbracket1,d\rrbracket} (a_j\cdot x)^2 = \|Ax\|_2^2\ \le\ \|A\|_2^2,    
\end{equation*}
where $A$ is the matrix with rows $a_j$. By Perron–Frobenius on the entrywise nonnegative $A^\top A$, we may choose a top right singular vector $v_1\ge0$, ensuring $Av_1\ge0$. Thus $x=v_1$ saturates both inequalities, so $\delta_{v_1/\|v_1\|}$ is a maximizer (unique if the top singular value is simple).

\subsection{Discussion}\label{ss: disc}

\smallskip\noindent\emph{(i) Domain geometry.} The analysis is conducted on $\mathbb{S}^{d-1}$. Despite the unique-continuation arguments naturally extend to other real-analytic compact manifolds, ruling out continuous measures globally (as in \Cref{thm: circle} and \Cref{thm: any.d}) heavily relies on the spectral inversion of the isotropic interaction kernel $e^{\beta x\cdot y}$ via spherical harmonics (\Cref{lem: quadpol}). Extending this contradiction mechanism to general domains remains an open problem.

\smallskip\noindent\emph{(ii) Non-conservative drifts.} When the assumption of collinear weights is dropped, the MLP drift is no longer a gradient. The stationary condition is the weaker transport equation \(\operatorname{div}(\mu v)=0\), rather than the pointwise condition of \eqref{eq: steady.state}. Handling this generalized condition falls outside the scope of our current techniques.

\smallskip\noindent\emph{(iii) General attention kernels.} The atomicity results require $\mu \mapsto \int K(\cdot, y)\diff\mu(y)$ to be real-analytic and injective (or provide a comparable spectral inversion). On the other hand, the quantitative anti-concentration bounds in \Cref{thm: bound} depend on the specific local concavity of $e^{\beta\cos\theta}$. Extending these conclusions to completely general attention kernels requires corresponding analytic and local-curvature properties.

\section{Simulations}\label{sec:numerics}

We complement our theory with simulations\footnote{Code can be found at \href{https://github.com/antonioalvarezl/2026-MLP-Attention-Energy}{https://github.com/antonioalvarezl/2026-MLP-Attention-Energy}.}. 

% ---------- Figure 1 (single column): three betas stacked vertically ----------
\begin{figure}[!h]
    \centering

    % --- Row 1: Beta = 0.05 ---
    % Left: Pure Attention
    \begin{subfigure}[b]{\linewidth}
        \centering
        \includegraphics[width=0.32\linewidth]{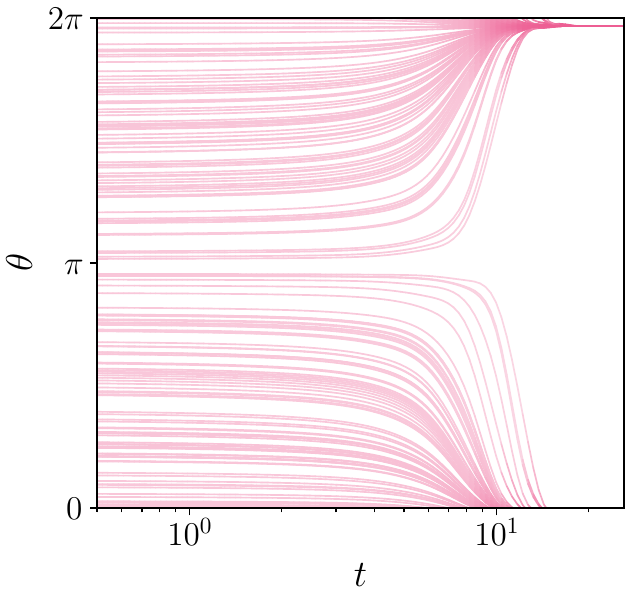}
    \hfill
    % Middle: MLP Drift
        \centering
        \includegraphics[width=0.32\linewidth]{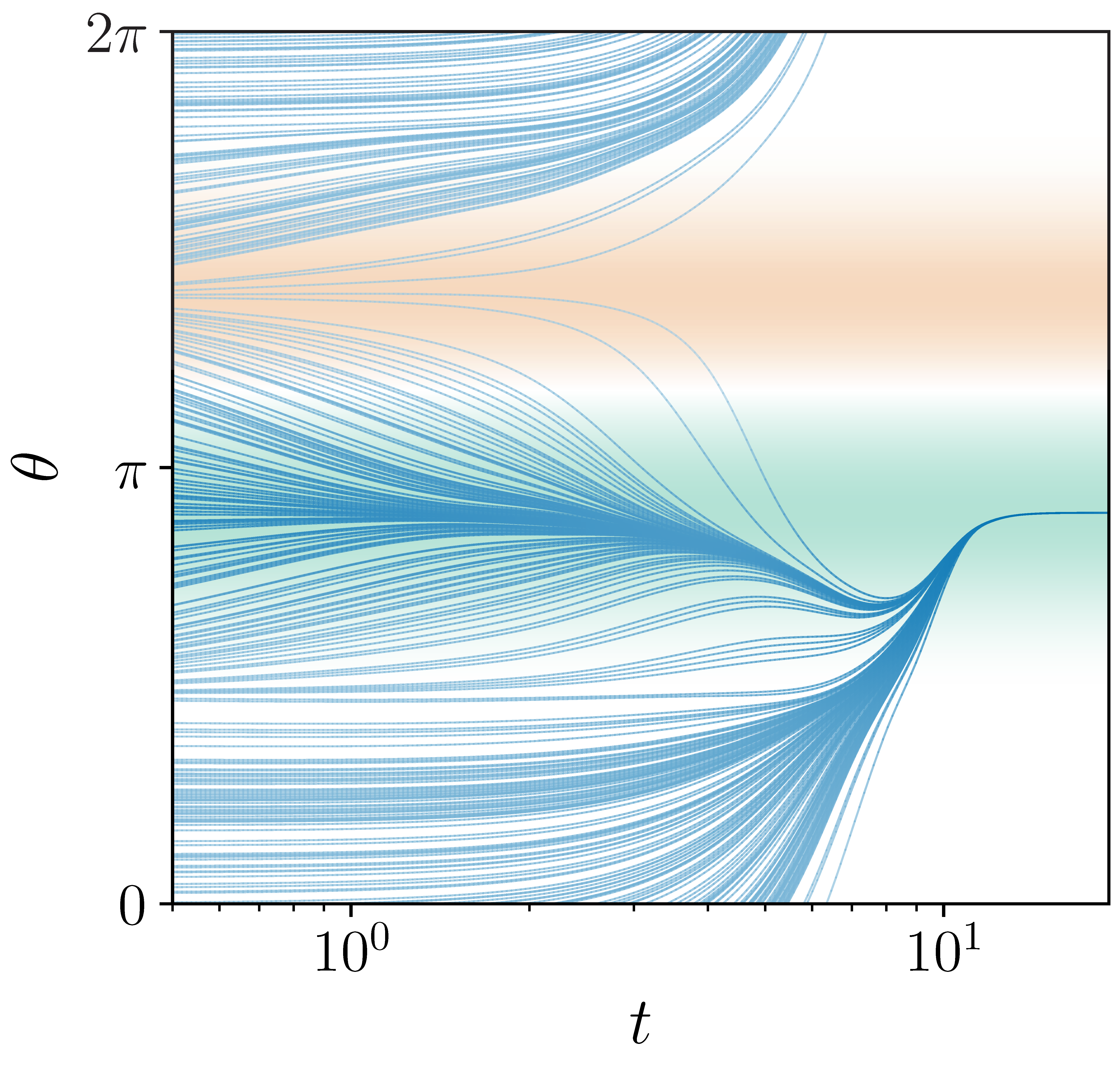}
    \hfill
    % Right: Histogram
        \centering
        \includegraphics[width=0.32\linewidth]{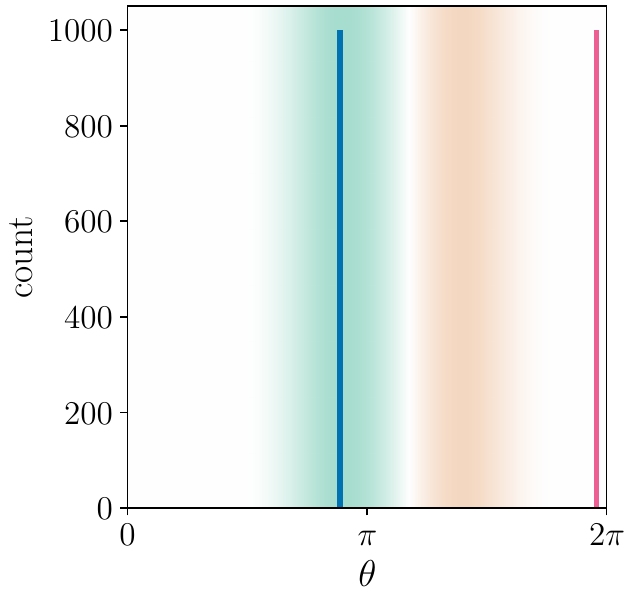}
        \caption{$\beta=0.05$} 
    \end{subfigure}

    \vspace{0.5em}

    % --- Row 2: Beta = 5 ---
    % Left: Pure Attention
    \begin{subfigure}[b]{\linewidth}
        \centering
        \includegraphics[width=0.32\linewidth]{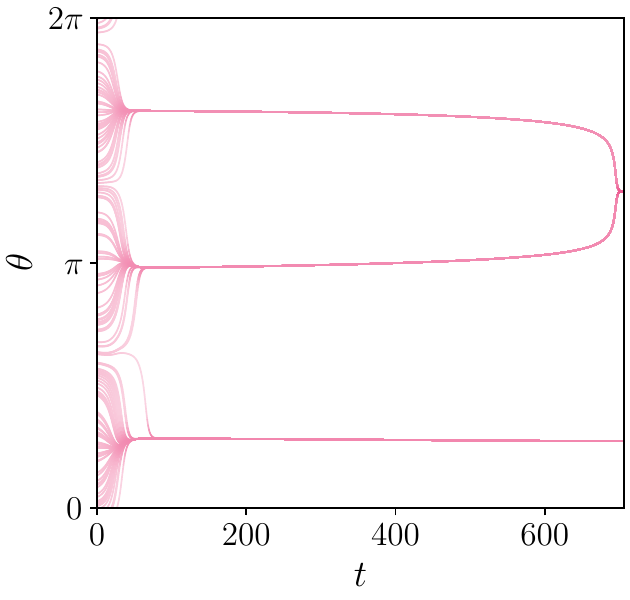}
    \hfill
    % Middle: MLP Drift
        \centering
        \includegraphics[width=0.32\linewidth]{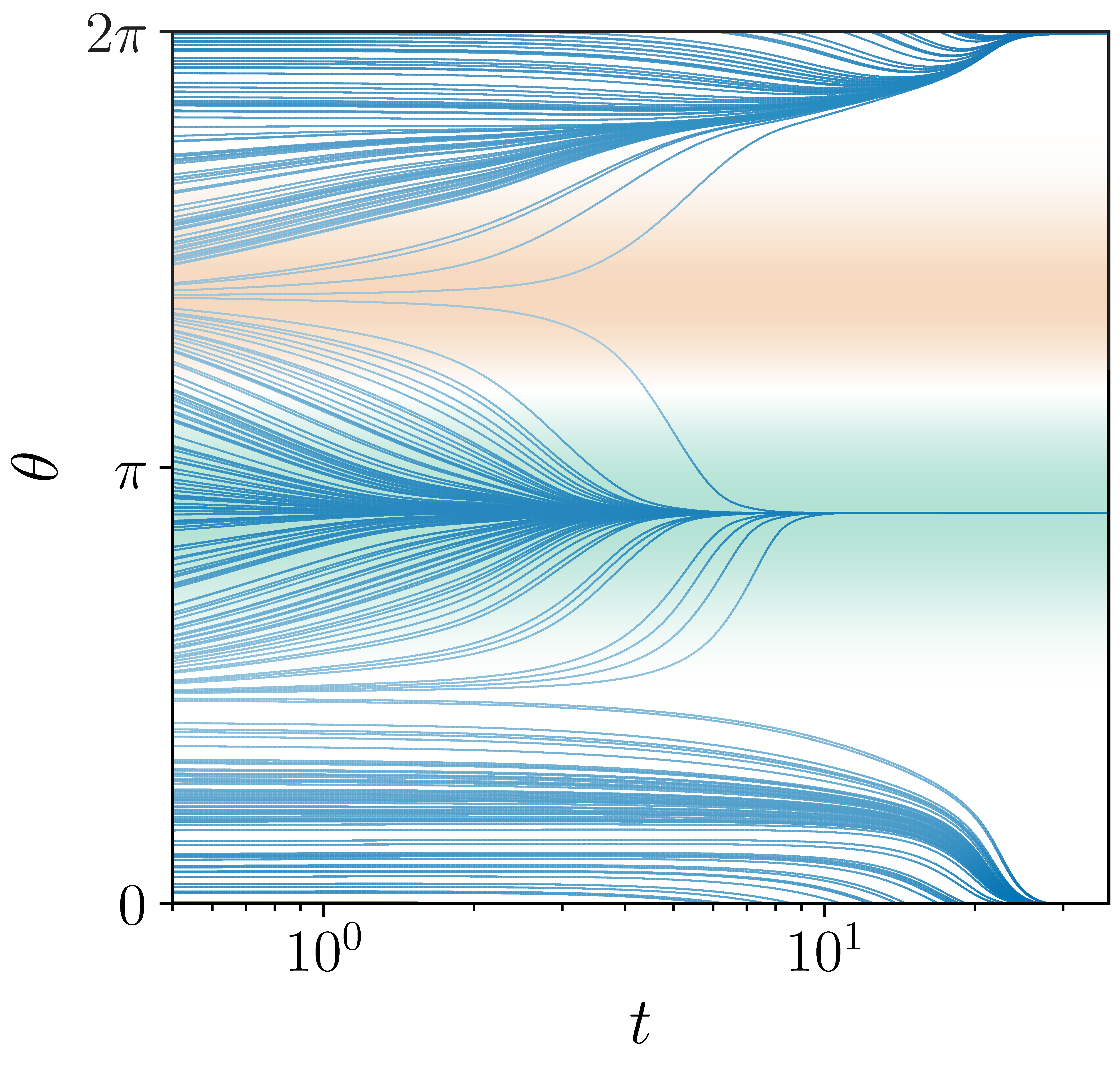}
    \hfill
    % Right: Histogram
        \centering
        \includegraphics[width=0.32\linewidth]{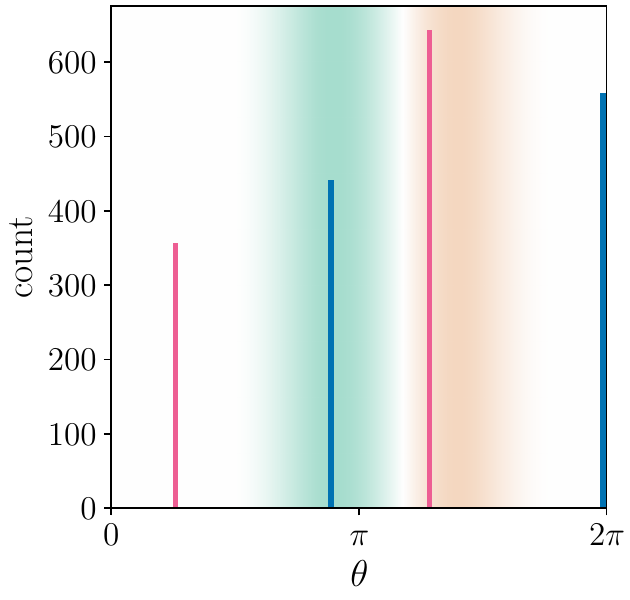}
        \caption{$\beta=5$}
    \end{subfigure}

    \vspace{0.5em}

    % --- Row 3: Beta = 50 ---
    % Left: Pure Attention
    \begin{subfigure}[b]{\linewidth}
        \centering
        \includegraphics[width=0.32\linewidth]{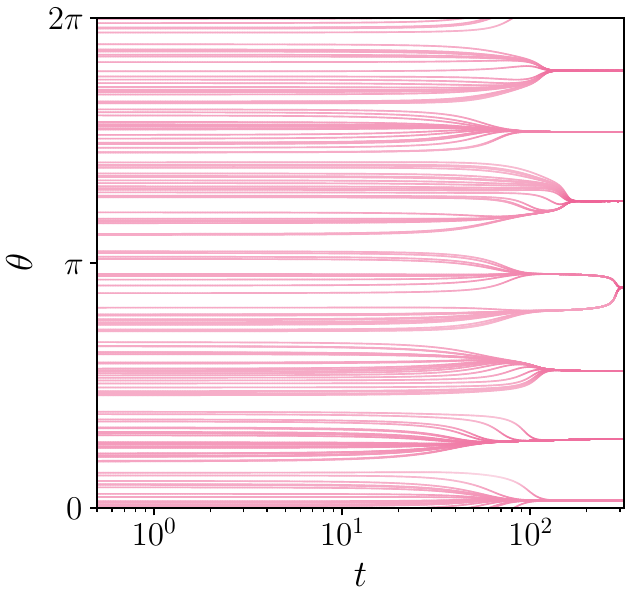}
    \hfill
    % Middle: MLP Drift
        \centering
        \includegraphics[width=0.32\linewidth]{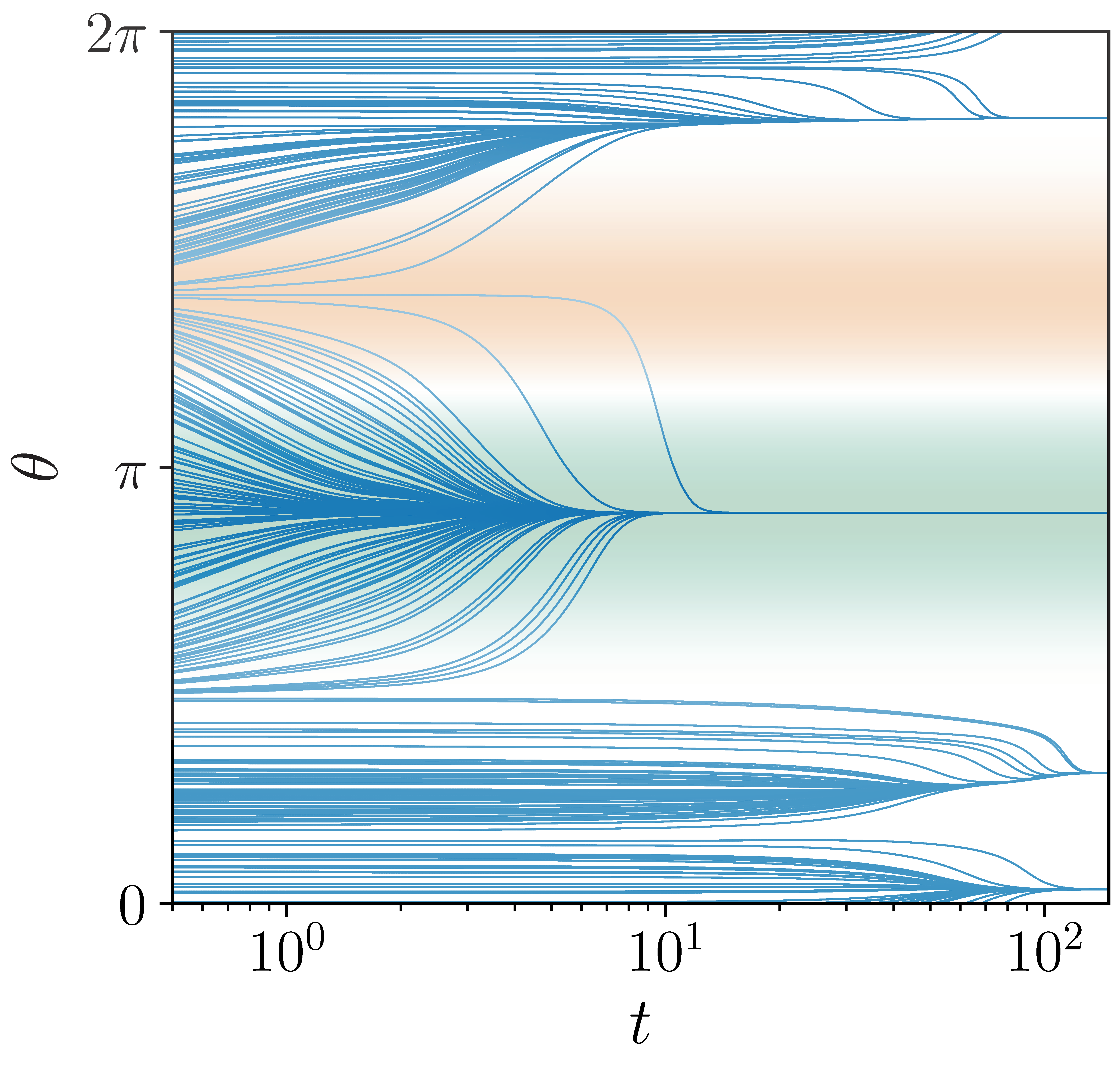}
    \hfill
    % Right: Histogram
        \centering
        \includegraphics[width=0.32\linewidth]{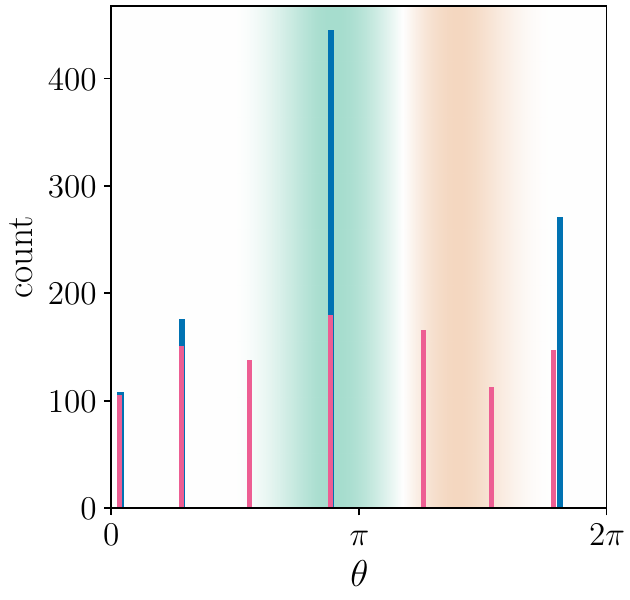}
        \caption{$\beta=50$}
    \end{subfigure}

    \caption{Gradient ascent on $\mathbb{S}^1$ with ReLU perceptron.
    \textbf{Left:} pure self-attention. 
    \textbf{Middle:} self-attention with a ReLU perceptron.
    \textbf{Right:} measure at final time. Background shading represents the potential landscape ({\color{ggreen}green}: positive; {\color{gpeach}orange}: negative values).
    }
    \label{fig:atoms_three_betas}
\end{figure}

\subsection*{Methodology}

Unless stated otherwise, we simulate \eqref{eq:WGF-main}--\eqref{eq:full-WGF-main} on $\mathbb{S}^1$ using $1000$ particles initialized uniformly on $[0,2\pi)$ using a fixed random seed. We use an explicit Euler scheme with $\Delta t=0.1$.
We sweep the inverse temperatures
\[
\beta \in \{0.01,0.05,0.1,0.5,1,3,5,7,10,15,25,35,50\}.
\]
The perceptron weights $\upvartheta$ are sampled from a standard normal distribution and fixed across all runs. 

For a given $\beta$, motivated by \Cref{lem: concavity}, we identify clusters on $\mathbb{S}^1$ by grouping particles whose pairwise geodesic distance is at most $\min\{1/(2\sqrt{\beta}),\pi/4\}$.

The simulation stops once the cluster count remains constant over a window of $5$ snapshots (with a snapshot every 10 steps) and $\max_i |\dot\theta_i| \le  10^{-4}$. Metastability can occur \cite{geshkovski2024dynamic, bruno2025emergence, bruno2025multiscale}: the dynamics may enter very slow regimes in which the convergence criteria are satisfied over long windows even though residual motion persists at larger timescales. 

\subsection*{Emergence of clusters}

We begin with gradient ascent  on $\mathbb{S}^1$ with ReLU perceptron. \Cref{fig:atoms_three_betas} displays three representative runs.

This protocol is repeated on $\mathbb{S}^2$. For visualization, we map each particle $x_i(t)\in\S^2$ to spherical angles $(\theta_i(t),\phi_i(t))$ and build a two-dimensional histogram on a uniform $(\theta,\phi)$--grid.

\begin{figure}[!h]
    \centering

    % --- Top row: MLP ---
    \begin{subfigure}[b]{0.32\linewidth}
        \centering
        \includegraphics[width=\linewidth]{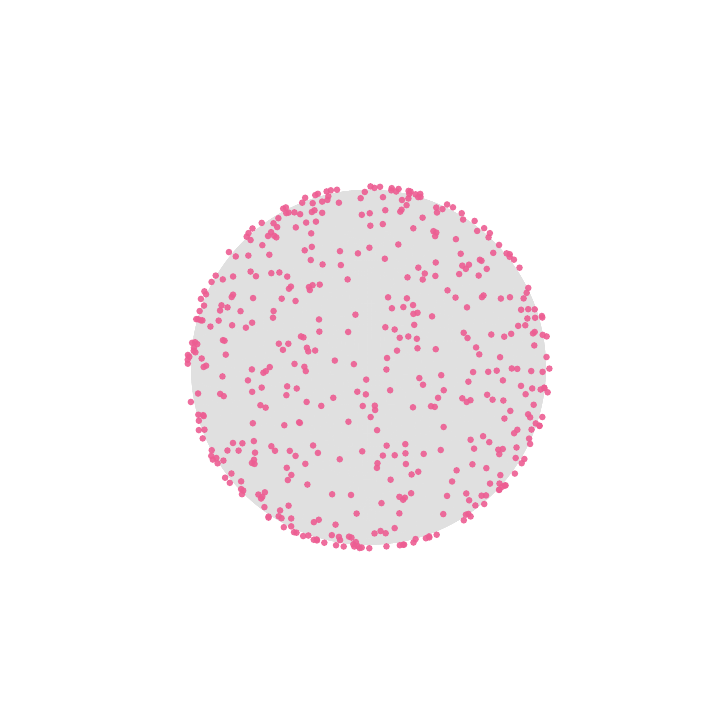}
    \end{subfigure}
    \hfill
    \begin{subfigure}[b]{0.32\linewidth}
        \centering
        \includegraphics[width=\linewidth]{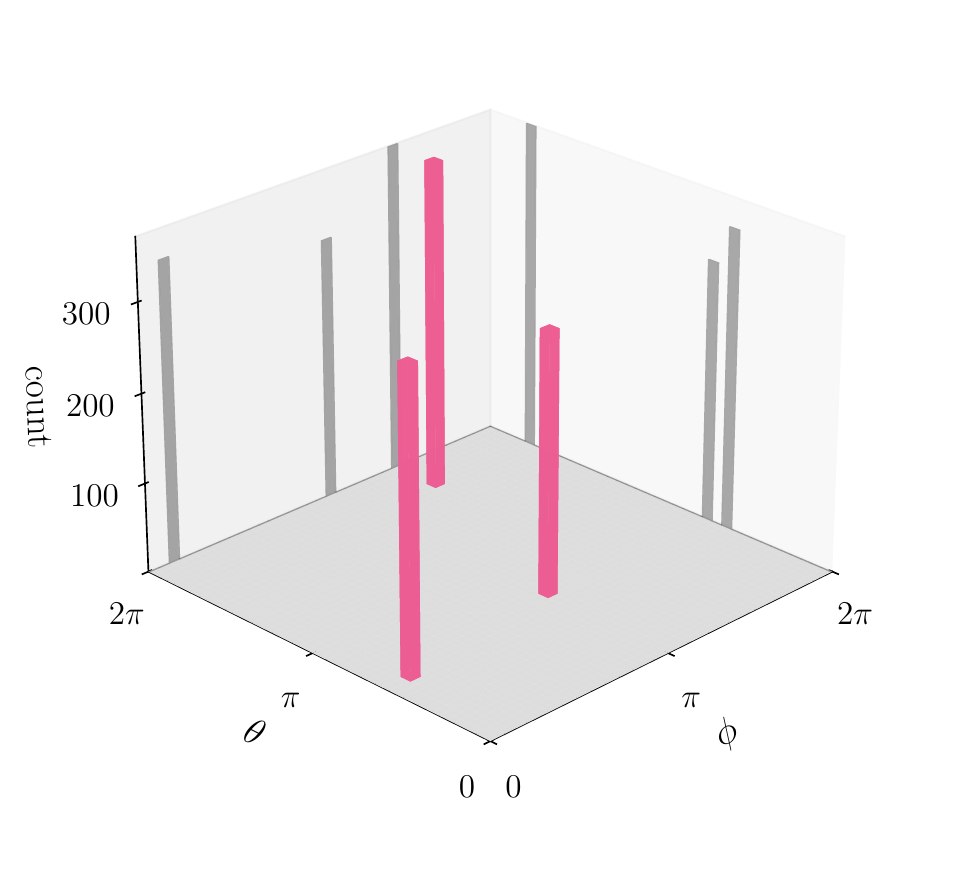}
    \end{subfigure}
    \hfill
    \begin{subfigure}[b]{0.32\linewidth}
        \centering
        \includegraphics[width=\linewidth]{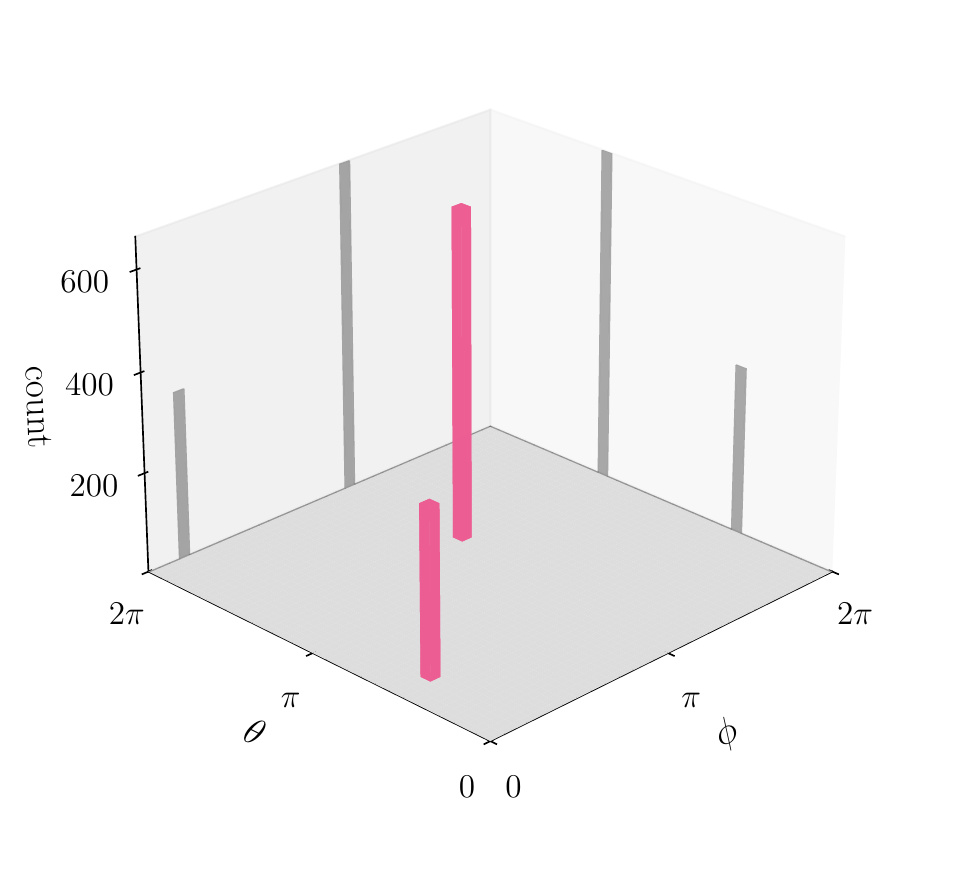}
    \end{subfigure}

    \vspace{0.5em}

    % --- Bottom row: Null MLP ---
    \begin{subfigure}[b]{0.32\linewidth}
        \centering
        \includegraphics[width=\linewidth]{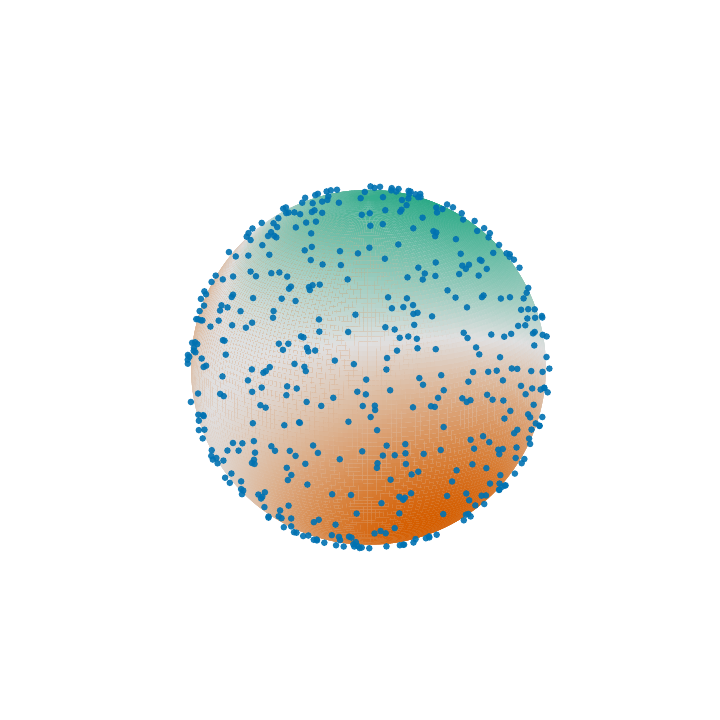}
        \caption{Initial}
    \end{subfigure}
    \hfill
    \begin{subfigure}[b]{0.32\linewidth}
        \centering
        \includegraphics[width=\linewidth]{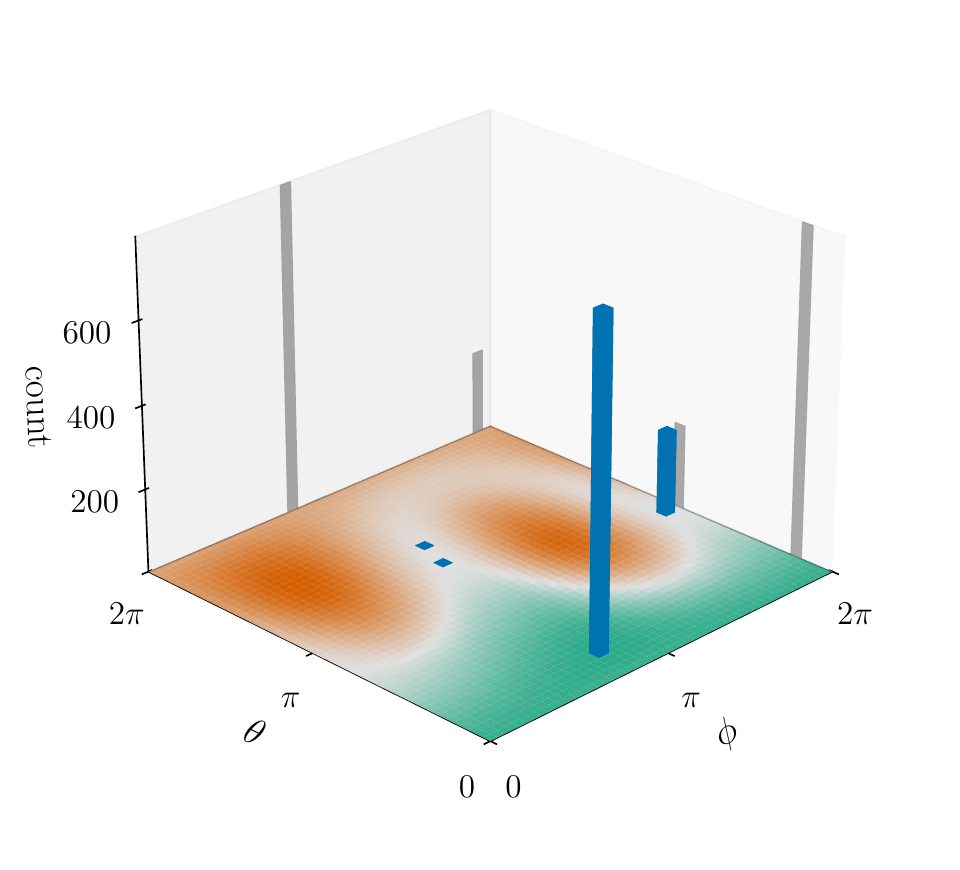}
        \caption{Intermediate}
    \end{subfigure}
    \hfill
    \begin{subfigure}[b]{0.32\linewidth}
        \centering
        \includegraphics[width=\linewidth]{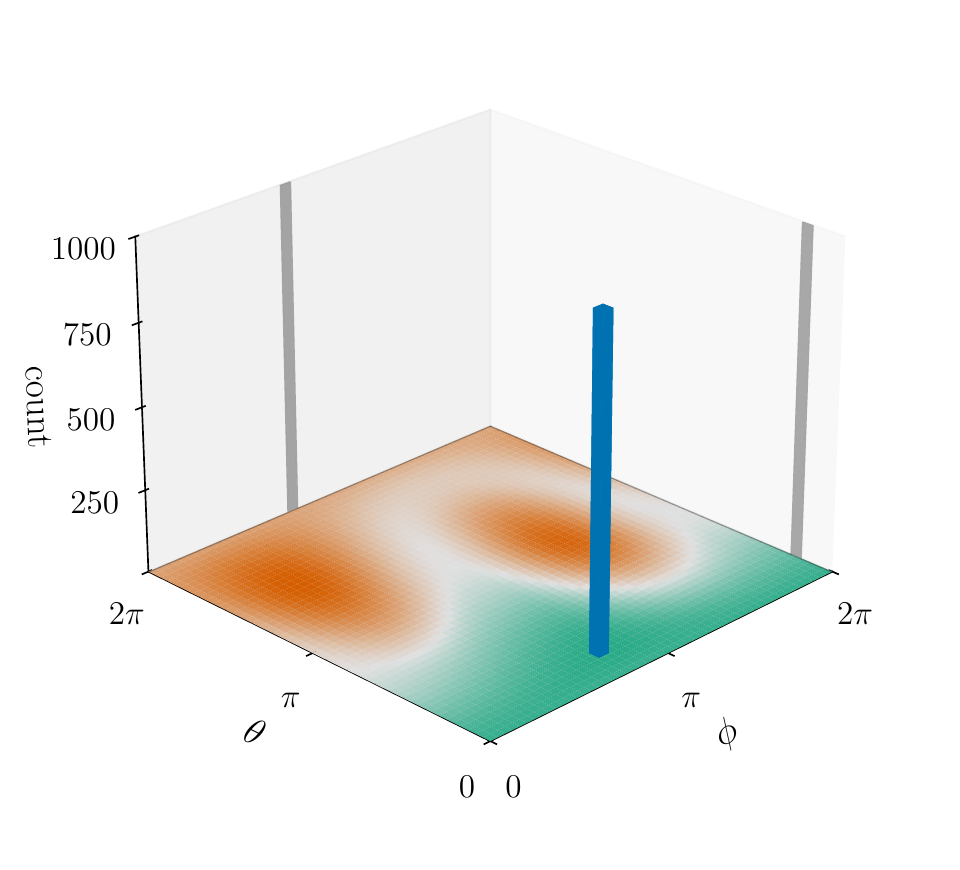}
        \caption{Final}
    \end{subfigure}
\caption{Gradient ascent on $\S^2$ with  $\beta=1$. \textbf{Top row:} pure self-attention. \textbf{Bottom row:} self-attention with ReLU perceptron. An animation is available at \href{https://github.com/antonioalvarezl/2026-MLP-Attention-Energy/blob/main/examples/USAS2.gif}{https://github.com/antonioalvarezl/2026-MLP-Attention-Energy/blob/main/examples/USAS2.gif}.
    } 
\end{figure}

\subsection*{Gradient descent}

We now turn to the minimization/descent dynamics, which is \eqref{eq:WGF-main} with a $-$ sign.
Here pure self-attention is repulsive and promotes spreading of particles, while the perceptron enforces singular stationary configurations. As established in \Cref{prop: min.max} \textbf{(ii)}, this system admits a unique global minimizer $\mu_\ast$ for every choice of parameters, and \Cref{thm: any.d} applies to its stationary points.

We first run gradient descent with a ReLU perceptron: $\sigma(s)=s_+$. \Cref{fig:descent_relu_traj} shows particle trajectories for $\beta \in \{0.05, 5, 50\}$ (recall that the convergence histogram for $\beta=1$ was shown in \Cref{fig:descent_relu_hist_energy}). A visible fraction of the mass remains trapped where the ReLU is identically zero: in those regions, the perceptron vanishes. Numerically, this produces a mixed stationary pattern: atomic clusters in active regions coexisting with frozen components in dead zones, consistent with \Cref{thm: any.d}\textbf{(i)}--\textbf{(iii)}.

\begin{figure}[!h]
    \centering
    \begin{subfigure}[b]{0.32\linewidth}
        \centering
        \includegraphics[width=\linewidth]{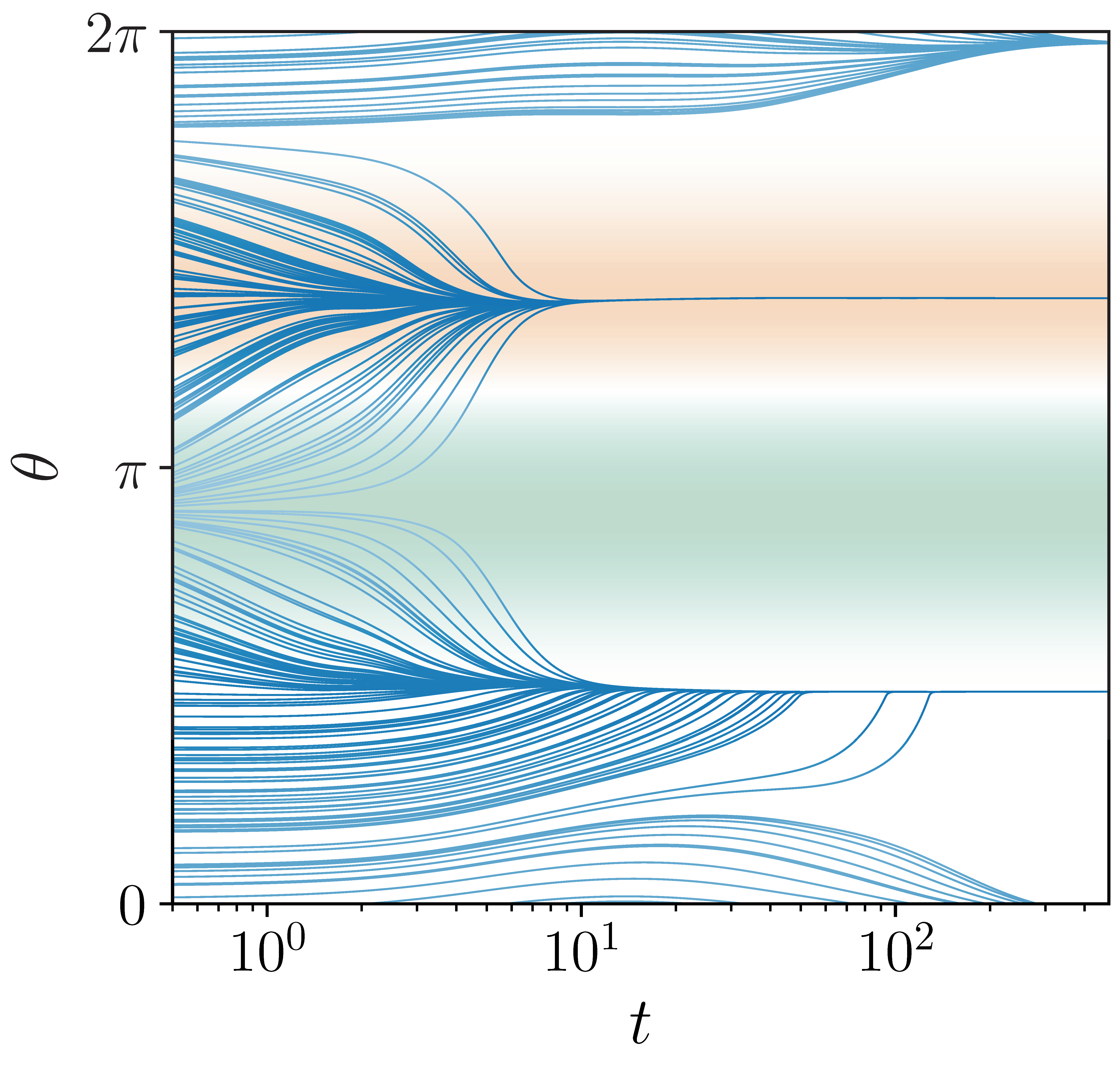}
        \caption{$\beta=0.05$}
    \end{subfigure}
    \hfill
    \begin{subfigure}[b]{0.32\linewidth}
        \centering
        \includegraphics[width=\linewidth]{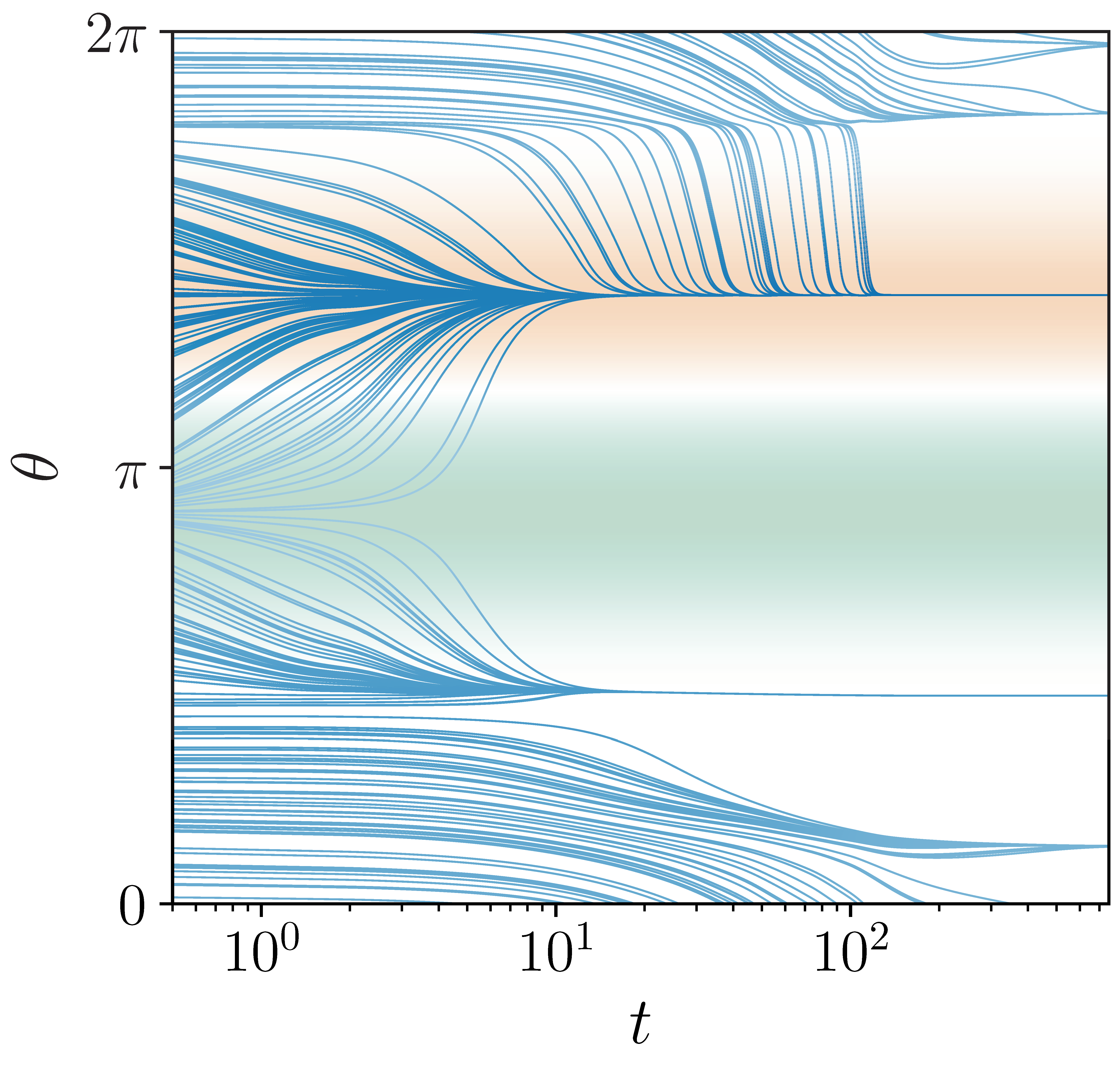}
        \caption{$\beta=5$}
    \end{subfigure}
    \hfill
    \begin{subfigure}[b]{0.32\linewidth}
        \centering
        \includegraphics[width=\linewidth]{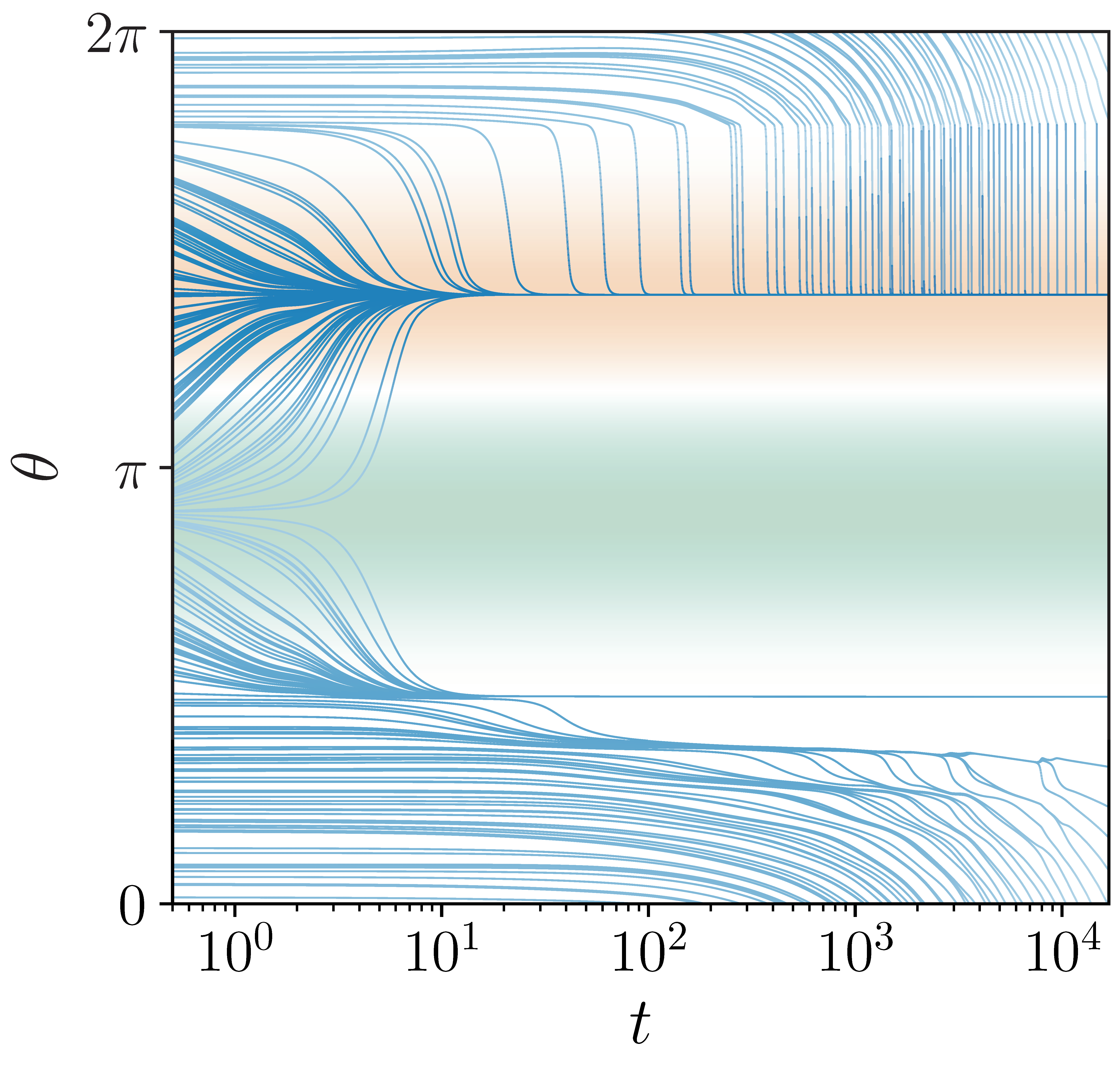}
        \caption{$\beta=50$}
    \end{subfigure}
\caption{Gradient descent on $\mathbb{S}^1$ with ReLU perceptron. %The background shading differentiates the active regions: areas where $a_j \cdot x + b_j \neq 0$ for some $j$ are shown in lighter gray, while the darkest gray regions correspond to the ``dead zones'' where the potential vanishes. 
}
    \label{fig:descent_relu_traj}
\end{figure}

We repeat the same experiment using GeLU. \Cref{fig:descent_gelu_traj} shows representative histograms at convergence. 

\begin{figure}[!h]
    \centering

    \begin{subfigure}[!h]{0.32\linewidth}
        \centering
        \includegraphics[width=\linewidth]{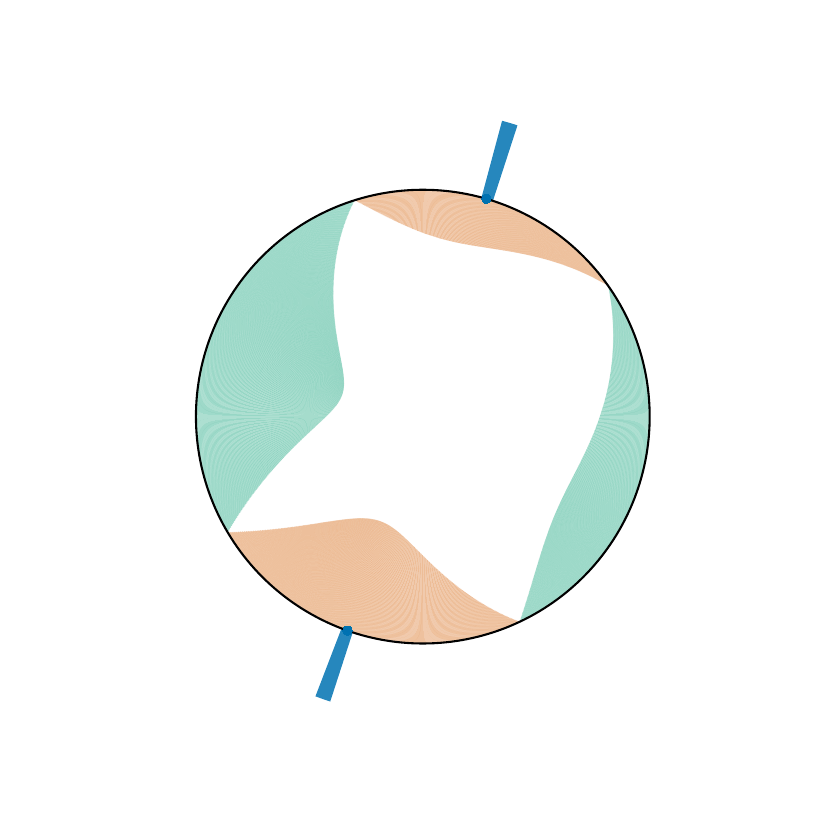}
        \caption{$\beta=0.05$}
    \end{subfigure}
    \hfill 
    \begin{subfigure}[!h]{0.32\linewidth}
        \centering
        \includegraphics[width=\linewidth]{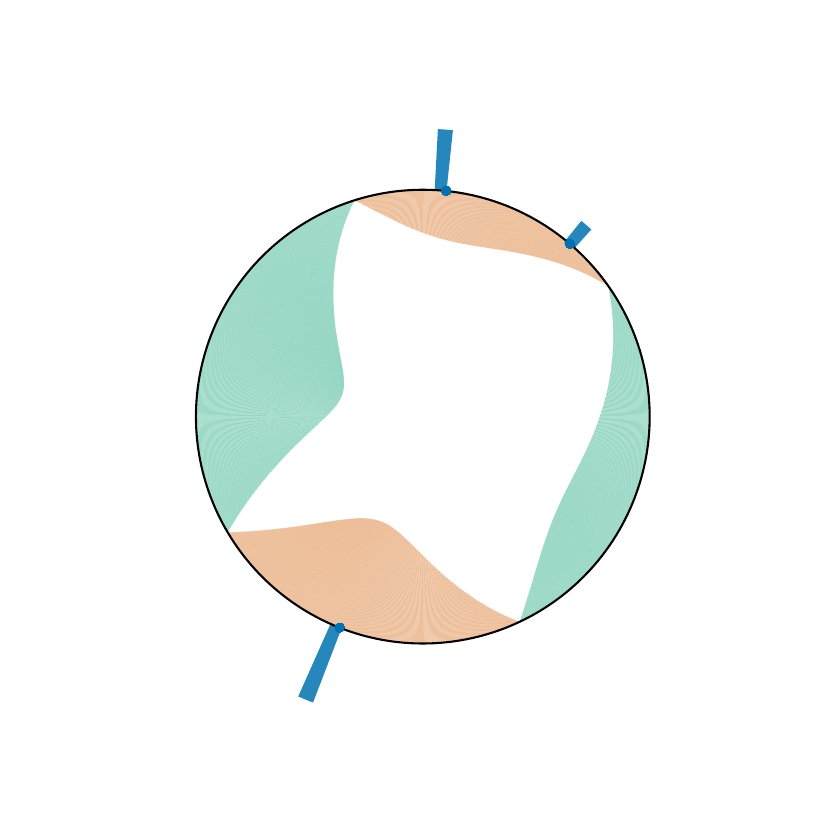}
        \caption{$\beta=5$}
    \end{subfigure}
    \hfill 
    \begin{subfigure}[!h]{0.32\linewidth}
        \centering
        \includegraphics[width=\linewidth]{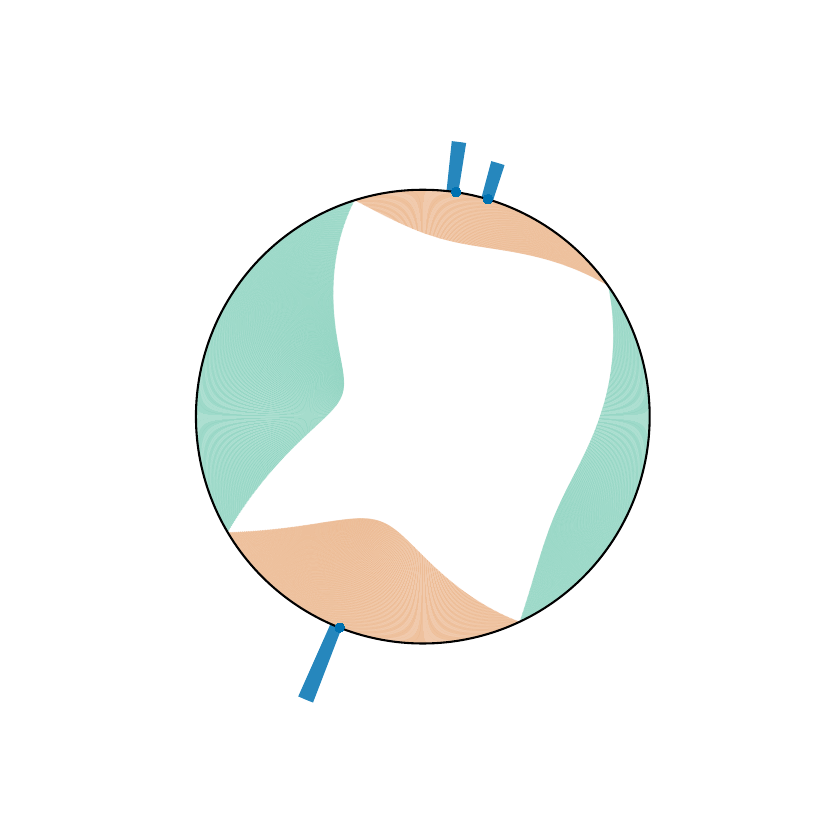}
        \caption{$\beta=50$}
    \end{subfigure}
    \caption{Histograms at final time for gradient descent with GeLU perceptron.}
    \label{fig:descent_gelu_traj}
\end{figure}

The analogous setup on $\S^2$ appears in \Cref{fig:s2_hist_beta_comparison}.

\begin{figure}[!h]
    \centering

  \begin{subfigure}[b]{0.32\linewidth}
        \centering
        \includegraphics[width=\linewidth]{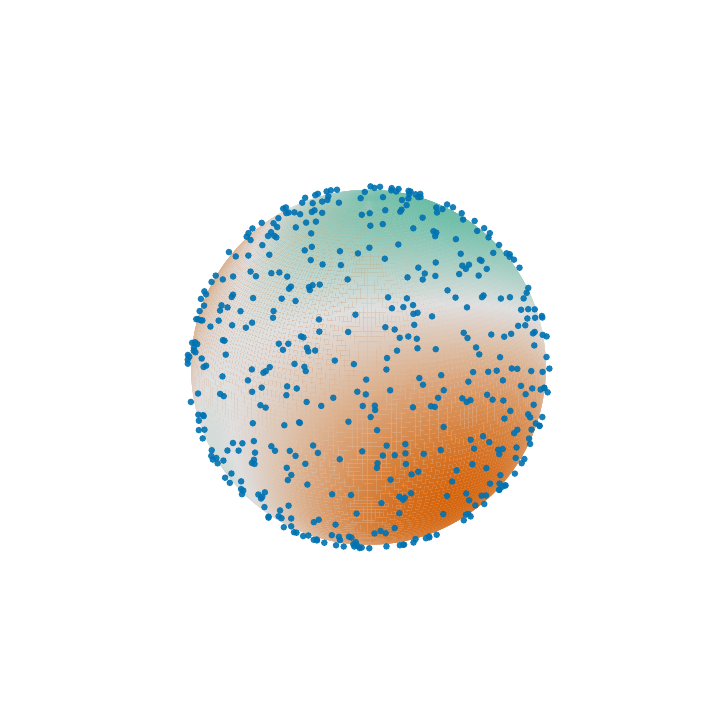}
    %{figures/desc_sphere_histogram_reluinit.pdf}
    \end{subfigure}
    \hfill
    \begin{subfigure}[b]{0.32\linewidth}
        \centering
        \includegraphics[width=\linewidth]{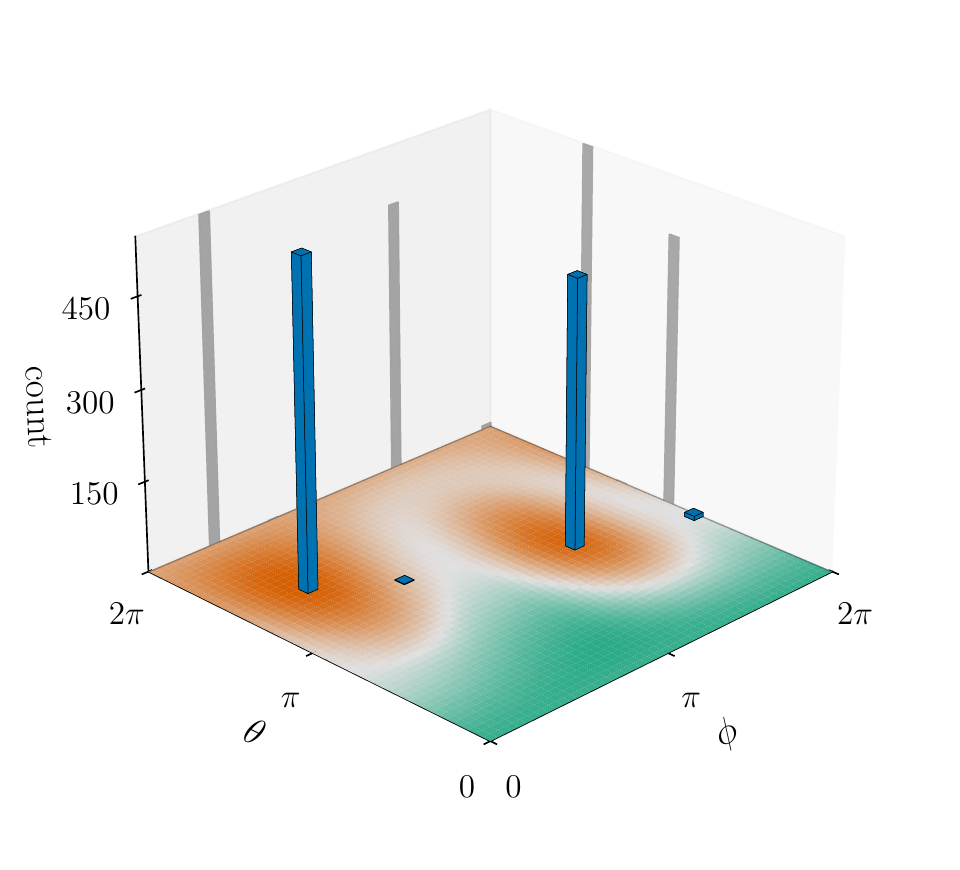}
    \end{subfigure}
    \hfill
    \begin{subfigure}[b]{0.32\linewidth}
        \centering
        \includegraphics[width=\linewidth]{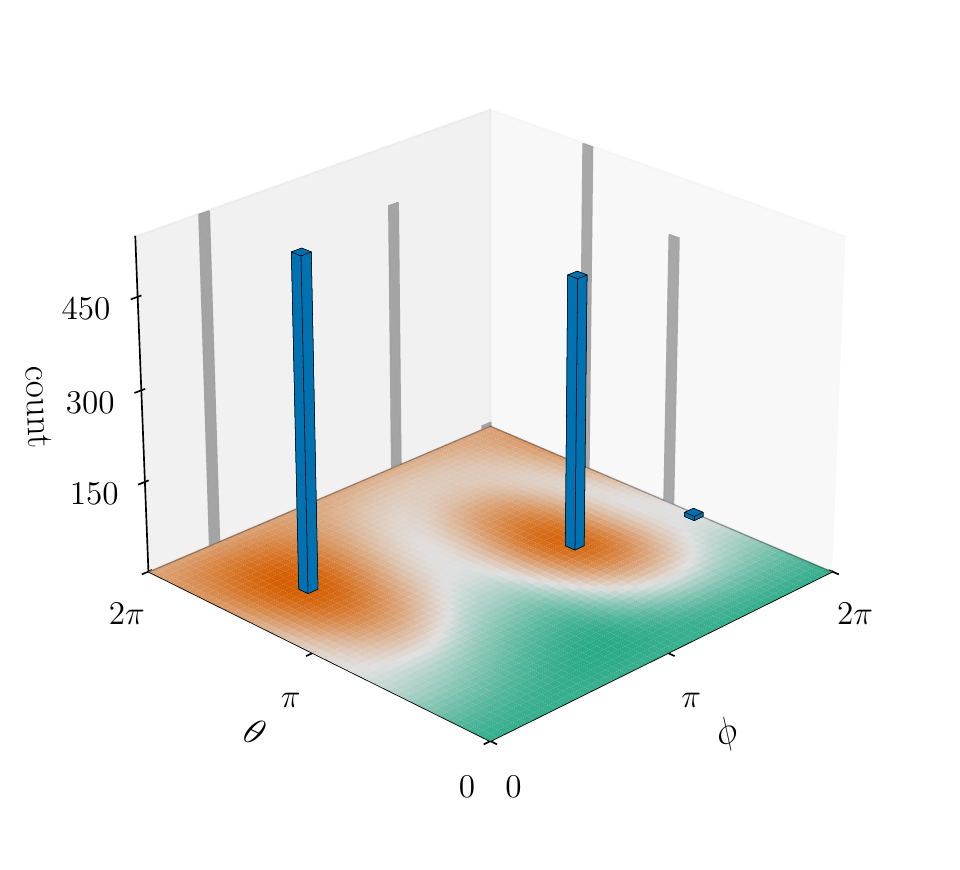}
    \end{subfigure}
    
    \vspace{0.5em}

    \begin{subfigure}[b]{0.32\linewidth}
        \centering
        \includegraphics[width=\linewidth]{figures/init_mlp.pdf}%{figures/desc_sphere_histogram_geluinit.pdf}
    \end{subfigure}
    \hfill
    \begin{subfigure}[b]{0.32\linewidth}
        \centering
        \includegraphics[width=\linewidth]{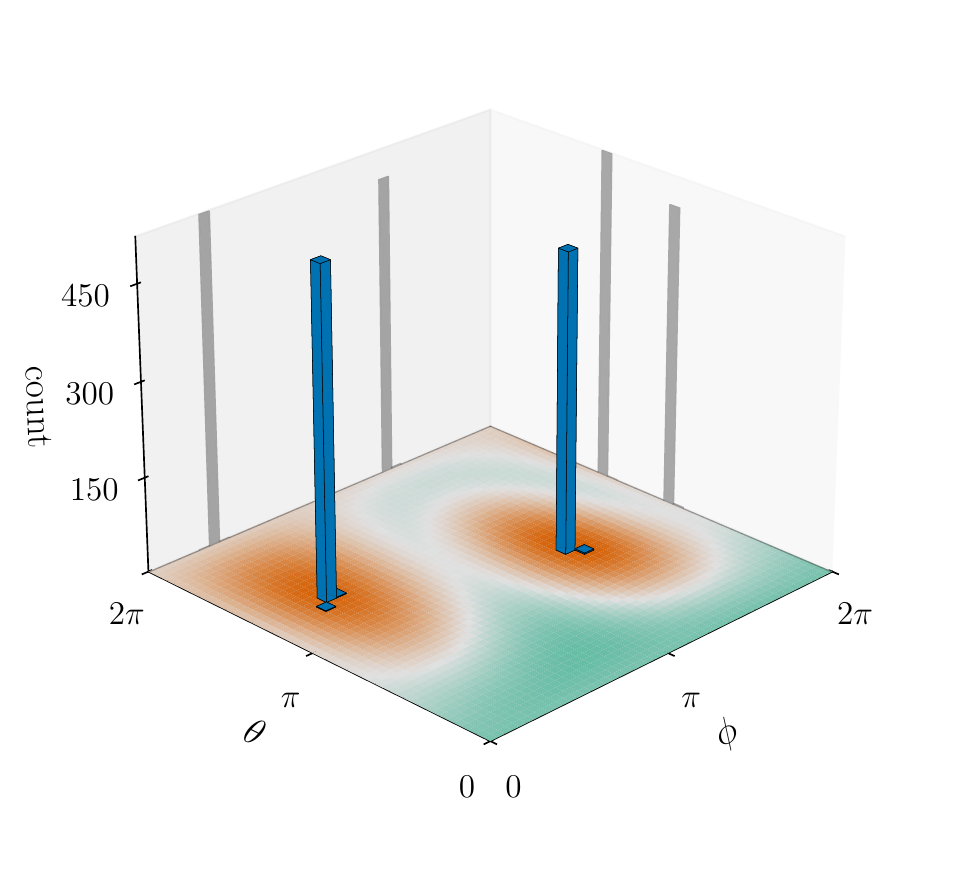}
    \end{subfigure}
    \hfill
    \begin{subfigure}[b]{0.32\linewidth}
        \centering
        \includegraphics[width=\linewidth]{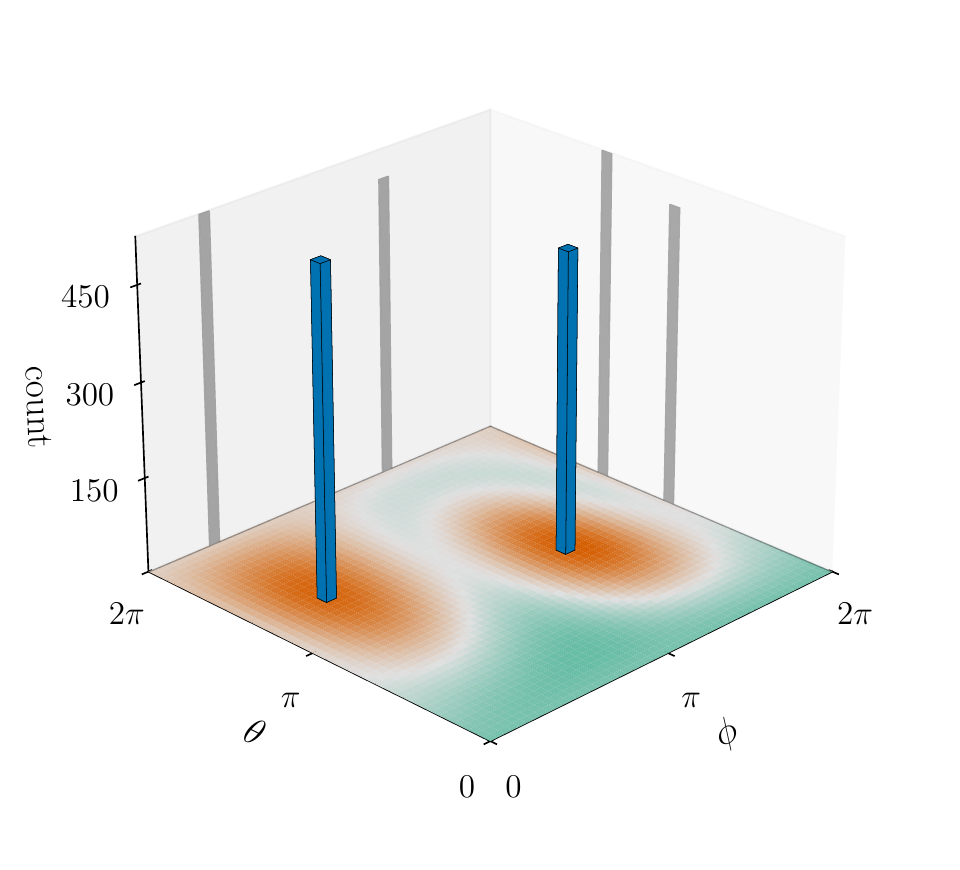}
    \end{subfigure}
    
    \caption{Gradient descent on  $\S^2$ with $\beta=1$. 
    \textbf{Top row:} ReLU perceptron. \textbf{Bottom row:} GeLU perceptron. 
    An animation is available at \href{https://github.com/antonioalvarezl/2026-MLP-Attention-Energy/blob/main/examples/USAdS2.gif}{https://github.com/antonioalvarezl/2026-MLP-Attention-Energy/blob/main/examples/USAdS2.gif}.
    }
    \label{fig:s2_hist_beta_comparison}
\end{figure}

\subsection*{Softmax normalization}

We now study the dynamics governed by \eqref{eq:full-WGF-main}---referred to as SA---across the settings previously considered on $\S^1$ and $\S^2$. \Cref{fig:sa_trajectories} illustrates the particle trajectories on $\S^1$ for three representative values of $\beta$.

\begin{figure}[!h]
    \centering

    \begin{subfigure}[b]{0.32\linewidth}
        \centering
        \includegraphics[width=\linewidth]{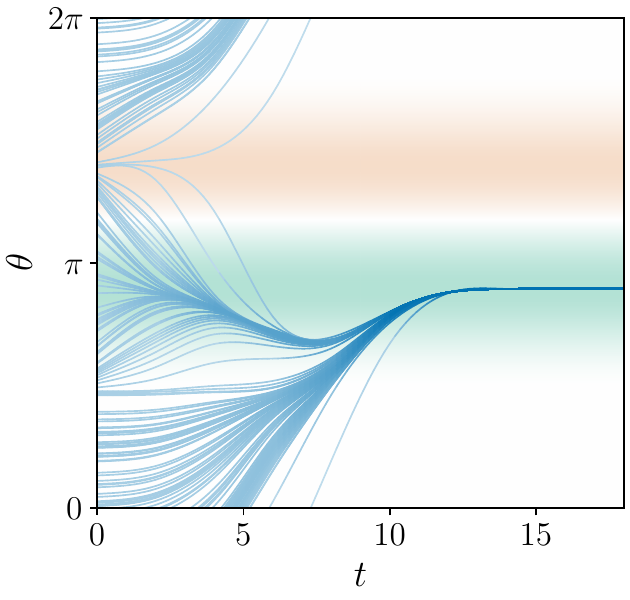}
    \end{subfigure}
    \hfill
    \begin{subfigure}[b]{0.32\linewidth}
        \centering
        \includegraphics[width=\linewidth]{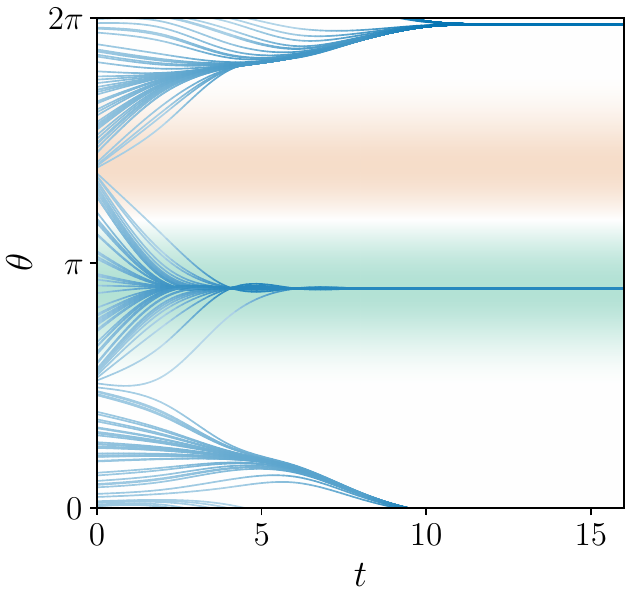}
    \end{subfigure}
    \hfill
    \begin{subfigure}[b]{0.32\linewidth}
        \centering
        \includegraphics[width=\linewidth]{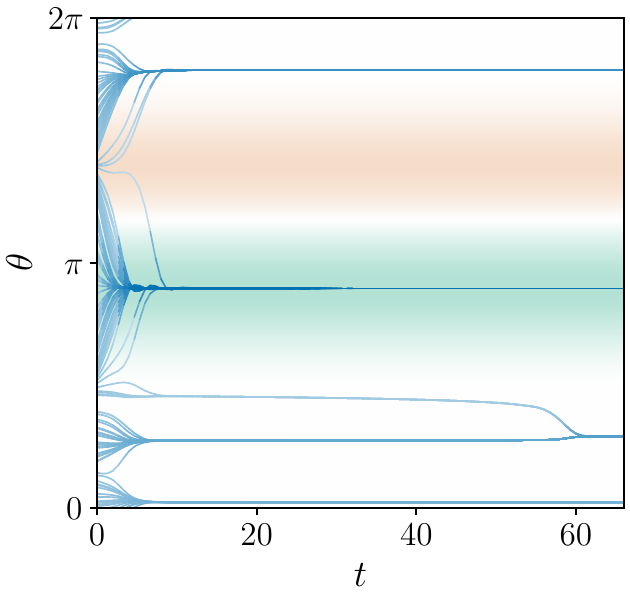}
    \end{subfigure}

    \vspace{0.5em}

    \begin{subfigure}[b]{0.32\linewidth}
        \centering
        \includegraphics[width=\linewidth]{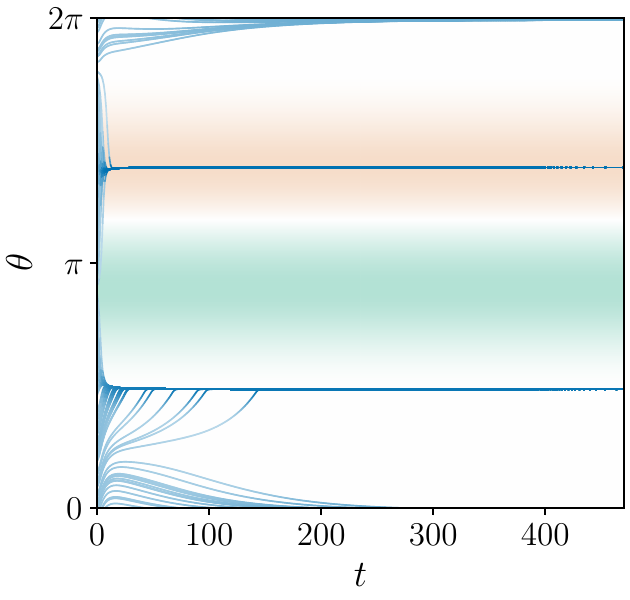}
    \end{subfigure}
    \hfill
    \begin{subfigure}[b]{0.32\linewidth}
        \centering
        \includegraphics[width=\linewidth]{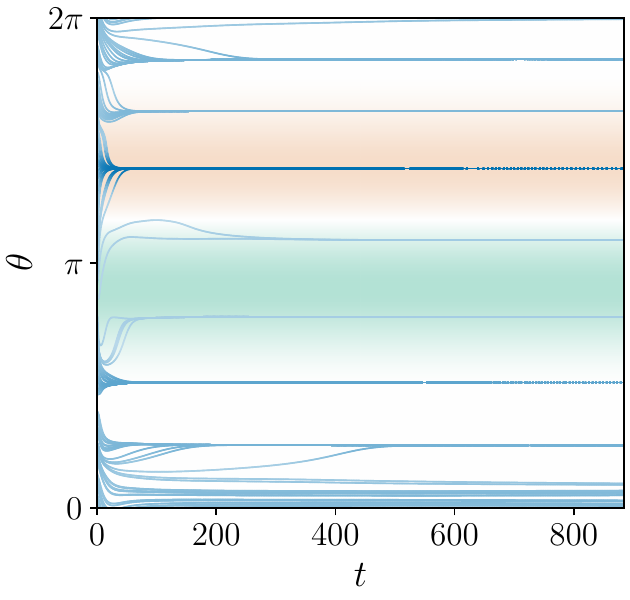}
    \end{subfigure}
    \hfill
    \begin{subfigure}[b]{0.32\linewidth}
        \centering
        \includegraphics[width=\linewidth]{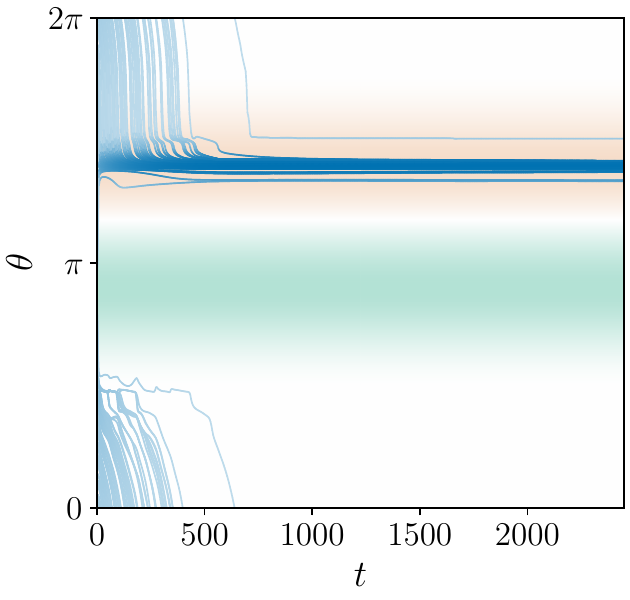}
    \end{subfigure}

    \vspace{0.5em}

    \begin{subfigure}[b]{0.32\linewidth}
        \centering
        \includegraphics[width=\linewidth]{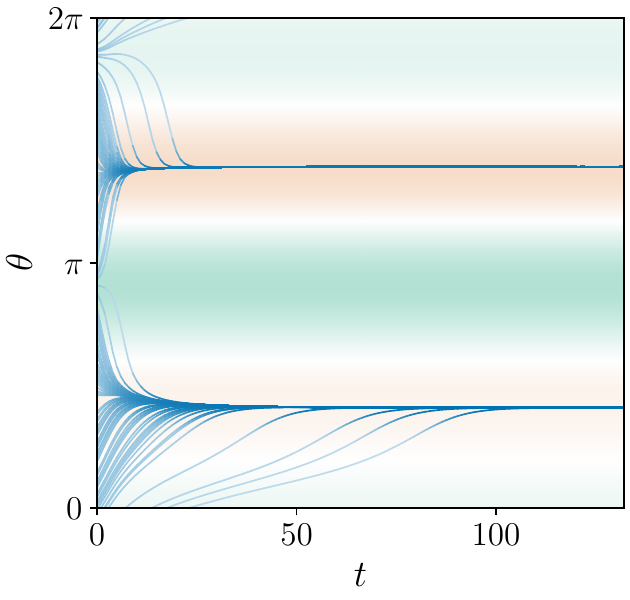}
    \end{subfigure}
    \hfill
    \begin{subfigure}[b]{0.32\linewidth}
        \centering
        \includegraphics[width=\linewidth]{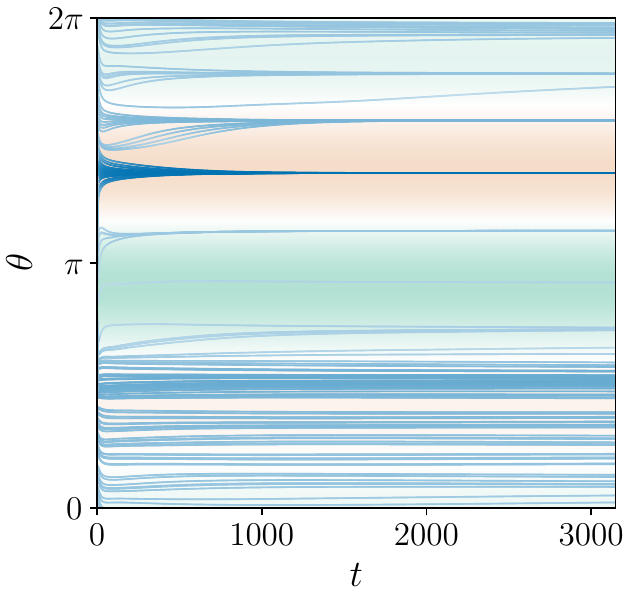}
    \end{subfigure}
    \hfill
    \begin{subfigure}[b]{0.32\linewidth}
        \centering
        \includegraphics[width=\linewidth]{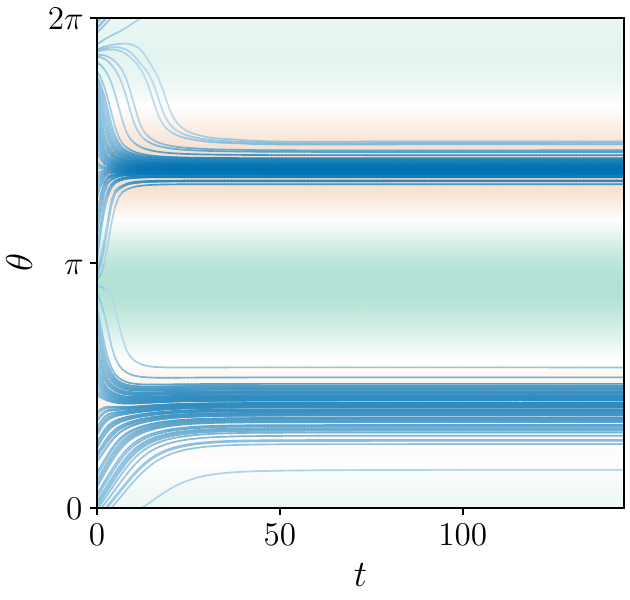}
    \end{subfigure}

    \caption{Trajectories following softmax-normalized attention.
    \textbf{Top row:} gradient ascent with ReLU perceptron.
    \textbf{Middle row:} gradient descent with ReLU perceptron.
    \textbf{Bottom row:} gradient descent with GeLU perceptron.}
    \label{fig:sa_trajectories}
\end{figure}

\subsection*{Scaling of support size}

We count the number of clusters (atoms) at convergence for every setup considered across the swept values of $\beta$. Results are summarized in \Cref{fig:cluster_scaling}. For pure self-attention (setup S1-0), the reported cluster counts correspond to the number of atoms in the metastable configuration, rather than the long-time limit. Indeed, these dynamics are known to collapse the initial measure to a single Dirac mass \cite{geshkovski2025mathematical,chen2025quantitative, geshkovski2024measure}.

The same caveat applies when adding a perceptron (setup~S1): for the fixed $\upvartheta$ used throughout, the perceptron potential exhibits a single local maximum (cf.\ \Cref{fig:atoms_three_betas}), so $\mathsf{E}_{\beta,\upvartheta}$ has a unique maximizer by \Cref{prop: min.max}. Yet, we still likely observe metastable configurations with more than one atom on computational time horizons. %While the perceptron drift does not remove metastability, it biases transport toward its preferred basin and thereby reduces the number of metastable atoms, most notably on $\S^2$ (e.g., $25$ clusters without perceptron versus $2$ with perceptron at $\beta=50$). 
%Indeed, metastable states will generically satisfy that $\mathsf{u}_\upvartheta \approx 0$, and the its zero set is typically smaller than $\S^{d-1}$ or even lower dimensional. Therefore, the presence of the perceptron tends to localize the metastable states as well.

\begin{figure}[!h]
    \centering

\includegraphics[width=\linewidth]{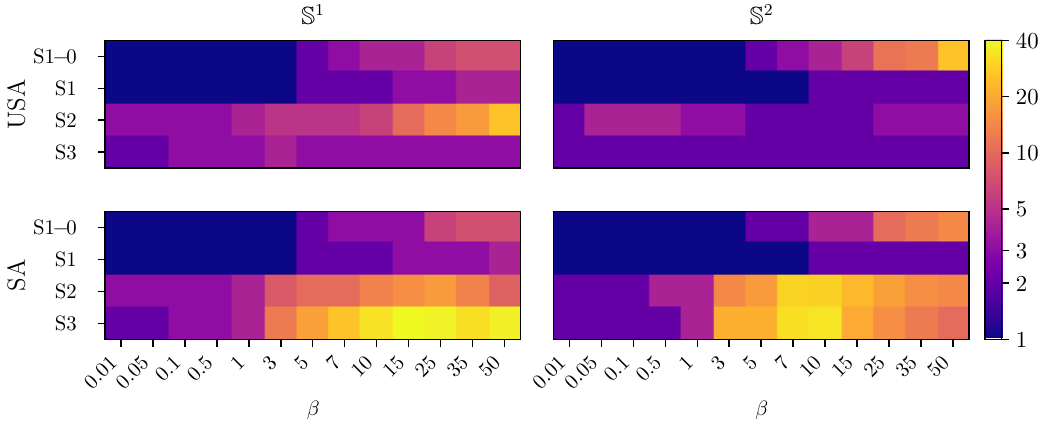}
    \caption{Number of clusters at convergence across setups. S1/S1-0: Ascent with/without ReLU perceptron. S2: Descent with ReLU perceptron. S3: Descent with GeLU perceptron.}
    \label{fig:cluster_scaling}
\end{figure}

\begin{comment}

\subsection{Convergence time vs. perceptron scale and attention normalization}

To assess how convergence depends on the relative weight between the perceptron drift and the attention mechanism in gradient descent, we fix $\beta=1$ and sweep a common weight scale $|\omega_j|$ for a fixed perceptron realization. For each scale, we record the time required for the system to reach stationarity under unnormalized (USA, \eqref{eq:WGF-main}) and normalized (SA, \eqref{eq:full-WGF-main}) self-attention using a GeLU perceptron.

\begin{figure}[!h]
    \centering
    \includegraphics[width=\columnwidth]{}
    \caption{Convergence time versus the MLP scale for fixed $\beta=1$ under USA (blue) and SA (olive green). The horizontal axis denotes the fixed absolute value shared by all $\omega_j$: here we first sample $\omega_j \sim \mathcal{N}(0,1)$ and then rescale each component $j$ so that $|\omega_j|$ equals the value on the axis (with the sign retained).}
    \label{fig:mlp_scale_stop_time}
\end{figure}    
\end{comment}

\subsection*{Sensitivity to initial configuration}

We run $10$ independent simulations at $\beta=10$, using GeLU, with identical weights $\upvartheta$ and resampled uniform initializations. \Cref{fig:initializations} compares three settings: (i) pure self-attention (no perceptron), (ii) gradient ascent with a perceptron, and (iii) descent with the same perceptron. In the absence of a perceptron, the cluster locations are highly seed-dependent. In contrast, the perceptron drift breaks this symmetry, anchoring the clusters to specific locations that are largely independent of the initial configuration.

\begin{figure}[ht!]
    \centering
    \begin{subfigure}[!h]{0.32\linewidth}
        \centering
        \includegraphics[width=\linewidth]{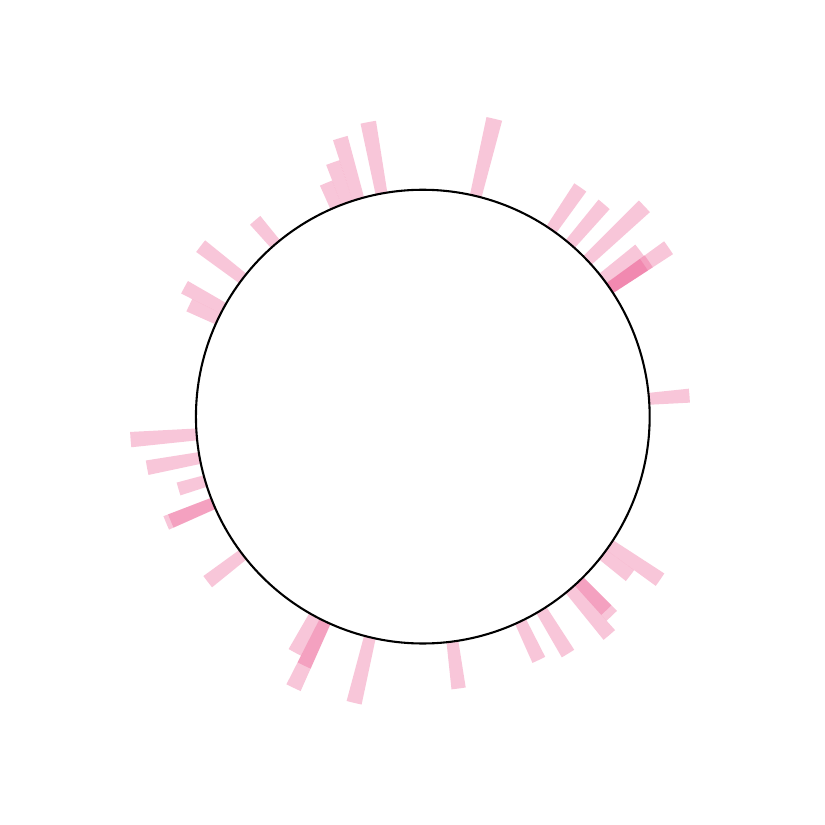}
    \end{subfigure}
    \hfill
    \begin{subfigure}[!h]{0.32\linewidth}
        \centering
        \includegraphics[width=\linewidth]{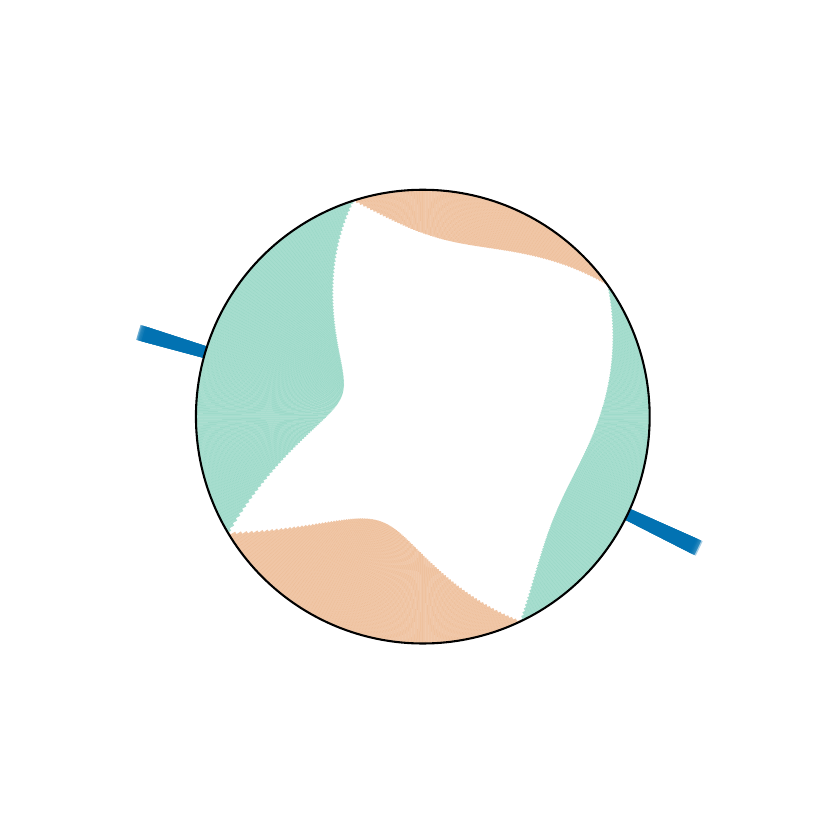}
    \end{subfigure}
    \hfill
    \begin{subfigure}[!h]{0.32\linewidth}
        \centering
        \includegraphics[width=\linewidth]{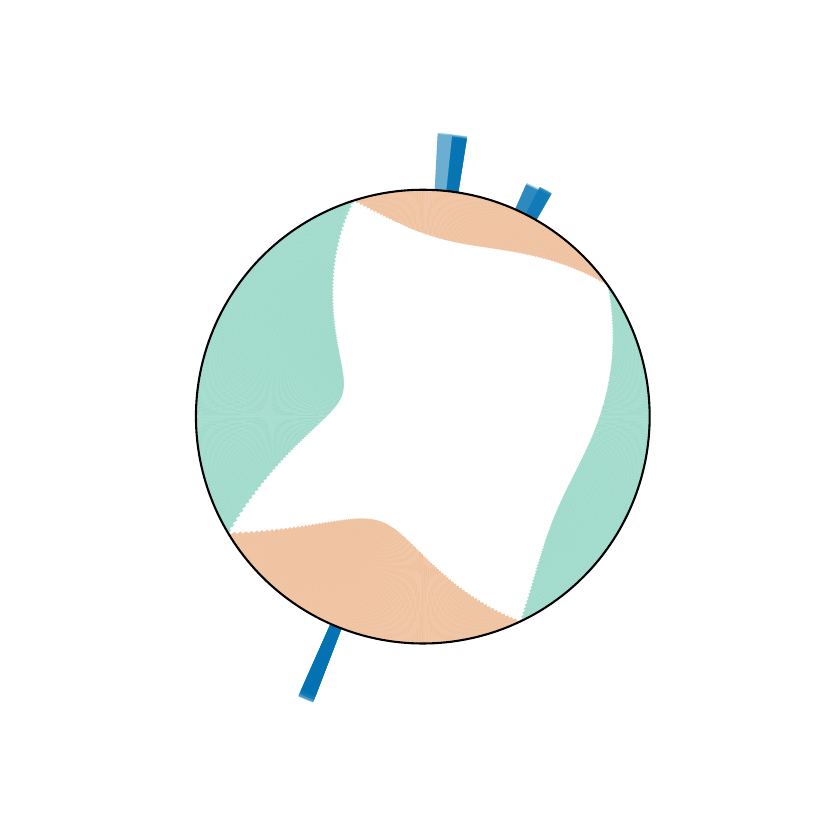}
    \end{subfigure}
    \caption{Superposition of stationary measures from $10$ independent uniform initializations with $\beta=10$ and $\sigma=$ GeLU.
    \textbf{Left:} pure self-attention descent yields seed-dependent cluster locations.
    \textbf{Middle:} the perceptron drift in ascent breaks the symmetry and selects seed-independent clusters.
    \textbf{Right:} in descent, the perceptron still anchors a seed-independent stationary support, which is less concentrated and may include spurious atoms.}
    \label{fig:initializations}
\end{figure}

\subsection*{Higher dimensions}\label{s: supp.sims}

We now present numerical experiments for higher dimensions, specifically $d\in\{4,5,7,10,20,50\}$. We update the cluster identification rule so that particles are grouped if their pairwise geodesic distance is at most $\min\{1/(2\sqrt{\beta}),\ \pi/(2d)\}$. This heuristic choice accounts for concentration of measure on high-dimensional spheres, where random points tend to be nearly orthogonal. 

%The inclusion of the term $\frac{\pi}{2d}$ accounts for the concentration of measure on high-dimensional spheres. As $d$ increases, random points tend to concentrate at large geodesic distances; therefore, a fixed or solely $\beta$-dependent threshold would become too permissive and fail to distinguish clusters correctly.

\Cref{fig:cluster_scaling_highdim} and \Cref{fig:all_masses_highdim} summarize the cluster counts and their respective masses across dimensions $d \in \{4, 5, 7, 10, 20, 50\}$. Notably, we observe empirically that the cluster masses in these higher dimensions remain strictly below the same numerical upper bound derived for $d=2$ in \Cref{thm: bound}.

\begin{figure}[h!]
    \centering
    \includegraphics[width=\linewidth]{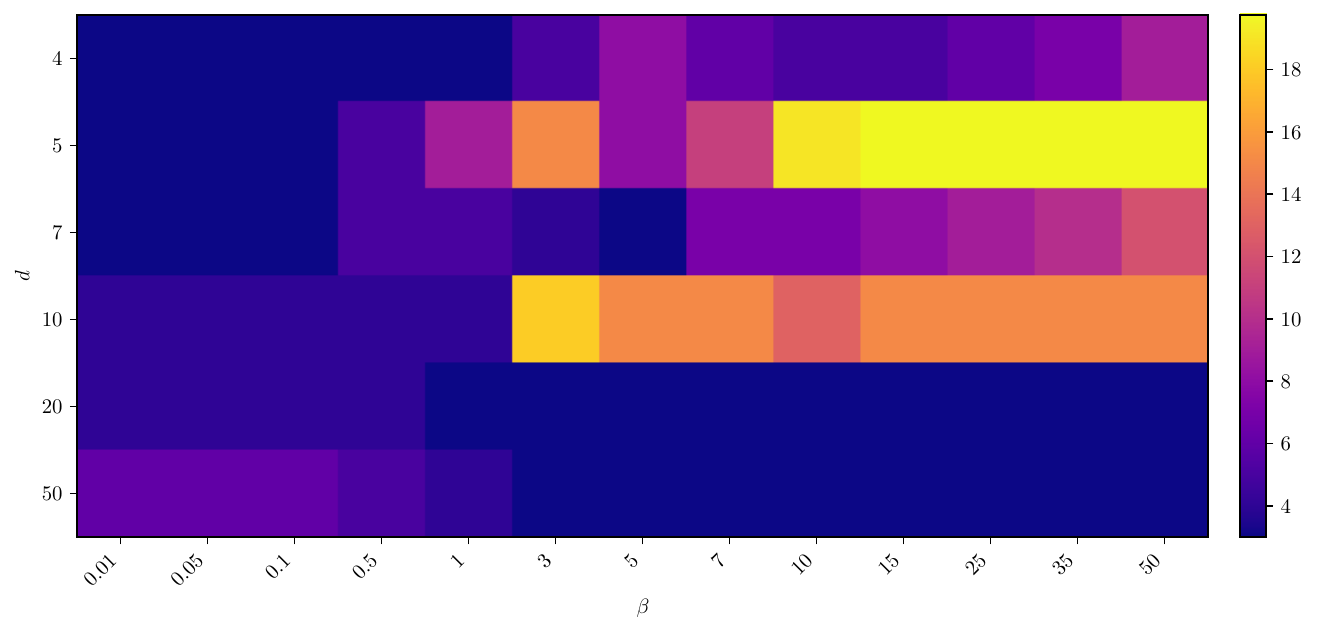}
    \caption{Number of clusters at convergence across dimensions $d\in\{4,5,7,10,20,50\}$ for gradient descent using unnormalized self-attention with a ReLU perceptron.}
    \label{fig:cluster_scaling_highdim}
\end{figure}

\begin{figure}[t!]
    \centering
    % --- Top Row: d=4, 5, 7 ---
    \begin{subfigure}[b]{0.32\linewidth}
        \centering
        \includegraphics[width=\linewidth]{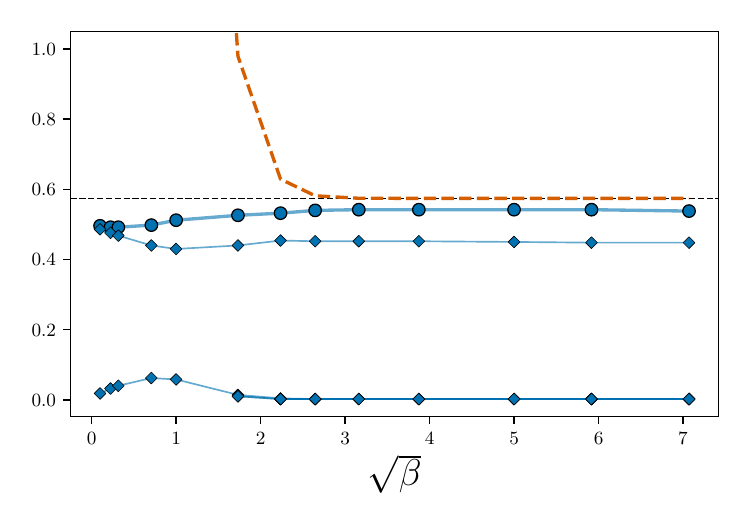}
    \end{subfigure}
    \hfill
    \begin{subfigure}[b]{0.32\linewidth}
        \centering
        \includegraphics[width=\linewidth]{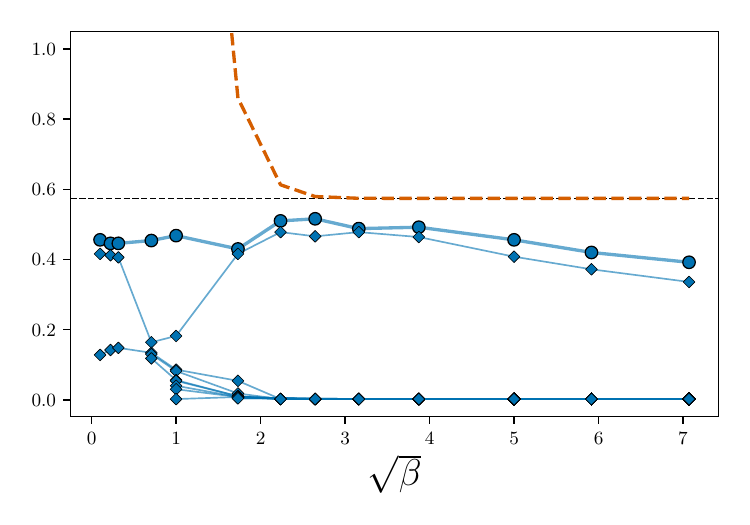}
    \end{subfigure}
    \hfill
    \begin{subfigure}[b]{0.32\linewidth}
        \centering
        \includegraphics[width=\linewidth]{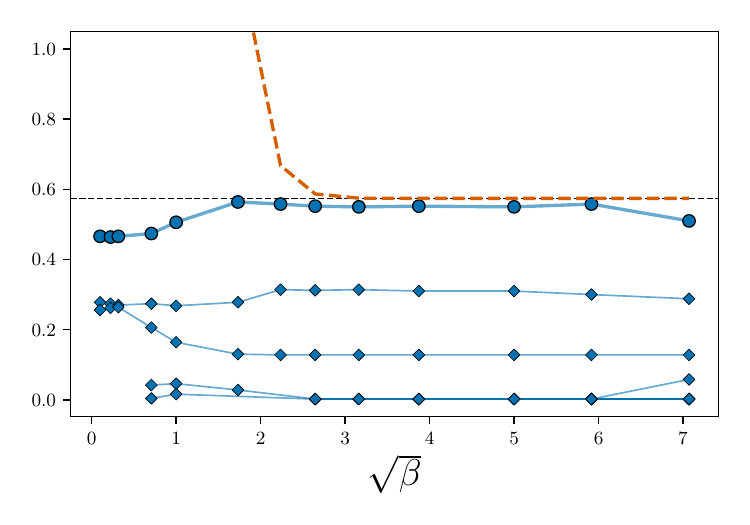}
    \end{subfigure}

    \vspace{0.5em}

    % --- Bottom Row: d=10, 20, 50 ---
    \begin{subfigure}[b]{0.32\linewidth}
        \centering
        \includegraphics[width=\linewidth]{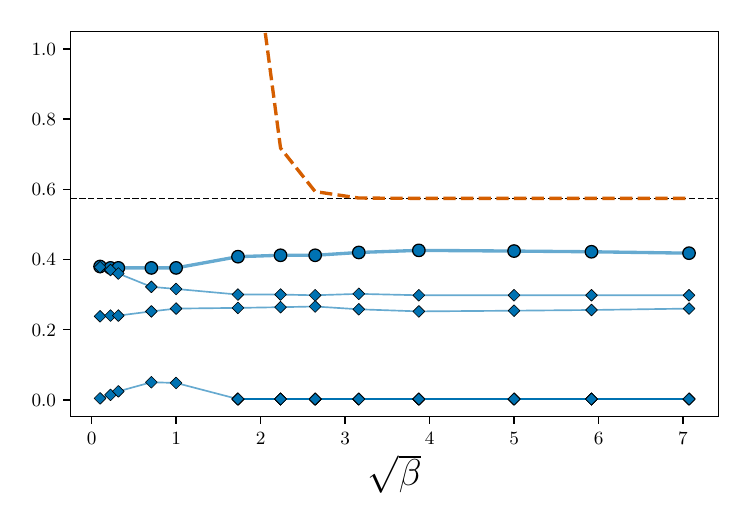}
    \end{subfigure}
    \hfill
    \begin{subfigure}[b]{0.32\linewidth}
        \centering
        \includegraphics[width=\linewidth]{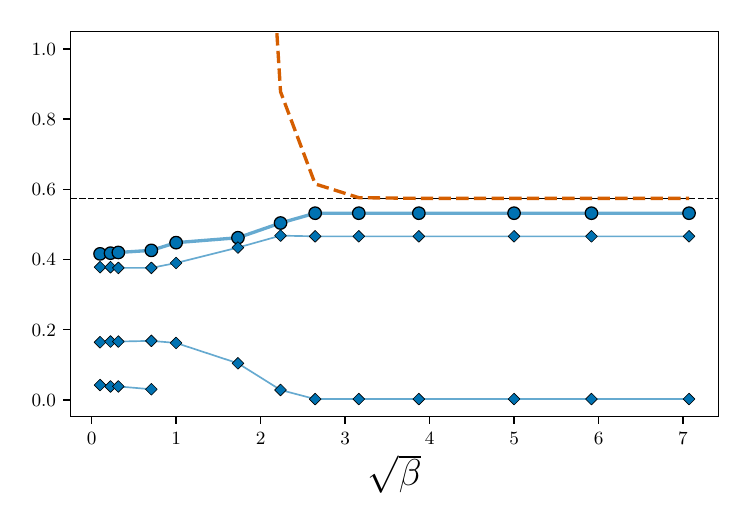}
    \end{subfigure}
    \hfill
    \begin{subfigure}[b]{0.32\linewidth}
        \centering
        \includegraphics[width=\linewidth]{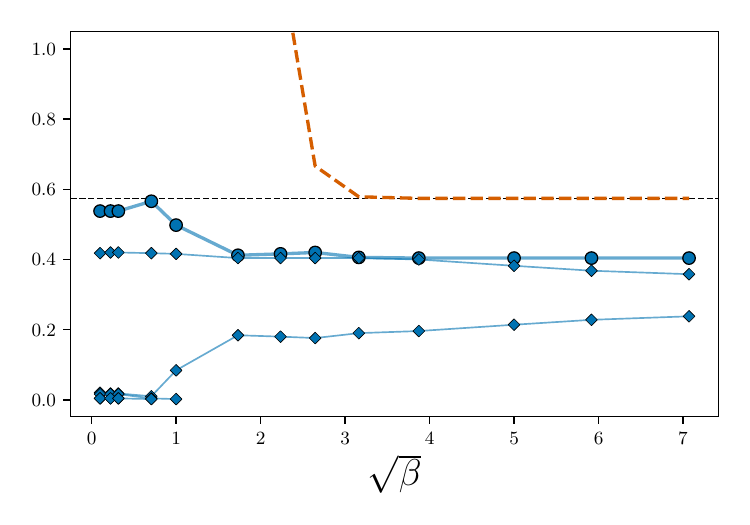}
    \end{subfigure}

    \caption{Cluster masses (in {\color{gblue}blue}, the largest being the thickest) at final time across $\sqrt{\beta}$. 
    The horizontal and \textcolor{red}{red} dashed lines represent the numerical term and the full upper bound in \eqref{eq: conclusion}, respectively.
    \textbf{Top row:} Gradient descent with ReLU ($d=4, 5, 7$).
    \textbf{Bottom row:} Same setup for higher dimensions ($d=10, 20, 50$).
    (See \Cref{sec:numerics} for setup).}
    \label{fig:all_masses_highdim}
\end{figure}

\section{Proofs}

\subsection{Preliminaries}

We denote the spherical harmonics in $L^2(\sigma_d)$ by $\{Y_{j\ell}\}_{\substack{j\ge0\\ 1\le \ell\le N_j}}$ where the dimension $N_j$ is given by
\begin{equation*} \label{eq:spharm}
N_j\coloneqq\binom{d+j-1}{j}-\binom{d+j-3}{j-2}.
\end{equation*}
Under the standard convention that $\binom{n}{k}=0$ for $k<0$, we have $N_0=1$ and $N_1=d$.

By the Funk--Hecke identity, the spherical harmonics are precisely the eigenfunctions:
\begin{equation}\label{eq:FH-basic}
\int e^{\beta x\cdot y}\,Y_{j\ell}(x)\diff\sigma_d(x) = \lambda_j(\beta)\,Y_{j\ell}(y).
\end{equation}
For $\beta > 0$, the eigenvalues $\lambda_j(\beta)$ are strictly positive and explicitly given by
$$\lambda_j(\beta) \coloneqq \Gamma\left(\frac{d}{2}\right) \left(\frac{2}{\beta}\right)^{\frac{d}{2}-1} I_{j+\frac{d}{2}-1}(\beta), \qquad j\geq 0,$$
where $I_\alpha$ is the modified Bessel function of the first kind of order $\alpha$.

\begin{lemma}\label{lem: quadpol}
For any $\beta>0$, the map $\mu \in \mathcal P(\S^{d-1})\mapsto f^\mu\in C^\infty(\S^{d-1})$, defined by
\begin{equation*}
f^\mu(x) \coloneqq \int e^{\beta x\cdot y}\diff \mu(y),
\end{equation*}
is injective. Moreover, if $f^\mu = \sum_{j=0}^{\infty} \sum_{\ell\in\llbracket1,N_j\rrbracket} \widehat f^\mu_{j\ell}\,Y_{j\ell}$ is its spherical harmonic expansion, the following properties hold:
\begin{enumerate}
\item[\textbf{(i)}] $\mu$ is absolutely continuous with respect to $\sigma_d$ with density $\frac{\diff\mu}{\diff\sigma_d} \in L^2(\sigma_d)$ if and only if
\begin{equation*}
\sum_{j=0}^{\infty}\sum_{\ell\in\llbracket1,N_j\rrbracket}\left|\frac{\widehat f^\mu_{j\ell}}{\lambda_j(\beta)}\right|^2 < \infty.
\end{equation*}
In this case, the density is given by the $L^2(\sigma_d)$-convergent series
\begin{equation}\label{eq:degreek}
\frac{\diff\mu}{\diff\sigma_d}(x) = \sum_{j=0}^{\infty}\sum_{\ell\in\llbracket1,N_j\rrbracket}\frac{\widehat f^\mu_{j\ell}}{\lambda_j(\beta)}Y_{j\ell}(x).
\end{equation}
In particular, $f^\mu$ is a polynomial of degree  $k$ if and only if $\mu$ has a density that is a polynomial of degree $k$.

\item[\textbf{(ii)}] $f^\mu$ is an even function if and only if $\mu(A)=\mu(-A)$ for every Borel set $A\subset\S^{d-1}$.
\end{enumerate}
\end{lemma}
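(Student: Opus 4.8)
The plan is to diagonalize the kernel $e^{\beta x\cdot y}$ in the spherical-harmonic basis. The cornerstone is the plane-wave (Funk--Hecke/Gegenbauer) expansion
\[
e^{\beta x\cdot y}=\sum_{j\ge0}\lambda_j(\beta)\sum_{\ell\in\llbracket1,N_j\rrbracket}Y_{j\ell}(x)\,Y_{j\ell}(y),\qquad x,y\in\S^{d-1},
\]
obtained from the Gegenbauer expansion of $t\mapsto e^{\beta t}$ together with the addition theorem for $\{Y_{j\ell}\}$; the precise coefficient is $\lambda_j(\beta)=(2\pi)^{d/2}\beta^{1-d/2}I_{j+d/2-1}(\beta)>0$, positivity being immediate from the nonnegative Taylor coefficients of the modified Bessel function. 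Since $\lambda_j(\beta)N_j$ decays super-exponentially in $j$, the series converges absolutely and uniformly on $\S^{d-1}\times\S^{d-1}$, so $f$ is smooth (indeed real-analytic) and we may integrate term by term: for $\mu\in\mathcal P(\S^{d-1})$,
\[
f(x)=\int e^{\beta x\cdot y}\diff\mu(y)=\sum_{j,\ell}\lambda_j(\beta)\,\widehat\mu_{j\ell}\,Y_{j\ell}(x),\qquad \widehat\mu_{j\ell}\coloneqq\int Y_{j\ell}\diff\mu,
\]
so the spherical-harmonic coefficients of $f$ are $\widehat f_{j\ell}=\lambda_j(\beta)\widehat\mu_{j\ell}$, equivalently $\widehat\mu_{j\ell}=\widehat f_{j\ell}/\lambda_j(\beta)$ because $\lambda_j(\beta)\neq0$.

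Injectivity follows at once: $f$ determines all generalized moments $\widehat\mu_{j\ell}$, and a finite (signed) Borel measure on $\S^{d-1}$ is determined by $\{\int Y_{j\ell}\diff\mu\}_{j,\ell}$, since these span the polynomials, which are sup-norm dense in $C(\S^{d-1})$ by Stone--Weierstrass. For \textbf{(ii)}, using $Y_{j\ell}(-x)=(-1)^jY_{j\ell}(x)$: $f$ is even iff $\widehat f_{j\ell}=0$ for all odd $j$, iff $\widehat\mu_{j\ell}=0$ for all odd $j$. The antipodal pushforward $\mu^-(A)\coloneqq\mu(-A)$ satisfies $\widehat{\mu^-}_{j\ell}=(-1)^j\widehat\mu_{j\ell}$, so this is equivalent to $\mu=\mu^-$ by injectivity, i.e. $\mu(A)=\mu(-A)$ for every Borel $A$ (alternatively, note directly that $f(-x)=\int e^{\beta x\cdot y}\diff\mu^-(y)$ and invoke injectivity).

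For \textbf{(i)}, the ``only if'' direction is Parseval: if $\diff\mu=\rho\diff\sigma_d$ with $\rho\in L^2(\sigma_d)$ then $\widehat\rho_{j\ell}=\int Y_{j\ell}\rho\diff\sigma_d=\widehat\mu_{j\ell}=\widehat f_{j\ell}/\lambda_j(\beta)$, whence $\sum_{j,\ell}|\widehat f_{j\ell}/\lambda_j(\beta)|^2=\|\rho\|_{L^2(\sigma_d)}^2<\infty$, and the same computation identifies the density as in \eqref{eq:degreek}. Conversely, if $\sum_{j,\ell}|\widehat f_{j\ell}/\lambda_j(\beta)|^2<\infty$, set $\rho\coloneqq\sum_{j,\ell}(\widehat f_{j\ell}/\lambda_j(\beta))Y_{j\ell}\in L^2(\sigma_d)$; expanding the kernel and integrating term by term against $\rho\in L^1(\sigma_d)$ gives $\int e^{\beta x\cdot y}\rho(y)\diff\sigma_d(y)=\sum_{j,\ell}\widehat f_{j\ell}Y_{j\ell}(x)=f(x)$, so $\rho\diff\sigma_d$ and $\mu$ have the same image under $\nu\mapsto\int e^{\beta x\cdot\,\cdot}\diff\nu$; by the moment argument (now for signed measures) $\mu=\rho\diff\sigma_d$, and in particular $\rho\ge0$ with $\int\rho\diff\sigma_d=1$, so $\rho$ is a genuine density and \eqref{eq:degreek} holds. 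The ``in particular'' is the special case where $f$ is a polynomial of degree $k$: restrictions to $\S^{d-1}$ of polynomials of degree $\le k$ are exactly $\bigoplus_{j\le k}\operatorname{span}\{Y_{j\ell}\}$, so $\widehat f_{j\ell}=0$ for $j>k$, the defining sum is finite, and \eqref{eq:degreek} truncates at $k$.

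The only real obstacle is bookkeeping around the plane-wave expansion: pinning down the exact constant $\lambda_j(\beta)$ from the Gegenbauer coefficients and the addition-theorem normalization (which depends on the chosen normalizations of $\sigma_d$ and of $\{Y_{j\ell}\}$), and recording the decay of $\lambda_j(\beta)N_j$ that justifies swapping sum and integral. Everything downstream is soft: orthonormality and completeness of spherical harmonics in $L^2(\sigma_d)$, Parseval, and density of polynomials in $C(\S^{d-1})$.
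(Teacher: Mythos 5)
Your proposal is correct and follows essentially the same route as the paper: both rest on the Funk--Hecke multiplier relation $\widehat f_{j\ell}=\lambda_j(\beta)\,\widehat\mu_{j\ell}$ (you obtain it by term-by-term integration of the bilinear kernel expansion, the paper by computing $\widehat f_{j\ell}$ via Fubini), followed by Stone--Weierstrass/Riesz for injectivity and Parseval for \textbf{(i)}. The only cosmetic difference is in \textbf{(ii)}, where you deduce antipodal symmetry from injectivity applied to the pushforward $(-\mathrm{Id})_{\#}\mu$ rather than from density of odd harmonics; this is exactly the variant the paper itself uses later for general $B$ in \Cref{rem: general-attention}.
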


\begin{proof}
We divide the proof into three parts.

\noindent\textbf{Injectivity.} Let $\mu_1,\mu_2\in\mathcal{P}(\S^{d-1})$ and assume
\[
\int e^{\beta x\cdot y}\diff\mu_1(y)=\int e^{\beta x\cdot y}\diff\mu_2(y)
\quad\text{for all }x\in\S^{d-1}.
\]
Set $\nu\coloneqq\mu_1-\mu_2$ and define $f^\nu(x)\coloneqq\int e^{\beta x\cdot y}\diff\nu(y)$; then $f^\nu\equiv0$.
For each $(j,\ell)$, using Fubini's theorem and \eqref{eq:FH-basic}, we compute the Fourier coefficients of $f^\nu$:
\begin{align}
0&=\widehat f^\nu_{j\ell}
=\int f^\nu(x)Y_{j\ell}(x)\diff\sigma_d(x)
=\int\left(\int e^{\beta x\cdot y}\,Y_{j\ell}(x)\diff\sigma_d(x)\right)\diff\nu(y)\nonumber
\\&=\lambda_j(\beta) \int Y_{j\ell}(y)\diff\nu(y).\label{eq:fourier.fnu}
\end{align}
Since $\lambda_j(\beta)>0$, it follows that $m_{j\ell}(\nu) \coloneqq \int Y_{j\ell}\diff\nu=0$ for all $j,\ell$. Hence, for any finite spherical polynomial $P=\sum a_{j\ell}Y_{j\ell}$, we have
\[
\int P\diff\nu=\sum a_{j\ell}\,m_{j\ell}(\nu)=0.
\]
By the Stone--Weierstrass theorem, spherical polynomials are uniformly dense in $C^0(\S^{d-1})$. Since $\nu$ is a finite signed measure, uniform approximation implies $\int h\diff\nu=0$ for all $h\in C^0(\S^{d-1})$. By the uniqueness of the Riesz representation theorem we deduce that $\nu=0$, and thus $\mu_1=\mu_2$.

For later use, note that applying the computation of \eqref{eq:fourier.fnu} to $\mu$ instead of $\nu$ yields the relation
\begin{equation}\label{eq:coeff-relation}
\widehat f^\mu_{j\ell}=\lambda_j(\beta)\,m_{j\ell},
\qquad\text{where}\quad m_{j\ell}\coloneqq\int Y_{j\ell}\diff\mu.
\end{equation}

\smallskip
\noindent\textbf{Proof of (i).} Assume $\sum_{j,\ell}\big|\widehat f^\mu_{j\ell}/\lambda_j(\beta)\big|^2<\infty$.
Since $\{Y_{j\ell}\}_{j,\ell}$ is an orthonormal basis of $L^2(\sigma_d)$, there exists a unique function $g\in L^2(\sigma_d)$ such that
\[
g(x)\coloneqq\sum_{j=0}^{\infty}\sum_{\ell\in \llbracket1,N_j\rrbracket}\frac{\widehat f^\mu_{j\ell}}{\lambda_j(\beta)}\,Y_{j\ell}(x)
\]
with convergence in $L^2(\sigma_d)$. In particular, $g\in L^1(\sigma_d)$, so $g\,\sigma_d$ defines a finite signed measure on $\S^{d-1}$. For any finite spherical polynomial $P=\sum a_{j\ell}Y_{j\ell}$, using \eqref{eq:coeff-relation},
\[
\int P\diff\mu=\sum a_{j\ell}m_{j\ell}
=\sum a_{j\ell}\frac{\widehat f^\mu_{j\ell}}{\lambda_j(\beta)}
=\int P\, g\diff\sigma_d.
\]
Since spherical polynomials are dense in $C^0(\S^{d-1})$, this identity extends to all $h\in C^0(\S^{d-1})$. By the uniqueness of the Riesz representation theorem, $\diff\mu=g\diff\sigma_d$, which yields \eqref{eq:degreek}. Conversely, if $\mu$ has a density $\rho\in L^2(\sigma_d)$ so that $\diff\mu=\rho\diff\sigma_d$, then $m_{j\ell}=\langle \rho,Y_{j\ell}\rangle_{L^2}$. Parseval's identity then gives
\[
\sum_{j,\ell}\left|\frac{\widehat f^\mu_{j\ell}}{\lambda_j(\beta)}\right|^2
=\sum_{j,\ell}|m_{j\ell}|^2=\|\rho\|_{L^2(\sigma_d)}^2<\infty.
\]
Finally, $f^\mu$ is a spherical polynomial of degree $k$ if and only if $\widehat f^\mu_{j\ell}=0$ for $j>k$ and $\widehat f^\mu_{k\ell}\neq0$, which, by \eqref{eq:degreek}, is equivalent to the density being a spherical polynomial of degree $k$.

\smallskip
\noindent\textbf{Proof of (ii).}
If $\mu(A)=\mu(-A)$ for every Borel set $A\subset\S^{d-1}$, then for every $x\in\S^{d-1}$,
\[
f^\mu(-x)=\int e^{\beta (-x)\cdot y}\diff\mu(y)
=\int e^{\beta x\cdot(-y)}\diff\mu(y)
=\int e^{\beta x\cdot y}\diff\mu(y)=f^\mu(x),
\]
so $f^\mu$ is even. 
Conversely, assume $f^\mu(-x)=f^\mu(x)$ for all $x\in\S^{d-1}$. Define $\mu^{-}\coloneqq(-\mathrm{Id})_{\#}\mu$. Then, for all $x\in\S^{d-1}$,
\begin{align*}
\int e^{\beta x\cdot y}\diff\mu^{-}(y) 
&= \int e^{\beta x\cdot (-y)}\diff\mu(y) 
= \int e^{\beta (-x)\cdot y}\diff\mu(y) \\
&= f^\mu(-x) 
= f^\mu(x) 
= \int e^{\beta x\cdot y}\diff\mu(y).
\end{align*}
By the injectivity proved above, we conclude that $\mu^{-}=\mu$, meaning $\mu(A)=\mu(-A)$ for every Borel set $A\subset\S^{d-1}$.
\end{proof}

\begin{remark}\label{rem: general-attention}
Fix a symmetric nonsingular matrix $B\in\R^{d\times d}$ and define, for $\mu\in\mathcal P(\S^{d-1})$,
\[
f_B(x)\coloneqq \int e^{x^\top B y}\diff\mu(y),\qquad x\in\S^{d-1}.
\]
Then the statements of injectivity and the parity characterization \textbf{(ii)} in \Cref{lem: quadpol} remain valid for $f_B$, with essentially the same proof and only the following modifications.

\smallskip
\noindent\textbf{Injectivity.}
Let $\mu_1,\mu_2\in\mathcal P(\S^{d-1})$ and assume $\int e^{x^\top B y}\diff\mu_1(y)=\int e^{x^\top B y}\diff\mu_2(y)$ for all $x\in\S^{d-1}$.
Set $\nu\coloneqq \mu_1-\mu_2$ and define 
$$
F_\nu(z)\coloneqq \int e^{z\cdot y}\diff\nu(y),\qquad z\in\R^d.
$$
Then $F_\nu\equiv0$ on the boundary of the ellipsoid $\Omega\coloneqq\{B^\top x\ :\ \|x\|<1\}$. Moreover, differentiating under the integral sign shows that $(\Delta-1)F_\nu=0$ on $\R^d$, since for any fixed $y\in\S^{d-1}$, the map $z \mapsto e^{z\cdot y}$ satisfies
\[
\Delta (e^{z\cdot y})=\|y\|^2 e^{z\cdot y}=e^{z\cdot y}.
\]
By uniqueness for the Dirichlet problem for $\Delta-1$ on $\Omega$, we obtain $F_\nu\equiv0$ in $\Omega$. Hence $F_\nu$ vanishes in a neighborhood of $0$, and therefore on all of $\R^d$ by real-analyticity.
Expanding at $0$, we get $\int P(y)\diff\nu(y)=0$ for every polynomial $P$. Since the restrictions of polynomials to $\S^{d-1}$ are uniformly dense in $C^0(\S^{d-1})$, it follows that $\nu=0$, and thus $\mu_1=\mu_2$.

\smallskip
\noindent\textbf{Parity.}
The proof that $f_B^\mu$ is even if and only if $\mu(A)=\mu(-A)$ for every Borel set $A\subset\S^{d-1}$ is exactly the same as in \Cref{lem: quadpol}\textbf{(ii)}, using the injectivity of $f_B^\mu$ established above.

\smallskip
\noindent We emphasize that \eqref{eq:coeff-relation} and the $L^2$ inversion criterion in \Cref{lem: quadpol}\textbf{(i)} rely on the isotropic (zonal) kernel $e^{\beta x\cdot y}$, failing as soon as $B$ introduces anisotropy.
\end{remark}

\begin{lemma}\label{lem: concavity}
Fix $d=2$ and $\beta>0$. Let $\mathsf{K}_\beta(\theta)\coloneqq e^{\beta\cos\theta}$ for $\theta\in(-\pi,\pi]$, and define 
\begin{equation} \label{eq: theta_c}
  \theta_c(\beta) \coloneqq \arccos\left(\frac{\sqrt{1+4\beta^2}-1}{2\beta}\right)\in(-\pi,\pi].
\end{equation}
Then $\mathsf{K}_\beta$ is strictly concave on $(-\theta_c(\beta),\theta_c(\beta))$, and
\[
\theta_c(\beta)=\beta^{-1/2}+O(\beta^{-3/2}) \quad \text{as }\beta\to\infty,\qquad
\theta_c(\beta)\to \frac{\pi}{2} \quad \text{as }\beta\to0^+.
\]
Moreover, for each $\lambda\in(0,1)$ there exists $\beta_0=\beta_0(\lambda)>0$ such that
\begin{equation}\label{eq:kernel-hess-bound-lambda}
\sup_{|\theta|\le \lambda\theta_c(\beta)} \mathsf{K}_\beta''(\theta)
\le
-e^{-\lambda^2/2}\,\frac{1-\lambda^2}{2}\,\beta e^\beta
\qquad\text{for all }\beta\ge \beta_0(\lambda),
\end{equation}
and there exists $\beta_1>0$ such that
\begin{equation}\label{eq:kernel-hess-bound-max}
\max_{\theta\in[\theta_c(\beta),\pi]}\mathsf{K}_\beta''(\theta)
\le 2\beta e^{\beta-3/2}
\qquad\text{for all }\beta\ge \beta_1.
\end{equation}
\end{lemma}

\begin{proof}

A direct computation gives
\begin{equation}\label{eq: K''}
\mathsf{K}_\beta''(\theta) = e^{\beta\cos\theta} (\beta^2\sin^2\theta - \beta\cos\theta).    
\end{equation}
Thus $\mathsf{K}_\beta''\left(\theta\right)=0$ is equivalent to $\beta\cos^2\theta+\cos\theta-\beta=0$, whose positive root is precisely
\begin{equation}\label{eq:critical-root}
\cos\theta_c\left(\beta\right) = \frac{\sqrt{1+4\beta^2}-1}{2\beta},
\end{equation}
so $\theta_c(\beta)\in(0,\pi/2)$ is well defined (the other root being $<-1$). Because $\mathsf{K}_\beta''(0) = -\beta e^\beta < 0$ and $\theta_c$ is the first positive root, $\mathsf{K}_\beta$ is strictly concave on $(-\theta_c(\beta), \theta_c(\beta))$. 

From \eqref{eq:critical-root}, we obtain
\[
\frac{\sqrt{1+4\beta^2}-1}{2\beta} = 1-\frac{1}{2\beta}+O\left(\beta^{-2}\right) \qquad \text{as }\beta\to\infty,
\]
yielding $\theta_c\left(\beta\right)=\beta^{-1/2}+O\left(\beta^{-3/2}\right)$ via the expansion $\cos\theta=1-\theta^2/2+O\left(\theta^4\right)$. Similarly,
\[
\frac{\sqrt{1+4\beta^2}-1}{2\beta} = \beta+O\left(\beta^3\right) \qquad \text{as }\beta\to0^+,
\]
hence $\cos\theta_c\left(\beta\right)\to0$ and $\theta_c\left(\beta\right)\to\pi/2$.  

To prove \eqref{eq:kernel-hess-bound-lambda}, we show that $\mathsf{K}_\beta''$ is strictly increasing on $[0, \theta_c(\beta)]$. Differentiating \eqref{eq: K''} gives
\begin{equation}\label{eq:K-third}
\mathsf{K}_\beta'''\left(\theta\right) = -\beta\sin\theta\, \mathsf{K}_\beta''\left(\theta\right) + e^{\beta\cos\theta} \beta\sin\theta\,\left(2\beta\cos\theta+1\right).
\end{equation}
For $0<\theta<\theta_c(\beta)$, we have $\sin\theta>0$, $\cos\theta>0$, and $\mathsf{K}_\beta''(\theta)<0$. Thus both terms on the right-hand side are positive, yielding $\mathsf{K}_\beta'''(\theta) > 0$.
Since $\mathsf{K}_\beta''$ is even, its supremum on $[-\lambda\theta_c, \lambda\theta_c]$ for every $\lambda\in(0,1)$ is attained at the boundaries. Using the expansion of $\theta_c(\beta)$, we have
\[
\cos\left(\lambda\theta_c\right) = 1 - \frac{\lambda^2}{2\beta} + O\left(\beta^{-2}\right)
\qquad\text{and}\qquad
\sin^2\left(\lambda\theta_c\right) = \frac{\lambda^2}{\beta} + O\left(\beta^{-2}\right).
\]
Substituting into \eqref{eq: K''} yields
\begin{align*}
    \mathsf{K}_\beta''\left(\lambda\theta_c\right)
&= e^{\beta\cos\left(\lambda\theta_c\right)}\left(\beta^2\sin^2\left(\lambda\theta_c\right) - \beta\cos\left(\lambda\theta_c\right)\right)\\
&= e^{\beta - \lambda^2/2}\,\left(1+O\left(\beta^{-1}\right)\right)
   \left(\beta\left(\lambda^2-1\right) + \frac{\lambda^2}{2} + O\left(1\right)\right)\\
&= -\left(1-\lambda^2\right)\,e^{\beta - \lambda^2/2}\,\beta\,\left(1+O\left(\beta^{-1}\right)\right).
\end{align*}
Therefore, for $\beta$ sufficiently large, we obtain the bound in \eqref{eq:kernel-hess-bound-lambda}.

To prove \eqref{eq:kernel-hess-bound-max}, we find the maximum of $\mathsf{K}_\beta''$ on $[\theta_c, \pi]$. Rearranging \eqref{eq:K-third}, the condition $\mathsf{K}_\beta'''(\theta)=0$ on $(0,\pi)$ reduces to $q_\beta(\cos\theta)=0$, where $q_\beta(t) \coloneqq \beta^2 t^2 + 3\beta t - \beta^2+1$. The unique root in $(-1, \cos\theta_c(\beta))$ is
\[
t_*=\frac{-3+\sqrt{4\beta^2+5}}{2\beta}.
\]
Let $\theta_*\in (\theta_c, \pi)$ be defined by $\cos\theta_* = t_*$. Since $\mathsf{K}_\beta'''(\theta)$ changes sign from positive to negative at $\theta_*$, the maximum is attained there. Because $q_\beta(t_*)=0$, we can express the quadratic term as $\beta^2(1-t_*^2) = 3\beta t_* + 1$. Substituting this into \eqref{eq: K''} simplifies the evaluation to
\[ 
\mathsf{K}_\beta''\left(\theta_*\right) = e^{\beta t_*} \left( 2\beta t_* + 1 \right). 
\] 
Finally, expanding $t_* = 1 - \frac{3}{2\beta} + \frac{5}{8\beta^2} + O(\beta^{-4})$ for large $\beta$ provides
\begin{align*} 
\mathsf{K}_\beta''\left(\theta_*\right) 
&= e^{\beta-\frac{3}{2}}\left(1+\frac{5}{8\beta}+O\left(\beta^{-2}\right)\right) \,2\beta\left(1-\frac{1}{\beta}+O\left(\beta^{-2}\right)\right) \\ 
&= 2\beta e^{\beta-\frac{3}{2}}\left(1-\frac{3}{8\beta}+O\left(\beta^{-2}\right)\right). 
\end{align*} 
Since $1-\frac{3}{8\beta} < 1$ for large $\beta$, this implies \eqref{eq:kernel-hess-bound-max} for all $\beta\ge\beta_1$.
\end{proof}

\subsection{Proof of \Cref{thm: circle} and \Cref{thm: circle.gelu}}
\label{sec: proof.1}

\begin{proof}[Proof of \Cref{thm: circle}]
Set $x(\theta)=(\cos\theta,\sin\theta)$ and $\tau(\theta)=(-\sin\theta,\cos\theta)$. For $\theta\in[0,2\pi)$ define
\begin{align*}
  f_\beta\left(\theta\right)&\coloneqq \int e^{\beta\cos\left(\phi-\theta\right)}\sin\left(\phi-\theta\right)\diff\mu(\phi),\\ 
  f_\upvartheta\left(\theta\right)&\coloneqq 
\sum_{j\in\llbracket1,2\rrbracket}\omega_j \,\left(a_j\cdot x(\theta)\right)_+\, a_j\cdot \tau(\theta),
\end{align*}
and set $g\coloneqq f_\beta+f_\upvartheta$, corresponding to the gradient $\nabla \frac{\delta \mathsf{E}_\beta}{\delta \mu}[\mu] + \mathsf{u}_\upvartheta$ expressed in angular coordinates.

Let $\mathscr Z\coloneqq \{\theta\in[0,2\pi): a_j\cdot x(\theta)=0 \text{ for some } j\}$. Each equation $a_j\cdot x(\theta)=0$ has at most two solutions, so $\mathscr Z$ is finite. On any connected component $I\subset[0,2\pi)\setminus\mathscr Z$, the active set
\[
J_I\coloneqq\{j:\ a_j\cdot x(\theta)>0\ \text{for all }\theta\in I\}
\]
is constant, and for $\theta\in I$ we have
\[
\left(a_j\cdot x(\theta)\right)_+=\mathbf1_{j\in J_I}\,a_j\cdot x(\theta),
\qquad
\frac{\diff}{\diff\theta}\left(a_j\cdot x(\theta)\right)_+=\mathbf1_{j\in J_I}\,a_j\cdot\tau(\theta).
\]
Thus $f_\upvartheta$ is real–analytic on each $I$. On the other hand, since 
\[
\mathsf K_\beta\left(\psi\right)\coloneqq e^{\beta\cos\psi}
=I_0(\beta)+2\sum_{n\ge1} I_n(\beta)\cos(n\psi),
\]
its Fourier series is absolutely convergent. Hence the convolution with the measure $\mu$,
\[
\left(\mathsf K_\beta*\mu\right)\left(\theta\right)\coloneqq\int \mathsf K_\beta\left(\theta-\phi\right)\diff\mu(\phi),
\]
is real-analytic on $\S^1$, and $f_\beta(\theta)=\frac{1}{\beta}\frac{\diff}{\diff\theta}(\mathsf K_\beta*\mu)(\theta)$ is real-analytic as well. All in all, we deduce that $g$ is continuous on $[0,2\pi)$ and real-analytic on each $I$.

%Testing \eqref{eq: steady.state} with any \(v(\theta)=\psi(\theta)\tau(\theta)\), where \(\psi\in C^\infty_{\mathrm{per}}([0,2\pi))\) has zero average, yields
%\begin{equation}\label{eq:weakEL}
%\int g(\theta)\,\psi(\theta)\diff\mu(\theta)=0.
%\end{equation}
%By continuity of $g$, for each component $I$,
For each component $I$, from \eqref{eq: steady.state} we get
\begin{equation}\label{eq:support-in-zero}
\supp\mu\cap I\ \subset\ \{\theta\in I:\ g(\theta)=0\}.
\end{equation}
Assume, for a contradiction, that $\supp\mu$ is infinite. Then $\supp\mu\cap I$ must be infinite for at least one of the finitely many components $I$. Define the global real-analytic function
\[
\widetilde g\left(\theta\right) \coloneqq f_\beta\left(\theta\right) + \sum_{j\in J_I}\omega_j\,\langle a_j,x(\theta)\rangle\,\langle a_j,\tau(\theta)\rangle,
\]
which satisfies $\widetilde g(\theta)=g(\theta)$ for all $\theta\in I$. Hence, by \eqref{eq:support-in-zero}, the zero set of $\widetilde g$ contains infinitely many points in $I$ and thus has an accumulation point in $\S^1$. By the identity theorem for real-analytic functions, it follows that $\widetilde g\equiv 0$ on $\S^1$. Equivalently,
\[
f_\beta\left(\theta\right) = -\sum_{j\in J_I}\omega_j\,\langle a_j,x(\theta)\rangle\,\langle a_j,\tau(\theta)\rangle
\qquad \text{for all }\theta \in \S^1.
\]
Integrating in $\theta$, and recalling $f_\beta=\frac{1}{\beta}(\mathsf K_\beta*\mu)'$, we find that $g=H'$ on $I$, for
\[
H\left(\theta\right)\coloneqq \frac{1}{\beta}\left(\mathsf K_\beta*\mu\right)\left(\theta\right)+\sum_{j}\frac{\omega_j}{2}\,\left( a_j\cdot x(\theta)\right)_+^{\,2}.
\]
Since $\widetilde g\equiv0$ on $\S^1$, we obtain the global identity
\begin{equation}\label{eq:K-equals-quad-global}
\left(\mathsf K_\beta*\mu\right)\left(\theta\right)=C_I-\beta\sum_{j\in J_I}\frac{\omega_j}{2}\,\left( a_j\cdot x(\theta)\right)^{\,2}
\qquad \left(\text{for }\theta\in[0,2\pi)\right)
\end{equation}
for some constant $C_I$.

By \Cref{lem: quadpol}\textbf{(i)}, \eqref{eq:K-equals-quad-global} implies that $\mu$ is absolutely continuous with an $L^2$-density which is a trigonometric polynomial of degree $\le 2$. As a nonnegative real-analytic function on $\S^1$, this density cannot vanish on an open arc unless it is identically zero; hence $\supp\mu=\S^1$.

Let $I'$ be a component adjacent to $I$, and let $\theta_*$ be the common boundary point. Since $\supp\mu=\mathbb S^1$, we have $\supp\mu\cap I'$ is infinite, and repeating the argument used on $I$ gives another global identity
\begin{equation}\label{eq:K-equals-quad-global-bis}
\left(\mathsf{K}_\beta*\mu\right)\left(\theta\right)
= C_{I'}-\beta\sum_{j\in J_{I'}}\frac{\omega_j}{2}\,\left( a_j\cdot x(\theta)\right)^{2}
\qquad \left(\text{for }\theta\in[0,2\pi)\right).
\end{equation}
Let $B\coloneqq J_I\triangle J_{I'}$ denote the symmetric difference. Clearly $B\neq\varnothing$, since $J_I\neq J_{I'}$ because at least one index changes sign when crossing $\theta_*$. Subtracting \eqref{eq:K-equals-quad-global-bis} from \eqref{eq:K-equals-quad-global} we obtain
\begin{equation}\label{eq:toggle-diff-general}
\frac{\beta}{2}\sum_{j\in B}\xi_j\,\omega_j\,\left(a_j\cdot x(\theta)\right)^{2}
\ \equiv\ C_I-C_{I'} \qquad \left(\text{for }\theta\in[0,2\pi)\right),
\end{equation}
where $\xi_j\in\{\pm 1\}$ records whether $j$ is added or removed when passing from $I$ to $I'$. For every $j\in B$ we have $a_j\cdot x(\theta_*)=0$, hence evaluating \eqref{eq:toggle-diff-general} at $\theta=\theta_*$ yields $C_I=C_{I'}$ and therefore
\begin{equation}\label{eq:quad-combo-zero}
\sum_{j\in B}\xi_j\,\omega_j\,\left(a_j\cdot x(\theta)\right)^{2}\ \equiv\ 0
\qquad \left(\text{for }\theta\in[0,2\pi)\right).
\end{equation}
The left-hand side of \eqref{eq:quad-combo-zero} is exactly the difference of $\left(\mathsf{v}_\upvartheta\circ x\right)\left(\theta\right)$ on $I$ and $I'$, which therefore vanishes identically on $[0,2\pi)$. Hence $\left(\mathsf{v}_\upvartheta\circ x\right)\left(\theta\right)$ coincides on $\overline{I\cup I'}$ with a single trigonometric polynomial of degree at most~$2$.

Since this argument applies to every adjacent pair of components, it follows by connectivity that $\mathsf{v}_\upvartheta\circ x$ is given in the full $\S^1$ by a single trigonometric polynomial of degree at most~$2$. In particular, $\mathsf{v}_\upvartheta$ is real-analytic on $\S^1$, contradicting our assumption. This shows that $\supp\mu$ is finite, and combining this with \eqref{eq:support-in-zero} yields that $\mu$ is purely atomic with finitely many atoms.

\end{proof}

\begin{remark}\label{rem:piecewise-poly}
   The conclusion of \Cref{thm: circle} holds for any $\sigma$ globally Lipschitz, piecewise polynomial (e.g., the leaky ReLU), provided $\mathsf{v}_\upvartheta$ is not real-analytic. 
    
    The proof applies \emph{mutatis mutandis}. On each component $I$, the field $f_\upvartheta$ now becomes a trigonometric polynomial of some finite degree. Consequently, if the support accumulates in $I$, analytic continuation implies that the global identity \eqref{eq:K-equals-quad-global} holds with a polynomial of some finite degree. \Cref{lem: quadpol}\textbf{(i)} then guarantees a global polynomial density, forcing $\supp\mu=\S^1$. The subtraction argument via \eqref{eq:toggle-diff-general} yields the contradiction exactly as before.
    
    The same extension applies to \Cref{thm: any.d}.
\end{remark}

\begin{proof}[Proof of \Cref{thm: circle.gelu}]

Let $\mu\in\mathcal{P}(\S^1)$ be a strict SOPD Wasserstein critical point of $\mathsf E_{\beta,\upvartheta}$. In particular, $\mathrm{Hess}_\mu\mathsf E_{\beta,\upvartheta}$ is well-defined at $\mu$. Write $x(\theta)=(\cos\theta,\sin\theta)$ and $\mathsf K_\beta(\phi)=e^{\beta\cos\phi}$, and define
\[
H\left(\theta\right)\coloneqq \frac{1}{\beta}\left(\mathsf K_\beta*\mu\right)\left(\theta\right)+\frac12\,\left(\mathsf v_\upvartheta\circ x\right)\left(\theta\right).
\]
Since $\sigma$ is real-analytic, the potential $\theta\mapsto \left(\mathsf v_\upvartheta\circ x\right)\left(\theta\right)$ is real-analytic on $\S^1$. The convolution term $(\mathsf K_\beta*\mu)$ is also real-analytic. Hence $H$ is real-analytic on $\S^1$.

Assume for contradiction that $\supp\mu$ is infinite. Then $\supp\mu$ has an accumulation point in $\S^1$. The first-order condition \eqref{eq: steady.state} forces $H'$ to vanish on $\supp\mu$. Since $H'$ is real-analytic and vanishes on a set with an accumulation point, the identity theorem implies $H'\equiv 0$ on $\S^1$. Consequently, $H''\equiv 0$ on $\S^1$, which gives the global cancellation:
\begin{equation}\label{eq:global-cancel-analytic}
\frac{1}{\beta}\,\left(\mathsf K_\beta''*\mu\right)\left(\theta\right)
+\frac12\,\partial_\theta^2\left(\mathsf v_\upvartheta\circ x\right)\left(\theta\right)\ =\ 0
\qquad \left(\text{for }\theta\in\S^1\right).
\end{equation}

To compute the second variation, let $\xi \in \mathsf T_\mu\mathcal P(\mathbb S^1)$. Since the Wasserstein Hessian is well-defined at $\mu$, 
\begin{align}
\mathrm{Hess}_\mu \mathsf E_{\beta,\upvartheta}(\xi,\xi)
&= \frac{1}{2\beta}\iint \mathsf K_\beta''\left(\theta-\phi\right)\left(\xi(\theta)-\xi(\phi)\right)^2\diff\mu(\theta)\diff\mu(\phi)\nonumber\\
&\quad+\frac12\int \partial_\theta^2\left(\mathsf v_\upvartheta\circ x\right)\left(\theta\right)\,\xi(\theta)^2\diff\mu(\theta).\label{eq:Hess-bilinear.std}
\end{align}
Expanding the square using the symmetry of $\mathsf K_\beta''$, we rewrite this as:
\begin{align}
\mathrm{Hess}_\mu \mathsf E_{\beta,\upvartheta}(\xi,\xi)
&=\int \left[\frac{1}{\beta}\left(\mathsf K_\beta''*\mu\right)\left(\theta\right)+\frac12\partial_\theta^2\left(\mathsf v_{\upvartheta}\circ x\right)\left(\theta\right)\right]\xi(\theta)^2\diff\mu(\theta)\nonumber\\
&\quad-\frac{1}{\beta}\iint \mathsf K_\beta''\left(\theta-\phi\right)\,\xi(\theta)\xi(\phi)\diff\mu(\theta)\diff\mu(\phi).\label{eq:Hess-bilinear}
\end{align}
By \eqref{eq:global-cancel-analytic}, the term in brackets vanishes identically. Thus, the Hessian reduces to a bilinear form:
\begin{equation}\label{eq:Hess-bilinear-analytic}
\mathrm{Hess}_\mu \mathsf E_{\beta,\upvartheta}(\xi,\xi)
=-\frac{1}{\beta}\iint \mathsf K_\beta''\left(\theta-\phi\right)\,\xi(\theta)\xi(\phi)\diff\mu(\theta)\diff\mu(\phi).
\end{equation}
We construct a sequence of tangent vectors along which \eqref{eq:Hess-bilinear-analytic} is arbitrarily small, thereby contradicting strict positivity. 

Fix a small arc $J_\delta\subset\S^1$ centered at an accumulation point of $\supp\mu$, with diameter $\le\delta$, so that $\mu(J_\delta)>0$.
Pick two disjoint subarcs $I_1,I_2\subset J_\delta$ with $\mu(I_i)>0$. Choose $u_i\in C_c^\infty(I_i)$ non-constant, view it as a smooth function on $\S^1$ by extending it by zero outside $I_i$, and set $\eta_i\coloneqq u_i'$.
Then $\eta_i\in \mathsf T_\mu\mathcal P(\S^1)$ (recall \eqref{eq:otto.tangent}) and $\eta_i$ is supported in $I_i$. Moreover, by choosing $u_i$ so that $u_i'$ does not vanish on a neighborhood of some point in $\supp\mu\cap I_i$, we ensure $\|\eta_i\|_{L^2(\mu)}>0$.

Set $m_i\coloneqq \int \eta_i\diff\mu$. If $(m_1,m_2)\neq (0,0)$, define
\[
\xi_0 \coloneqq m_2\,\eta_1 - m_1\,\eta_2,
\]
so that $\int \xi_0\diff\mu = m_2m_1 - m_1m_2 =0$. If $m_1=m_2=0$, simply set $\xi_0\coloneqq \eta_1$ (which already satisfies $\int \xi_0\diff\mu=0$).
In either case, since $\eta_1$ and $\eta_2$ have disjoint supports and $\xi_0$ is not identically zero, we have $\xi_0\not\equiv 0$.

Normalize $\xi_\delta \coloneqq \xi_0/\|\xi_0\|_{L^2(\mu)}$. Then
\[
\xi_\delta\in \mathsf T_\mu\mathcal P(\mathbb S^1),\qquad \supp \xi_\delta\subset J_\delta,\qquad\text{and}\qquad\|\xi_\delta\|_{L^2(\mu)}=1.
\] 
Moreover, by construction, $\int \xi_\delta \diff\mu = 0$. Evaluating \eqref{eq:Hess-bilinear-analytic} at $\xi=\xi_\delta$:
\[
\mathrm{Hess}_\mu \mathsf E_{\beta,\upvartheta}(\xi_\delta,\xi_\delta) = -\frac{1}{\beta}\iint_{J_\delta \times J_\delta} \mathsf K_\beta''\left(\theta-\phi\right)\,\xi_\delta(\theta)\xi_\delta(\phi)\diff\mu(\theta)\diff\mu(\phi).
\]
Since $\int \xi_\delta \diff\mu = 0$, adding a constant to the kernel does not change the integral. We replace $\mathsf K_\beta''(\theta-\phi)$ by $\mathsf K_\beta''(\theta-\phi) - \mathsf K_\beta''(0)$.
Let $\omega(\delta) \coloneqq \sup_{|h|\le\delta} \left|\mathsf K_\beta''(h)-\mathsf K_\beta''(0)\right|$ for $\delta\in[0,\pi)$. Then, 
\begin{align*}
\left|\mathrm{Hess}_\mu \mathsf E_{\beta,\upvartheta}(\xi_\delta,\xi_\delta)\right|
&\le \frac{1}{\beta} \iint \left|\mathsf K_\beta''\left(\theta-\phi\right)-\mathsf K_\beta''(0)\right| \left|\xi_\delta(\theta)\right| \left|\xi_\delta(\phi)\right| \diff\mu(\theta) \diff\mu(\phi)\\
&\le \frac{\omega(\delta)}{\beta} \|\xi_\delta\|_{L^1(\mu)}^2.
\end{align*}
Using Cauchy-Schwarz, $\|\xi_\delta\|_{L^1(\mu)}^2 \le \|\xi_\delta\|_{L^2(\mu)}^2 = 1$. Thus, the Hessian is bounded by $\omega(\delta)/\beta$, which tends to 0 as $\delta \to 0$.
This implies 
\[
\inf\left\{\mathrm{Hess}_\mu \mathsf E_{\beta,\upvartheta}(\xi,\xi) \ :\ \xi\in\mathsf T_\mu\mathcal P(\mathbb S^1),\ \|\xi\|_{L^2(\mu)}=1\right\}=0,
\]
contradicting \eqref{eq:strict2order}. Therefore 
$\supp\mu$ must be finite, and $\mu$ is purely atomic with finitely many atoms.

\end{proof}

\subsection{Proof of \Cref{thm: any.d}}

\begin{proof}[Proof of \Cref{thm: any.d}]

We split the proof into two parts.

\subsubsection*{Part I. Non-absolute continuity}
We reuse the definitions of $(\mathscr Z, I, J_I)$ introduced in \eqref{eq: defZd}--\eqref{eq: defJI}.  

Let $\mu\in\mathcal P(\S^{d-1})$ solve \eqref{eq: steady.state}. 
Assume first that $\sigma(s)=s_+$ and $\mathsf v_\upvartheta$ is not real-analytic on $\S^{d-1}$.  On $I$ the signs of $x\mapsto  a_j\cdot x$ are fixed, hence 
\begin{equation}\label{eq: quad.pol.I}
    \mathsf v_\upvartheta(x)
    =\sum_{j\in\llbracket 1,d\rrbracket}\omega_j\left( a_j\cdot x\right)_+^{2}
    =\sum_{j\in J_I}\omega_j\left( a_j\cdot x\right)^{2}\qquad \left(\text{for } x\in I\right).
\end{equation}
By using the Weierstrass $M$-test on the expansion 
\[
e^{\beta x\cdot y}=\sum_{m\ge0}\frac{\beta^m}{m!}\left(x\cdot y\right)^m,
\]
we deduce that 
$\frac{\delta\mathsf{E}_\beta}{\delta\mu}[\mu]$ is real-analytic on $\S^{d-1}$. Thus,  $H\coloneqq\frac{\delta\mathsf{E}_{\beta,\upvartheta}}{\delta\mu}[\mu]$ and $g\coloneqq\nabla H$ are both real-analytic on every $I$.

%Testing \eqref{eq: steady.state} against  $v(x)=\psi(x)\proj_x e_m$, with  $\psi\in C^\infty(\S^{d-1})$ and $m\in\llbracket1,d\rrbracket$, gives 
%\begin{equation*}\int g(x)\cdot\proj_x e_m \,\psi(x)\diff\mu(x)=0.\end{equation*}
%For fixed $I$, by continuity $$\supp\mu\cap I\subset\{x\in I:\, g(x)\cdot \proj_x e_m=0\}$$ for each $m$.
%Intersecting over $m=1,\ldots,d$ yields
%\[\supp\mu\cap I\subset\{x\in I: g(x)=0\}.\]

For each $I$, from \eqref{eq: steady.state} we have
\begin{equation}\label{eq: suppmu.zero.g}
    \supp\mu\cap I\subset\{x\in I: g(x)=0\}.     
\end{equation}
Assume for contradiction that $\sigma_d(\supp\mu)>0$. Since $\sigma_d(\mathscr Z)=0$, we deduce that $\sigma_d(\supp\mu\cap I)>0$ for some component $I$. 
Then by \eqref{eq: suppmu.zero.g}, the zero set of $g$ has positive measure in $I$, hence $g\equiv0$ on $I$ by analyticity \cite{Mityagin2020}. Thus $H$ is constant on $I$ and, using \eqref{eq: quad.pol.I},
\begin{equation}\label{eq:global-Id}
    \frac{\delta\mathsf{E}_\beta}{\delta\mu}[\mu](x)
    =C_I-\frac12\sum_{j\in J_I}\omega_j\left(a_j\cdot x\right)^2\qquad \left(\text{for } x\in I\right)
\end{equation}
for some $C_I\in\R$. Both sides are real-analytic on $\S^{d-1}$ and $I$ is open, so by the identity theorem this equality holds on $\S^{d-1}$.

By \Cref{lem: quadpol}\textbf{(i)}, it follows that $\mu$ is absolutely continuous with a density $\rho$ which is a spherical polynomial of degree $\le 2$; in particular, $\rho$ is real-analytic and not identically zero. Since the zero set of a nontrivial real-analytic function on $\S^{d-1}$ has null $\sigma_d$-measure, it follows that $\supp\mu=\S^{d-1}$.

 Since $\supp\mu=\S^{d-1}$, the stationarity condition \eqref{eq: steady.state} holds on all of $\S^{d-1}$. Hence $g=\nabla H=0$ on $\S^{d-1}$, and therefore $H$ is constant on $\S^{d-1}$. On the other hand, \eqref{eq:global-Id} already shows that $\frac{\delta\mathsf E_\beta}{\delta\mu}[\mu]$ is the restriction to $\S^{d-1}$ of a quadratic polynomial. It follows that
\[
\mathsf v_\upvartheta
=2H-2\frac{\delta\mathsf E_\beta}{\delta\mu}[\mu]
\]
is also the restriction to $\S^{d-1}$ of a quadratic polynomial. In particular, $\mathsf v_\upvartheta$ is real-analytic on $\S^{d-1}$, contradicting the assumption. Therefore $\sigma_d(\supp\mu)=0$.

Now assume that $\sigma$ is real-analytic and that $\mu$ is a strict SOPD Wasserstein critical point. If $\sigma_d(\supp\mu)>0$, then the real-analytic vector field $g=\nabla H$ vanishes on a set of positive $\sigma_d$-measure. Hence $g\equiv0$ on $\S^{d-1}$ by analyticity \cite{Mityagin2020}, and therefore $H$ is constant on $\S^{d-1}$. The same localization argument as in the proof of \Cref{thm: circle.gelu} then yields unit-norm tangent perturbations along which the corresponding Hessian values become arbitrarily small, contradicting \eqref{eq:strict2order}.

\subsubsection*{Part II. Atomicity}

Fix $\mu\in\mathcal P(\mathbb S^{d-1})$ and define
\[
g_{\beta,\upvartheta}(x) \coloneqq \nabla\frac{\delta \mathsf E_{\beta,\upvartheta}}{\delta \mu}[\mu](x)
= \int e^{\beta x\cdot y}\proj_x y \diff \mu(y)
+\sum_{j\in\llbracket1,d\rrbracket} \omega_j\sigma(a_j\cdot x)\proj_x a_j.
\]
If $\mu$ is stationary, then $\supp\mu\subset\{x\in\mathbb S^{d-1}\colon\ g_{\beta,\upvartheta}(x)=0\}$, by \eqref{eq: steady.state}.

Our goal is to show that, for a dense subset of parameters, all zeros of $g_{\beta,\upvartheta}$ are nondegenerate (and thus isolated). To achieve this, we invoke the parametric transversality theorem
\cite[Thm.~6.35]{Lee2012}. This theorem guarantees the result provided that the map $(x,\beta,\upvartheta)\mapsto g_{\beta,\upvartheta}(x)$ is transverse to the zero section.

Explicitly, we need to verify that for all $(x,\beta,\upvartheta)$ such that $g_{\beta,\upvartheta}(x)=0$, the differential map surjects onto the tangent space:
\begin{equation}\label{eq: cond.transversality}
\mathrm{Im}\;D_{(\beta,\upvartheta)} g_{\beta,\upvartheta}(x)+\mathrm{Im}\,D_x g_{\beta,\upvartheta}(x)
=\mathsf T_x\mathbb S^{d-1}.
\end{equation}
If \eqref{eq: cond.transversality} holds, the theorem implies that for almost every $(\beta,\upvartheta)$, and hence for a dense set of parameters, the zeros of $x\mapsto g_{\beta,\upvartheta}(x)$ are nondegenerate.

Assume first that $\sigma$ is real-analytic and $\sigma(s)\neq 0$ for all $s\neq0$. For fixed $\mu$, the map $(x,\beta,\upvartheta)\mapsto g_{\beta,\upvartheta}(x)$ is then real-analytic on $\mathbb S^{d-1}\times\R_{>0}\times(\R^{d+1})^d$ where \(\upvartheta=(a_j,\omega_j)_{j\in\llbracket 1,d\rrbracket}\) are the perceptron parameters. We restrict to the open dense parameter set where $a_j$ are linearly independent and $\omega_j\neq0$ for all $j$. On this subset, for every $x\in\mathbb S^{d-1}$ there exists $j\in\llbracket1,d\rrbracket$ such that $a_j\cdot x\neq0$, thus $\omega_j\sigma\left(a_j\cdot x\right)\neq 0$ thanks to the assumption on $\sigma$.

The differential with respect to $a_j$ evaluated at $h\in\R^d$ is given by
\[
D_{a_j} g_{\beta,\upvartheta}(x)[h]= \omega_j\left(\sigma'\left(a_j\cdot x\right)\left(x\cdot h\right)\proj_x a_j+\sigma\left(a_j\cdot x\right)\proj_x h\right).
\]
By restricting to $h\in\mathsf T_x\mathbb S^{d-1}$ so that $\proj_x h = h$ and $x\cdot h = 0$, we deduce that 
\[
D_{a_j} g_{\beta,\upvartheta}(x)[h]= \omega_j\sigma\left(a_j \cdot x\right)h.
\]
Since $\omega_j\sigma\left(a_j \cdot x\right) \neq 0$, the linear map $D_{a_j} g_{\beta,\upvartheta}(x)$ is onto $\mathsf T_x\mathbb S^{d-1}$. In particular, \eqref{eq: cond.transversality} holds. By the parametric transversality theorem \cite[Thm.~6.35]{Lee2012}, the set of parameters for which all zeros of $g_{\beta,\upvartheta}$ are nondegenerate is dense. Moreover, since $\mathbb S^{d-1}$ is compact, this set of parameters is also open. Finally, since nondegenerate zeros are isolated by the inverse function theorem, the compactness of $\mathbb{S}^{d-1}$ guarantees that $g_{\beta,\upvartheta}$ has only finitely many zeros.

Now assume $\sigma(s)=s_+$. Then $x\mapsto g_{\beta,\upvartheta}(x)$ is smooth on each component $I$ of $\mathbb S^{d-1}\setminus\mathscr Z$. As before, assume $a_j$ linearly independent and $\omega_j\neq0$. Fix a component $I$ with $J_I\neq\varnothing$ and let $x\in I$ with $g_{\beta,\upvartheta}(x)=0$. Choose any $j\in J_I$. Then for any $h\in\mathsf T_x\mathbb S^{d-1}$, the differential with respect to $a_j$ is given by
\[
D_{a_j} g_{\beta,\upvartheta}(x)[h] = \omega_j\sigma\left(a_j \cdot x\right)h = \omega_j \left(a_j \cdot x\right) h.
\]
Since $a_j \cdot x > 0$ on $I$ and $\omega_j\neq 0$, this map is a scaled isometry, hence surjective onto $\mathsf T_x\mathbb S^{d-1}$. By the parametric transversality theorem, for a dense set of parameters, the zeros in $I$ are nondegenerate and thus form a discrete set.

To handle the boundaries, we consider the submanifolds $F$ defined by the intersection of some hyperplanes $\{x: a_k\cdot x=0\}$ inside $\mathcal{A} \coloneqq \bigcup_j \{a_j\cdot x > 0\}$. Since the restriction of $g_{\beta,\upvartheta}$ to $F$ takes values in the larger space $\mathsf T\mathbb S^{d-1}$, we consider the projected field $g^F(x) \coloneqq \proj_{\mathsf T_x F} g_{\beta,\upvartheta}(x)$ for $x\in F$. Note that if $g_{\beta,\upvartheta}(x)=0$, then necessarily $g^F(x)=0$.
We apply transversality to the map $g^F: F \to \mathsf T F$. Let $x \in F$ be a zero of $g^F$. Since $x \in \mathcal{A}$, there exists $j$ such that $a_j \cdot x > 0$. For any $h \in \mathsf T_x F$, the differential with respect to $a_j$ acts as
\[
D_{a_j} g^F(x)[h] = \proj_{\mathsf T_x F}\left(\omega_j \left(a_j \cdot x\right) h\right) = \omega_j \left(a_j \cdot x\right) h.
\]
This map surjects onto $\mathsf T_x F$. Thus, for a dense set of parameters, the zeros of $g^F$ are isolated in $F$. By inclusion, the zeros of $g_{\beta,\upvartheta}$ in $F$ are also isolated.

Since $\mathscr Z$ consists of finitely many hyperplanes, $\mathcal{A}$ is the finite union of such $I$ and $F$. Consequently, the set of zeros of $g_{\beta,\upvartheta}$ in $\mathcal{A}$ is a finite union of discrete sets. If $\mu$ is stationary, then $\supp\mu\cap\mathcal{A}$ is countable.
\end{proof}

\begin{remark}\label{rem:genericity-vacuity}
\Cref{thm: any.d}\textbf{(ii)} does not strictly preclude the existence of non-atomic stationary measures; rather, it implies that such a measure $\mu$ can only be stationary if the parameters lie in the exceptional (non-generic) set associated with $\mu$.
A concrete example is the activation $\sigma(s)=s$, which yields a quadratic potential $\mathsf v_{\upvartheta}(x)=\sum_j \omega_j (a_j\cdot x)^2$.
If a stationary measure $\mu$ has support with non-empty interior, the stationarity condition forces the attention field to be locally (and thus globally) quadratic.
By \Cref{lem: quadpol}\textbf{(i)}, this implies $\mu$ admits a smooth density (a spherical polynomial of degree $\le 2$). Since such a measure is not finitely supported, the parameters allowing it must necessarily lie in the non-generic set specific to $\mu$.
\end{remark}

\subsection{Proofs of \Cref{thm: bound} and \Cref{cor: bound}}

\begin{proof}[Proof of \Cref{thm: bound}]
We prove a more general estimate for an arbitrary parameter $\lambda \in (0,1)$, from which the theorem follows by setting $\lambda=1/2$.

Restricting the energy functional $\mathsf{E}_{\beta,\upvartheta}$ to atomic measures of the form $\nu = \sum_{i\in\llbracket 1,N\rrbracket} m_i \delta_{\phi_i}$ (where $m_i$ are fixed as in \eqref{eq: atomic.thm.bound}), it becomes a function of the angular coordinates $\boldsymbol{\phi}=(\phi_1, \dots, \phi_N) \in (\mathbb{R}/2\pi\mathbb{Z})^N$:
\begin{equation*}
\mathsf{E}_{\beta,\upvartheta}\left(\boldsymbol{\phi}\right) 
= \frac{1}{2\beta} \sum_{i,j\in\llbracket 1,N\rrbracket} m_i m_j \mathsf{K}_\beta\left(\phi_i - \phi_j\right) 
+ \frac{1}{2}\sum_{i\in\llbracket 1,N\rrbracket} m_i \left(\mathsf{v}_\upvartheta \circ x\right)\left(\phi_i\right).
\end{equation*}
Since $\mu$ is a SOPD Wasserstein critical point, it satisfies \eqref{eq: steady.state} and \eqref{eq:2order}. The Wasserstein Hessian at $\mu$ coincides with the Euclidean Hessian of $\mathsf{E}_{\beta,\upvartheta}$ in the variables $\boldsymbol{\phi}$, evaluated at the configuration $\boldsymbol{\theta}$. In particular, for every $\xi\in\mathbb{R}^N$,
\begin{equation}\label{eq: Hess.pos.def}
 \nabla^2 \mathsf{E}_{\beta,\upvartheta}\left(\boldsymbol{\theta}\right)\left[\xi,\xi\right]
= \mathrm{Hess}_\mu \mathsf{E}_{\beta,\upvartheta}\left(\xi,\xi\right)\ \ge\ 0.
\end{equation}
Thus, the matrix $\nabla^2 \mathsf{E}_{\beta,\upvartheta}\left(\boldsymbol{\theta}\right)$, with entries defined by 
\[
\frac{\partial^2 \mathsf{E}_{\beta,\upvartheta}}{\partial \phi_i \partial \phi_j}\left(\boldsymbol{\theta}\right)
=\begin{cases}
    -\frac{1}{\beta} m_i m_j\,\mathsf{K}_\beta''\left(\theta_i-\theta_j\right),
\qquad&\text{for }i\neq j,\\[2mm]
\frac{1}{\beta} m_i\sum_{k\neq i} m_k \mathsf{K}_\beta''\left(\theta_i-\theta_k\right)
+\frac{1}{2}\,m_i\,\partial_\theta^2\left(\mathsf{v}_\upvartheta\circ x\right)\left(\theta_i\right),\qquad&\text{for }i= j,
\end{cases}
\]
is positive semi-definite.

Fix $\lambda\in(0,1)$, and consider a cluster of $n\in\llbracket 2,N\rrbracket$ atoms satisfying the distance condition
\begin{equation}\label{eq:dist.cond}
\min_{k\in\mathbb Z}\left|\theta_i-\theta_j+2\pi k\right| \le \frac{\lambda}{\sqrt{\beta}} \qquad \text{for all } i,j \in \llbracket1,n\rrbracket.
\end{equation}
We evaluate the quadratic form $\nabla^2 \mathsf{E}_{\beta,\upvartheta}\left(\boldsymbol{\theta}\right)\left[\xi,\xi\right]$ along the vector  $\xi\in\R^N$ defined by 
\[
\xi_i= \frac{1}{m_i}\quad \left(i<n\right),\qquad\xi_{n}=-\frac{n-1}{m_{n}},\qquad\xi_i = 0\quad \left(i > n\right).
\] 
Note that $\sum_{i\in\llbracket 1,n\rrbracket} m_i\xi_i=0$. We split
\begin{equation}\label{eq:Hess-split}
\nabla^2 \mathsf{E}_{\beta,\upvartheta}\left(\boldsymbol{\theta}\right)\left[\xi,\xi\right]
= \mathcal T_{\mathrm{in}} + \mathcal T_{\mathrm{out}} + \mathcal T_{\upvartheta},
\end{equation}
where
\begin{align}
\mathcal T_{\mathrm{in}}
&\coloneqq \frac{1}{\beta}\sum_{1\le i<j\le n} 
m_i m_j\,\mathsf K_\beta''\left(\theta_i-\theta_j\right)\left(\xi_i-\xi_j\right)^2,
\label{eq:T-in}\\[2mm]
\mathcal T_{\mathrm{out}}
&\coloneqq \frac{1}{\beta}\sum_{i\in\llbracket 1,n\rrbracket}\sum_{k>n} 
m_i m_k\,\mathsf K_\beta''\left(\theta_i-\theta_k\right)\xi_i^2,
\label{eq:T-out}\\[2mm]
\mathcal T_{\upvartheta}
&\coloneqq \frac{1}{2}\sum_{i\in\llbracket 1,n\rrbracket} 
m_i\,\partial_\theta^2\left(\mathsf v_\upvartheta\circ x\right)\left(\theta_i\right)\xi_i^2.
\label{eq:T-field}
\end{align}
Using the above choice of $\xi_i$ and the identity
\[
\sum_{1\le i<j\le n} m_i m_j\left(\xi_i-\xi_j\right)^2
= \sum_{i\in\llbracket 1,n\rrbracket} m_i\sum_{j\in\llbracket 1,n\rrbracket} m_j\xi_j^2
-\left(\sum_{i\in\llbracket 1,n\rrbracket} m_i\xi_i\right)^2,
\]
we can estimate
\begin{align*}
\mathcal T_{\mathrm{in}}
&\le \frac{1}{\beta}\max_{1\le i<j\le n}\mathsf K_\beta''\left(\theta_i-\theta_j\right)\sum_{i\in\llbracket 1,n\rrbracket} m_i\sum_{j\in\llbracket 1,n\rrbracket} m_j\xi_j^2.
\end{align*}
Using \eqref{eq:dist.cond} and the fact that $\lambda/\sqrt{\beta} \sim \lambda\theta_c(\beta)$ for large $\beta$, we can directly apply \eqref{eq:kernel-hess-bound-lambda} from \Cref{lem: concavity}. For all $\beta$ sufficiently large, this yields
\[
\max_{1\le i<j\le n}\mathsf K_\beta''\left(\theta_i-\theta_j\right)
\le \mathsf K_\beta''\left(\frac{\lambda}{\sqrt{\beta}}\right)
\le -\frac{1}{2}\left(1-\lambda^2\right)\beta\,e^{\beta-\lambda^2/2}.
\]
Substituting this bound into the definition of $\mathcal T_{\mathrm{in}}$, we obtain
\begin{equation}\label{eq:T-in-bound}
\mathcal T_{\mathrm{in}}\le -\frac{e^{\beta-\frac{\lambda^2}{2}}\left(1-\lambda^2\right)}{2}
\sum_{i\in\llbracket 1,n\rrbracket} m_i\sum_{j\in\llbracket 1,n\rrbracket} m_j\xi_j^2.
\end{equation}
Second, from \eqref{eq:kernel-hess-bound-max} we obtain, for large enough $\beta$, 
\begin{align}\label{eq:T-out-bound}
\mathcal T_{\mathrm{out}}
\le  2 e^{\beta-\frac{3}{2}}\sum_{k>n}m_k\sum_{i\in\llbracket 1,n\rrbracket} m_i\xi_i^2.
\end{align}
Finally, since $\sigma$ is globally Lipschitz and $\|x\|=\|x'\|=\|x''\|=1$, we can compute for a.e. $\theta$:
\begin{align*}
    \frac12\partial_\theta^2(\mathsf v_\upvartheta\circ x)(\theta)
&\!= \!\!\sum_{j\in\llbracket 1,2\rrbracket} \omega_j\left(\sigma'\left(a_j\cdot x(\theta)\right)(a_j\cdot x'(\theta))^2
+ \sigma\left(a_j\cdot x(\theta)\right)a_j\cdot x''(\theta)\right)\nonumber\\
&\!\le \sum_{j\in\llbracket1,2\rrbracket} |\omega_j|
\left(\|\sigma\|_{C^{0,1}}\|a_j\|^2 + (|\sigma(0)|+\|\sigma\|_{C^{0,1}}\|a_j\|)\,\|a_j\|\right)
\! \eqqcolon C_\upvartheta.
\end{align*}
Therefore
\begin{equation}\label{eq:T-field-bound}
\mathcal T_{\upvartheta}
\le C_\upvartheta\sum_{i\in\llbracket 1,n\rrbracket} m_i\xi_i^2.
\end{equation}
Combining \eqref{eq:Hess-split}, \eqref{eq:T-in-bound}, \eqref{eq:T-out-bound},  and \eqref{eq:T-field-bound}, we obtain an upper bound for the Hessian:
\begin{align*}
\nabla^2 \mathsf{E}_{\beta,\upvartheta}(\boldsymbol{\theta})[\xi,\xi]
\le 
\Biggl[
\tfrac{e^{\beta-\frac{\lambda^2}{2}}(\lambda^2-1)}{2}\sum_{i\in\llbracket 1,n\rrbracket} m_i
+ 2 e^{\beta-\frac{3}{2}}\sum_{k>n}m_k
+ C_\upvartheta
\Biggr] \sum_{j\in\llbracket 1,n\rrbracket} m_j\xi_j^2.
\end{align*}
For the second order condition \eqref{eq: Hess.pos.def} to hold, the right-hand side cannot be negative. Therefore:
\[
\frac{e^{\beta-\frac{\lambda^2}{2}}\left(1-\lambda^2\right)}{2}
\sum_{i\in\llbracket 1,n\rrbracket} m_i
\le 2 e^{\beta-\frac{3}{2}}\left(1-\sum_{i\in\llbracket 1,n\rrbracket} m_i\right)
+ C_\upvartheta.
\]
Solving for $\sum_{i\in\llbracket1, n\rrbracket} m_i$, we get
\[
\sum_{i\in\llbracket 1,n\rrbracket} m_i\le \frac{2e^{\beta-\frac{3}{2}}+C_\upvartheta}{2e^{\beta-\frac{3}{2}}+\frac{e^{\beta-\frac{\lambda^2}{2}}\left(1-\lambda^2\right)}{2}}
=\frac{4}{4+e^{\frac{3}{2}-\frac{\lambda^2}{2}}\left(1-\lambda^2\right)}+O\left(e^{-\beta}\right).
\]
Setting $\lambda = 1/2$, this bound yields \eqref{eq: conclusion}.

For the second part, assume that $\mu$ is itself a single cluster satisfying the pairwise distance condition (so $n=N$ and $\sum_{k>n}m_k=0$).
Then the upper bound for the Hessian gives:
\begin{equation*}
0 \le \nabla^2 \mathsf{E}_{\beta,\upvartheta}\left(\boldsymbol{\theta}\right)\left[\xi,\xi\right]
\le \left[-\frac{e^{\beta-\frac{\lambda^2}{2}}\left(1-\lambda^2\right)}{2} + C_\upvartheta\right] \sum_{j\in\llbracket 1,n\rrbracket} m_j\xi_j^2,
\end{equation*}
hence necessarily
\begin{equation*}
C_\upvartheta \ \ge\ \frac{e^{\beta-\frac{\lambda^2}{2}}\left(1-\lambda^2\right)}{2}.
\end{equation*}
With $\lambda=1/2$ this reads 
\begin{equation*}
C_\upvartheta \ge \frac{3}{8}e^{\beta-\frac{1}{8}}.
\end{equation*}
Under the standing assumptions $\|\sigma\|_{\mathscr{C}^{0,1}}=1$ and $\sigma(0)=0$, we have $C_\upvartheta = 2\sum_{j\in\llbracket 1,2\rrbracket} |\omega_j|\cdot\|a_j\|^2$. Therefore, if
\begin{equation*}
|\omega_1|\cdot\|a_1\|^2+|\omega_2|\cdot\|a_2\|^2 < \frac{3}{16}e^{-\frac{1}{8}} < 0.16547,
\end{equation*}
then $C_\upvartheta < \frac{3}{8}e^{-\frac{1}{8}} \le \frac{3}{8}e^{\beta-\frac{1}{8}}$ for all $\beta>0$. This excludes $\mathrm{supp}\,\mu$ being a single such cluster.

\end{proof}

\begin{remark}\label{rem: thm.bound.multid}
    The restriction of \Cref{thm: bound} to $d=2$ is technical: our proof uses the one-dimensional angular parametrization of $\S^1$ and the scalar concavity estimate in \Cref{lem: concavity}; extending it to $d\ge3$ would require a higher-dimensional Hessian estimate on small geodesic caps.
\end{remark}

\begin{proof}[Proof of \Cref{cor: bound}]
Let $\Lambda_\beta$ denote the exact mass bound derived in \Cref{thm: bound} for a cluster of diameter $\frac{1}{2\sqrt{\beta}}$, i.e.,
\[
\Lambda_\beta \coloneqq 0.5742+ O(e^{-\beta}).
\]
We define a covering of the support of $\mu$. For each of the arcs $I_j$ of length $|I_j| \le L$, we partition it into $K_j$ disjoint sub-intervals $J_{j,1}, \dots, J_{j, K_j}$, each of length at most $1/(2\sqrt{\beta})$. The minimal number of such intervals required is
\[
K_j \coloneqq \max\left\{1,\left\lceil 2|I_j|\sqrt{\beta}\right\rceil\right\}
\le 1+2|I_j|\sqrt{\beta}\le 1+2L\sqrt{\beta}.
\]
Consider any such sub-interval $J = J_{j,k}$. Let $n_\varepsilon(J)\coloneqq \#\left\{i: \theta_i\in J,\ m_i\ge \varepsilon\right\}$.
 We distinguish cases based on the number of atoms in $J$:
\begin{itemize}
    \item If $J$ contains at least two atoms, they form a cluster satisfying the pairwise distance condition of \Cref{thm: bound} (since the diameter of $J$ is $\leq 1/(2\sqrt{\beta})$). Thus, $\mu(J) \le \Lambda_\beta$.
    \item If $J$ contains 0 or 1 atom, then obviously $n_\varepsilon(J) \le 1$.
\end{itemize}
Combining these, if $\varepsilon \le \Lambda_\beta$, we have $1 \le \Lambda_\beta/\varepsilon$. Therefore, in the first case $n_\varepsilon(J) \cdot \varepsilon \le \mu(J) \le \Lambda_\beta$, and in the second case $n_\varepsilon(J) \le 1 \le \Lambda_\beta/\varepsilon$. Hence, for any sub-interval, $n_\varepsilon(J) \le \Lambda_\beta/\varepsilon$.

Since every atom with mass $\ge \varepsilon$ belongs to at least one sub-interval $J_{j,k}$, we have
\[
N_\varepsilon \le \sum_{j\in\llbracket 1,M\rrbracket} \sum_{k\in\llbracket 1,K_j\rrbracket}n_\varepsilon(J_{j,k}) \le \sum_{j\in\llbracket 1,M\rrbracket} K_j \frac{\Lambda_\beta}{\varepsilon}.
\]
Substituting the bound for $K_j$, we obtain:
\[
N_\varepsilon \le M \left(1 + 2L\sqrt{\beta}\right) \frac{\Lambda_\beta}{\varepsilon}.
\]
Finally, if $\varepsilon > \Lambda_\beta$, we use the trivial bound $N_\varepsilon \le 1/\varepsilon$. For $\beta$ large enough such that $M\left(1+2L\sqrt{\beta}\right)\Lambda_\beta \ge 1$, the bound in the statement is larger than $1/\varepsilon$ and thus remains valid.
\end{proof}

\subsection{Proof of \Cref{prop: min.max}}

\begin{proof}[Proof of \Cref{prop: min.max}]
Since $e^{\beta x\cdot y}\le e^{\beta}$ on $\S^{d-1}$, we have for any $\mu$,
\[
\mathsf E_{\beta,\upvartheta}[\mu]
=\frac{1}{2\beta}\iint e^{\beta x\cdot y}\diff \mu(x)\diff \mu(y)+\frac12\int \mathsf v_\upvartheta\diff\mu
\le \frac{e^{\beta}}{2\beta}+\frac12\max_{x\in\S^{d-1}}\mathsf v_\upvartheta(x),
\]
and equality holds iff $\mu=\delta_x$ with $x\in\argmax_{y\in\S^{d-1}}\sum_{j\in\llbracket1,d\rrbracket}\omega_j\,\upvarphi\left(a_j\cdot y\right).$
This proves the characterization of global maximizers.

Existence of the minimizer follows from compactness of $\mathcal P(\S^{d-1})$ and continuity of the integrands. To see uniqueness, note that the kernel $e^{\beta x\cdot y}$ is conditionally strictly positive definite: for any finite signed measure $\nu$ with $\nu(\S^{d-1})=0$,
\[
\iint e^{\beta x\cdot y}\diff\nu(x)\diff\nu(y)=\sum_{k\ge 1}\lambda_k(\beta)\sum_{\ell\in\llbracket1,N_k\rrbracket}\left( \int Y_{k\ell}\diff\nu\right)^{ 2}>0 \quad\text{if }\nu\neq 0.
\]
Thus $\mathsf E_\beta$ is strictly convex on $\mathcal P(\S^{d-1})$, see \cite[Prop.~2.1 \& Thm.~4.1]{BILYK2022126220}. Adding the linear term $\frac12\int \mathsf v_\upvartheta\diff\mu$ preserves strict convexity; thus the minimizer is unique.

%Let $\mu_\ast$ be the minimizer. Then it satisfies the first-order condition \eqref{eq: steady.state}. Regarding the second-order condition, let $\xi \in \mathsf{T}_{\mu_\ast} \mathcal{P}(\mathbb{S}^{d-1})$ and let $(\mu_t)_{t}$ be the geodesic with velocity $\xi$. The function $J(t) \coloneqq \mathsf{E}_{\beta,\upvartheta}[\mu_t]$ has a global minimum at $t=0$. Therefore, its second derivative at the origin must be non-negative (interpreted in the distributional sense if $\sigma$ is only differentiable a.e.):
%\[\mathrm{Hess}_{\mu_\ast} \mathsf E_{\beta,\upvartheta}(\xi,\xi) = J''(0) \ge 0.\]
%By definition, this means $\mu_\ast$ is a SOPD Wasserstein critical point.

Finally, $\mathsf E_\beta[R_\#\mu]=\mathsf E_\beta[\mu]$ for all $R\in O(d)$, and if moreover $Ra_j=a_j$ for all $j$, then $$\mathsf v_\upvartheta(Rx)=\sum_{j\in\llbracket1,d\rrbracket}\omega_j\,\upvarphi\left(a_j \cdot  Rx\right)
=\sum_{j\in\llbracket1,d\rrbracket}\omega_j\,\upvarphi\left(Ra_j \cdot  x\right)
=\mathsf v_\upvartheta(x).$$ Hence $\mathsf E_{\beta,\upvartheta}[R_\#\mu]=\mathsf E_{\beta,\upvartheta}[\mu]$ for all such $R$, and by uniqueness $R_\#\mu_\ast=\mu_\ast$.
\end{proof}

\begin{remark}\label{rem:strictSOPD-perceptron}
It is clear that the minimizer $\mu_\ast$ is SOPD. Under additional curvature from the perceptron potential, one can ensure strictness. For \(d=2\), \eqref{eq:Hess-bilinear} yields for all $\xi \in \mathsf T_\mu\mathcal P(\mathbb S^1)$:
\begin{align*}
\mathrm{Hess}_\mu \mathsf E_{\beta,\upvartheta}\left(\xi,\xi\right)
\ =&\int \left[\frac{1}{\beta}\left(\mathsf K_\beta''*\mu\right)\left(\theta\right)+\frac{1}{2}\partial_\theta^2\left(\mathsf v_{\upvartheta}\circ x\right)\left(\theta\right)\right]\xi\left(\theta\right)^2\diff\mu(\theta) \\
&-\frac{1}{\beta}\iint \mathsf K_\beta''\left(\theta-\phi\right)\xi\left(\theta\right)\xi\left(\phi\right)\diff\mu(\theta)\diff\mu(\phi) \\
\ge& \left(\frac{1}{2}\inf_{\theta\in\supp\mu}\partial_\theta^2\left(\mathsf v_\upvartheta\circ x\right)\left(\theta\right)-\frac{\|\mathsf K_\beta''\|_{\infty}}{\beta}\right) \|\xi\|_{L^2(\mu)}^2,
\end{align*}
where we used $$\iint \left(\xi(\theta)-\xi(\phi)\right)^2\diff\mu(\theta)\diff\mu(\phi) = 2\|\xi\|_{L^2(\mu)}^2 - 2\left(\int \xi\diff\mu\right)^2 \le 2\|\xi\|_{L^2(\mu)}^2.$$
Consequently, a sufficient condition for \(\mu\) to be strict SOPD is: there exists \(\kappa>0\) such that
\begin{equation}\label{eq:strictSOPD-sufficient}
\partial_\theta^2\left(\mathsf v_\upvartheta\circ x\right)\left(\theta\right) \ge \frac{2\|\mathsf K_\beta''\|_{\infty}}{\beta}+2\kappa
\qquad \left(\theta\in\supp\mu\right).
\end{equation}
If \(\mathsf K_\beta(\phi)=e^{\beta\cos\phi}\) then $\mathsf K_\beta''(\phi)=\left(\beta^2\sin^2\phi-\beta\cos\phi\right)e^{\beta\cos\phi},$ hence $\|\mathsf K_\beta''\|_{\infty}=\beta e^\beta$. Thus, for $\mathsf v_\upvartheta$ as in \eqref{eq: primitive.field}, condition \eqref{eq:strictSOPD-sufficient} simplifies to
\[
\sum_{j\in\llbracket 1,2\rrbracket} \omega_j\left(\sigma'\left(a_j\cdot x(\theta)\right)\left(a_j\cdot x'(\theta)\right)^2 + \sigma\left(a_j\cdot x(\theta)\right) a_j\cdot x''(\theta)\right)\ge e^\beta+\kappa,
\]
for $\theta\in\supp\mu$.

An analogous sufficient condition holds for $d\ge3$, replacing $\partial_\theta^2\left(\mathsf v_\upvartheta\circ x\right)$ with the minimum eigenvalue of $\nabla^2\mathsf v_\upvartheta$, and $\|\mathsf K_\beta''\|_{\infty}$ with the constant $C_{\beta,d}<\infty$ such that $\mathrm{Hess}_\mu \mathsf E_{\beta}(\xi,\xi)\ge -C_{\beta,d}\|\xi\|_{L^2(\mu)}^2$ for all $\mu$ and $\xi\in\mathsf T_\mu\mathcal P(\S^{d-1})$.
\end{remark}

\section{Normalized self-attention}\label{ss:normalized}

As noted in \Cref{s:setup}, the normalized attention field can be viewed as a weighted gradient. The sparsity results obtained for the unnormalized case extend to this setting, provided we impose a mild non-degeneracy condition on the perceptron weights.

\begin{proposition}\label{prop: unified.log}
Let $d \ge 2$, $\beta > 0$, fix $\sigma(s)=s_+$, and let $\mu\in\mathcal P(\S^{d-1})$ be a stationary measure for \eqref{eq:full-WGF-main}, i.e., satisfying \eqref{eq: fulltrans.stat}. Assume that $(\omega_j,a_j)_j$ satisfy the following non-degeneracy condition: for every index subset $J \subseteq \llbracket 1,d \rrbracket$, the matrix
\begin{equation}\label{eq: nondeg.normalized}
M\coloneqq\sum_{j\in J}\omega_j\,a_j a_j^\top - \sum_{j\notin J}\omega_j\,a_j a_j^\top
\end{equation}
is not a scalar multiple of the identity. Then $\sigma_d(\supp\mu)=0$; in particular, $\mu$ is singular with respect to $\sigma_d$. Moreover, if $d=2$, then $\mu$ is purely atomic with finitely many atoms.
\end{proposition}

\begin{proof}[Proof of \Cref{prop: unified.log}]
Because $\frac{\delta\mathsf{E}_\beta}{\delta\mu}[\mu]$ is strictly positive and real-analytic in $\S^{d-1}$, its logarithm is well-defined and real-analytic. On the other hand, on each connected component $I$ of the set $\S^{d-1}\setminus \mathscr Z$ we have
\begin{equation*}
\mathsf{v}_\upvartheta(x)=\sum_{j\in\llbracket 1,d\rrbracket}\omega_j\left( a_j\cdot x\right)_+^{2}=\sum_{j\in J_I}\omega_j\left( a_j\cdot x\right)^{2}\quad \text{for } x\in I,
\end{equation*}
where $\mathscr Z$ and $J_I$ are defined in \eqref{eq: defZd} and \eqref{eq: defJI}. Therefore
\[
H_{\log}\coloneqq \log \left(\frac{\delta\mathsf{E}_\beta}{\delta\mu}[\mu]\right)+\frac{1}{2}\mathsf{v}_\upvartheta,\qquad
g_{\log}\coloneqq\nabla H_{\log}
\]
are real-analytic on $I$. By \eqref{eq: fulltrans.stat}, we get $\supp\mu\subset\{g_{\log}=0\}$.

\medskip 

\noindent (a) Suppose, for contradiction, that $\sigma_d(\supp\mu\cap I)>0$ for some component $I$. As $g_{\log}$ is real-analytic on $I$ and vanishes on a set of positive measure, we have $g_{\log}\equiv0$ on $I$, hence $H_{\log}$ is constant on $I$: there exists $C_I\in\R$ such that
\[
\log \left(\frac{\delta\mathsf{E}_\beta}{\delta\mu}[\mu](x)\right)
= C_I-\frac{1}{2}\sum_{j\in J_I}\omega_j\left(a_j\cdot x\right)^2,\qquad x\in I.
\]
Define the global real-analytic vector field
\[
\widetilde g_{\log}(x)\coloneqq\nabla \left(\log \left(\frac{\delta\mathsf{E}_\beta}{\delta\mu}[\mu](x)\right)+\frac{1}{2}\sum_{j\in J_I}\omega_j\left(a_j\cdot x\right)^2\right).
\]
Since $\widetilde g_{\log}=g_{\log}\equiv0$ on $I$, the identity theorem for real-analytic functions on $\S^{d-1}$ implies $\widetilde g_{\log}\equiv0$ on $\S^{d-1}$. Therefore
\begin{equation}\label{eq:log-identity-global}
\frac{\delta\mathsf{E}_\beta}{\delta\mu}[\mu](x)
=\exp \left(C_I-\frac{1}{2}\sum_{j\in J_I}\omega_j\left(a_j\cdot x\right)^2\right),\qquad x\in\S^{d-1}.
\end{equation}
The right-hand side of \eqref{eq:log-identity-global} is even, so $\mu(A)=\mu(-A)$ for every Borel set $A\subset\S^{d-1}$, by \Cref{lem: quadpol}\textbf{(ii)}. Thus $\supp\mu$ meets the antipodal component $I'$ of $I$, whose active set is $J_{I'}=\llbracket 1,d \rrbracket\setminus J_I$. Repeating the argument on $I'$ yields
\[
\frac{\delta\mathsf{E}_\beta}{\delta\mu}[\mu](x)
=\exp \left(C_{I'}-\frac{1}{2}\sum_{j\in J_{I'}}\omega_j\left(a_j\cdot x\right)^2\right),\qquad x\in\S^{d-1}.
\]
Equating the two global expressions gives, for all $x\in\S^{d-1}$,
\begin{equation}\label{eq:ratio-const-correct}
\sum_{j\in J_I}\omega_j\left(a_j\cdot x\right)^2-\sum_{j\notin J_I}\omega_j\left(a_j\cdot x\right)^2
\equiv 2\left(C_I-C_{I'}\right).
\end{equation}
This means that $x^\top M x$ is constant on $\S^{d-1}$ for $M$ defined as in \eqref{eq: nondeg.normalized}, which forces $M$ to be a scalar multiple of the identity, contradicting our assumption. Thus $\sigma_d(\supp\mu\cap I)=0$ for every $I$. Since $\sigma_d(\partial I)=0$, we conclude $\sigma_d(\supp\mu)=0$.

\smallskip
\noindent (b) Let $d=2$ and suppose $\supp\mu$ is infinite. Then $\supp\mu\cap I$ has an accumulation point in $\S^1$ for some $I$. The same identity theorem applied to $\widetilde g_{\log}$ yields \eqref{eq:log-identity-global} and \eqref{eq:ratio-const-correct}, leading to the same contradiction. By compactness of $\S^1$, the support must be finite.
\end{proof}

\begin{remark}\label{rem:unifiedlog-B}
The argument above also applies to normalized attention with a symmetric nonsingular matrix $B\in\R^{d\times d}$ by replacing $\mathsf E_\beta$ with $$\mathsf E_B[\mu]\coloneqq \frac{1}{2}\iint e^{x^\top B y}\diff\mu(x)\diff\mu(y).$$ Since $\frac{\delta\mathsf E_B}{\delta\mu}[\mu]$ is strictly positive and real-analytic, the inclusion $\supp\mu\subset\{\nabla H_{\log}=0\}$ still leads to the global identity \eqref{eq:log-identity-global}. By \Cref{rem: general-attention}, this implies that $\mu(A)=\mu(-A)$ for every Borel set $A\subset\S^{d-1}$, which yields the same contradiction via \eqref{eq:ratio-const-correct}. A similar argument extends \Cref{thm: circle,thm: any.d} to $\mathsf E_B$ by removing the exponential from the analogous identities.
\end{remark}

\bibliographystyle{plain}
\bibliography{biblio}

	\bigskip
    \bigskip

\begin{minipage}[!h]{.5\textwidth}
  {\footnotesize{\bf Antonio \'Alvarez-L\'opez}\par
  Departamento de Matem\'aticas\par
  Universidad Aut\'onoma de Madrid\par
  C/ Francisco Tom\'as y Valiente 7, \par
  28049 Madrid, Spain\par
 \par
  e-mail: \href{mailto:blank}{\textcolor{blue}{\scriptsize antonio.alvarezl@uam.es}}
  }
\end{minipage}
\begin{minipage}[!h]{.5\textwidth}
{\footnotesize{\bf Borjan Geshkovski}\par
  Laboratoire Jacques-Louis Lions\par
  Inria \& Sorbonne Université\par
  4 Place Jussieu\par
  75005 Paris, France\par
 \par
  e-mail: \href{mailto:borjan@mit.edu}{\textcolor{blue}{\scriptsize borjan.geshkovski@inria.fr}}
  }
\end{minipage}%

\begin{center}
\begin{minipage}[!h]{.5\textwidth}
  {\footnotesize{\bf Domènec Ruiz-Balet}\par
  Departament de Matemàtiques i Informàtica\par
  Universitat de Barcelona\par
  Gran Via de les Corts Catalanes 585\par
  08007 Barcelona, Spain\par
 \par
  e-mail: \href{mailto:blank}{\textcolor{blue}{\scriptsize domenec.ruizibalet@ub.edu}}
  }
\end{minipage}%
\end{center}

\end{document}